%% file: main.tex
\documentclass{siamart250211}

\usepackage{graphicx}
\graphicspath{{images/}}
\usepackage{amssymb,amsfonts,bm}
\usepackage{subfigure}
\usepackage[algo2e]{algorithm2e}
\usepackage{multirow}
\usepackage{booktabs}

\newtheorem{remark}[theorem]{Remark}

\title{A Globally Convergent Algorithm for Total
Scaled-Gradient Variation via Cone-Constrained
Bilinear Decomposition}
\author{Haibin Su\thanks{Yau Mathematical Sciences Center, Tsinghua University, Beijing 100084, China (\email{hbsu@mail.tsinghua.edu.cn})}
\and Chunlin Wu\thanks{School of Mathematical Sciences, Nankai University, Tianjin, 300071, China  (\email{wucl@nankai.edu.cn})}
\and Huibin Chang\thanks{School of Mathematical Sciences and Institute of Mathematics and Interdisciplinary Sciences, Tianjin Normal University, Tianjin, 300387, China (\email{changhuibin@tjnu.edu.cn})}
\and Zhifang Liu\thanks{Corresponding author. School of Mathematical Sciences and Institute of Mathematics and Interdisciplinary Sciences, Tianjin Normal University, Tianjin, 300387, China (\email{matlzhf@tjnu.edu.cn})}
}

\begin{document}

\maketitle

\begin{abstract}
The total scaled-gradient variation (TSGV) regularizer, derived from sparse modeling of piecewise-linear structures, has been shown to preserve edges and corners in image restoration. However, its highly nonconvex and nonlinear nature poses severe computational challenges, as existing methods often suffer from parameter sensitivity or lack convergence guarantees. To overcome this, we propose a tailored bilinear decomposition that decouples the nonlinear weighted gradient in the TSGV regularizer. This approach yields an equivalent optimization problem governed by cone or sphere constraints, depending on the chosen scaling function. In particular, the cone constraint plays a central role in characterizing edge- and corner-preserving behavior. We solve this reformulation using the alternating minimization method (AMM) equipped with a majorization--minimization strategy, ensuring a monotonic decrease in energy without step-size tuning. Furthermore, we provide a geometric interpretation of the edge-preserving properties of these constraints by analyzing their asymptotic behavior near image singularities. We establish the global convergence of the proposed method to a critical point within the Kurdyka--Łojasiewicz framework. Extensive numerical experiments on Gaussian denoising and non-line-of-sight (NLOS) imaging show that the proposed method achieves PSNR and SSIM competitive with or superior to representative variational methods, especially at high noise levels, and improves the structural reconstruction under dense and sparse scanning.

\end{abstract}

\begin{keywords}
bilinear
decomposition, majorization-minimization method, cone constraint, total scaled-gradient variation, non-line-of-sight imaging
\end{keywords}

\begin{AMS}
65K10, 68U10, 94A08
\end{AMS}

\input{sections/sec-introduction-new}
\input{sections/sec-reformulation}

\input{sections/sec-asymptotic-comparison}
\input{sections/sec-mmamm-method}
\input{sections/sec-convergence-analysis}
\input{sections/sec-experiments}
\input{sections/sec-conclusion}

\section*{Acknowledgments}

We sincerely thank the authors who made their MATLAB codes publicly available online.

\bibliographystyle{siamplain}
\bibliography{references}
\end{document}

%% file: sections/sec-introduction-new.tex

\section{Introduction}

High-order variational models \cite{KB2010,CBL2016,AC1997,ML2003,Shen2003euler,yang2025mixed,QZ2022,zhu2020image,Zhu2013Image} provide effective High-order or geometric priors in image restoration, particularly for preserving sharp edges and corner contrasts \cite{CBL2016,wu2024variational,QZ2022, WZ2012}. Notably, the recently proposed total scaled-gradient variation (TSGV) \cite{wu2024variational} serves as a general high-order regularization framework that bridges geometry-driven approaches with sparse representation. By using sparse modeling of piecewise-linear structures, TSGV incorporates several classical high-order regularizers as special cases and provides theoretical guarantees for structural preservation. 

In this paper, we develop an efficient numerical method for solving the TSGV model \cite{wu2024variational}, whose continuous energy functional is formulated as
\begin{equation}\label{TSGV}
    \min_{u} \int_{\Omega} \|\nabla (\psi(|\nabla u|) \nabla u)\|_\mathrm{F} \mathrm{d}x\mathrm{d} y + \frac{\lambda}{2} \int_{\Omega}(u-f)^2 \mathrm{d}x\mathrm{d} y,
\end{equation}
where $f$ is an observed noisy image defined on $\Omega$, and $\lambda>0$ is a regularization parameter balancing regularity and data fidelity. Here, $\nabla u$ denotes the gradient of the scalar field $u$, the outer $\nabla$ applied to the vector field $\psi(|\nabla u|)\nabla u$ denotes its Jacobian matrix, and $\|\cdot\|_\mathrm{F}$ is the Frobenius norm, given for a $2\times2$ matrix $H=[H_{ij}]_{i,j=1}^{2}$ by $\|H\|_\mathrm{F}=\big(\sum_{i,j}H_{ij}^{2}\big)^{1/2}$. The scaling function $\psi:[0,+\infty) \to (0,\infty)$ is a monotonically decreasing $C^1$ function.
We specifically focus on the scaling function 
defined by $\psi(s)=\frac{1}{(b+s^j)^{1/j}}$ with $b>0$ and integer $j\geq1$. 

The primary motivation for investigating the TSGV model stems from its capability to provide a unified variational framework derived from sparse modeling of piecewise-linear structures \cite{wu2024variational}. By modulating the scaling function $\psi$, this framework encompasses several representative high-order models, including the Laplacian regularization model with $\phi(s) = s$ \cite{YLY2000}, the Lysaker, Lundervold, and Tai (LLT) model \cite{ML2003}, the Mean curvature (MC) model \cite{WZ2012} and Weingarten map-based model \cite{QZ2022}.
Geometrically, TSGV effectively mitigates the staircase artifacts \cite{AC1997,dobson1996recovery}, contrast reduction \cite{deledalle2015debiasing,YM2002}, and corner smearing \cite{bellettini2002total} inherent in piecewise-constant regularization while maintaining rigorously proven capabilities for edge and corner contrast preservation \cite{wu2024variational}. However, these geometric advantages are accompanied by significant mathematical complexity. The nested nonlinearity and non-convexity induced by the coupled high-order term $\nabla(\psi(|\nabla u|)\nabla u)$ pose considerable challenges for numerical minimization, necessitating the development of stable and theoretically guaranteed optimization algorithms.

To address these difficulties, augmented Lagrangian methods (ALM) have been widely used for nonconvex high-order variational models \cite{CBL2016,Liu2021Variable,XT2011,MY2016,QZ2022,WZ2013, Zhu2013Image,Duan2013Afast,Zhang2017Fast,liu2019proximal}, as they decompose the problem into more tractable subproblems. However, for highly nonconvex geometric models, ALM-based schemes inevitably introduce multiple penalty parameters, making their numerical stability highly sensitive to heuristic tuning. Alternatively, operator-splitting (OS) methods based on Lie scheme and Marchuk-Yanenko discretization \cite{LD2019,glowinski2019fast,he2025euler,HL2022,lu2026total,wu2024variational} offer a simpler algorithmic structure with fewer parameters. Yet, a major limitation of these methods is the lack of a rigorous convergence theory for nonconvex high-order models. More recently, bilinear decomposition approaches \cite{liu2024fast,liu2025minimizing} were proposed to establish theoretical convergence within the Kurdyka–Łojasiewicz (KL) framework \cite{bolte2014proximal}. These approaches were successfully implemented via a hybrid alternating method (HALM) for the Euler's elastica (EE) model \cite{liu2024fast} and a proximal alternating direction method of multipliers (ADMM) for the MC model \cite{liu2025minimizing}. Nevertheless, their practical performance still depends on parameter selection. Specifically, the subproblem solvers in HALM rely on direct gradient-based updates that require explicit step sizes \cite{liu2024fast}, whereas the proximal ADMM framework inevitably introduces auxiliary proximal and penalty parameters to guarantee convergence \cite{liu2025minimizing}. Tuning these algorithmic parameters—whether explicit step sizes or penalty weights—can affect the convergence speed and numerical stability of these algorithms.

To overcome these limitations, we introduce a bilinear decomposition tailored to a class of models with nested nonlinear structures. The TSGV model is used as a representative example to illustrate the proposed framework. While inspired by recent applications of bilinear decomposition to the Euler elastica \cite{liu2024fast} and mean curvature \cite{liu2025minimizing} models, our approach addresses a different mathematical structure. Existing methods primarily separate the standard two-dimensional image gradient into its magnitude and normalized direction \cite{liu2024fast}, or extend this concept to a three-dimensional image surface gradient \cite{liu2025minimizing}. The TSGV framework, however, requires decoupling the highly nonlinear and coupled vector field $\psi(|\nabla u|)\nabla u$. We achieve this exact decoupling by introducing an augmented direction field whose third component explicitly encodes the scaling relation. This variable transformation reformulates the unconstrained problem into an equivalent constrained formulation governed by non-Riemannian cone or spherical constraints. To efficiently solve the resulting constrained model without the explicit step-size dependencies of existing HALM algorithms \cite{liu2024fast}, we develop a majorization-minimization alternating minimization method (MMAMM). By constructing a surrogate function, the MMAMM scheme ensures monotonic energy descent, avoiding the step-size sensitivity and potential numerical instability associated with traditional gradient-based solvers.

Our main contributions are summarized as follows:
\begin{itemize}
\item We derive an equivalent constrained bilinear reformulation of the TSGV model. In contrast to the decompositions previously developed for EE and MC regularization, the proposed formulation explicitly represents the nonlinear scaling relation in $\psi(|\nabla u|)\nabla u$ through an augmented geometric variable.

\item We provide a geometric interpretation of the constrained feasibility sets (e.g., cone or spherical constraints) generated by different scaling functions. By analyzing their asymptotic behavior near image edges, we explain their distinct edge-preserving properties, which are further corroborated by numerical experiments.

\item To handle the nonconvex subproblems generated by the decomposition, we incorporate an majorization-minimization strategy and construct a suitable surrogate function for the penalized discrete model. This scheme ensures a monotonic decrease in energy across iterations and avoids several practical difficulties of direct gradient-based methods, such as slow convergence and sensitivity to step-size selection. Moreover, we establish the sequence boundedness and convergence to a critical point within the KL framework, thereby providing the rigorous mathematical guarantees often absent in traditional operator-splitting approaches.

\item The proposed method is validated through numerical experiments on Gaussian denoising and NLOS imaging. It achieves competitive or improved reconstruction quality compared with representative variational methods, especially in preserving structural details under high noise levels and both dense and sparse NLOS scanning settings. The extension to NLOS imaging also illustrates that the proposed numerical framework is not restricted to two-dimensional denoising but can be applied to more general large-scale inverse problems.
\end{itemize}

The remainder of this paper is organized as follows. Section \ref{sec:reformulation} establishes the necessary mathematical preliminaries. Section \ref{sec:asymptotic comparison} analyzes the asymptotic behavior of the cone and other constraints near edges. Section \ref{sec:MMAMM} presents the discrete bilinear decomposition formulation of model \eqref{TSGV} and introduces the MM-based algorithm for the resulting subproblems. Section \ref{sec:convergence} provides the convergence analysis of the proposed algorithm. Numerical experiments on image denoising and NLOS imaging are reported in Section \ref{sec:experiments}. Finally, Section \ref{sec:conclusion} concludes the paper and discusses possible directions for future research.

%% file: sections/sec-reformulation.tex

\section{A bilinear decomposition method for the TSGV-based model} \label{sec:reformulation}
This section presents an equivalent reformulation of the TSGV model \eqref{TSGV}. To separate the nonlinearities inherent in the original formulation, we introduce two variables that decouple the weighted gradient structure. In particular, we consider the scaling function
\[
\psi(s)=\psi_j(s)=\frac{1}{(b+s^j)^{1/j}},\quad b>0,\quad j\geq 1.
\]
Let
\[
q = \frac{1}{\psi_j(|\nabla u|)},\quad
\beta = \frac{1}{\psi_j(0)}=b^{1/j}>0,
\]
and
\[
\vec{n} = [n_1,n_2,n_3]^{\top} := [\psi_j(|\nabla u|) \nabla_x u, \psi_j(|\nabla u|) \nabla_y u, \psi_j(|\nabla u|) \beta]^{\top}.
\]
With these definitions, the following relations hold:
\[
\left[\begin{array}{c}
\nabla u \\ \beta
\end{array}\right]
= q \vec{n},\quad
q \geq \beta, \quad
|\vec{n}|_{\psi_j} = 1, \quad
n_3\geq0,
\]
where $|\cdot|_{\psi_j}$ denotes a length-like quantity induced by the scaling function $\psi_j$. More precisely, we have
\[
\left(\sqrt{n_1^2+n_2^2}\right)^j+n_3^j
=
\frac{|\nabla u|^j}{\beta^j+|\nabla u|^j}
+
\frac{\beta^j}{\beta^j+|\nabla u|^j}
=1.
\]
Therefore, for the family $\psi_j$, the induced length-like quantity is defined by
\[
|\vec{n}|_{\psi_j}
=
\left[
\left(\sqrt{n_1^2+n_2^2}\right)^j+n_3^j
\right]^{1/j}.
\]
For $\psi(s) = \psi_1(s) = \frac{1}{b+s}$ and $\beta = b$, the constraint becomes
\[
|\vec{n}|_{\psi_1} = \sqrt{n_1^2 + n_2^2} + n_3 = 1,
\]
which is a cone constraint. For $\psi(s) = \psi_2(s) = \frac{1}{\sqrt{b+s^2}}$ and $\beta = \sqrt{b}$, the constraint reduces to the Euclidean norm
\[
|\vec{n}|_{\psi_2} = \sqrt{n_1^2 + n_2^2 + n_3^2} = 1.
\]

Using the above variable transformations, the original problem \eqref{TSGV} can be reformulated as the following constrained minimization problem:
\begin{equation}\label{bilinear_model}
\begin{aligned} &\min_{u,\vec{n},q} \int_{\Omega} \left\| \nabla \left[\begin{array}{c} n_1 \\ n_2 \end{array}\right] \right\|_\mathrm{F} \mathrm{d}x\mathrm{d} y + \frac{\lambda}{2} \int_{\Omega}(u-f)^2 \mathrm{d}x\mathrm{d} y\\ &\ \ \mbox{s.t.} \ \left[\begin{array}{c} \nabla u \\ \beta \end{array}\right] = q \vec{n}, \ q \geq \beta, \ |\vec{n}|_{\psi_j} = 1,\ n_3\geq0.
\end{aligned}
\end{equation}
In this paper, we focus on two representative choices, $j=1$ and $j=2$, corresponding to the cone constrained and upper-hemisphere constrained formulations, respectively.

%% file: sections/sec-asymptotic-comparison.tex
\section{Asymptotic analysis of the different constraints near edges} \label{sec:asymptotic comparison}
In this section, we compare the different constraints in the edge regime. Denote
\[
p=\nabla u=\begin{bmatrix}
u_x\
u_y
\end{bmatrix},\quad
s=|\nabla u|=\sqrt{u_x^2+u_y^2},
\]
where $p$ is the image gradient and $s$ is its magnitude. Let $n_T=(n_1,n_2)^T$ be the tangential part of the normal field $n$. From the constraints in the proposed model, we have
\[
p=q n_T,\quad
\beta=qn_3.
\]
For a different scaling function
\[
\psi_j(s)=\frac{1}{(\beta^j+s^j)^{1/j}},\quad j\geq 1,
\]
we obtain
\[
n_T^j=\frac{p}{(\beta^j+s^j)^{1/j}},
\quad
n_3^j=\frac{\beta}{(\beta^j+s^j)^{1/j}}.
\]
The superscript $j$ refers to the constraint induced by the scaling function $\psi_j$. For $s>0$, define the unit gradient direction by
\[
e=\frac{p}{|p|}=\frac{\nabla u}{|\nabla u|}.
\]
The tangential normal component can be written in the unified form
\[
n_T^j=\rho^j(s)e,
\]
where
\[
\rho^j(s)=\frac{s}{(\beta^j+s^j)^{1/j}}.
\]

The regularization term in the proposed model penalizes the spatial variation of $n_T$. Hence, the difference among the different constraints can be understood by comparing the derivatives of $n_T^j$, which is
\[
\nabla n_T^j = (\rho^j)'(s)e\otimes\nabla s + \rho^j(s)\nabla e.
\]
Since $|e|=1$, we have $e\cdot e_x=0$ and $e\cdot e_y=0$. Consequently,
\begin{equation}\label{different_form_of_regular}
\|\nabla n_T^j\|_\mathrm{F}^2 = \big((\rho^j)'(s)\big)^2|\nabla s|^2 + (\rho^j)^2(s)\|\nabla e\|_\mathrm{F}^2.
\end{equation}
This identity shows that the regularization penalizes two different geometric quantities. The first term in \eqref{different_form_of_regular} penalizes variation of the gradient magnitude, while the second term in \eqref{different_form_of_regular} penalizes variation of the gradient direction.

We now compare the two coefficients in the large-gradient regime.
As $s\to\infty$, the directional coefficient satisfies
\[
(\rho^j)^2(s) = \left(1+\frac{\beta^j}{s^j}\right)^{-2/j}
=1-\frac{2}{j}\frac{\beta^j}{s^j}+O(s^{-2j}).
\]
In particular, the cone coefficient satisfies
\[
(\rho^1)^2(s) = \left(1+\frac{\beta}{s}\right)^{-2} = 1-\frac{2\beta}{s} +O(s^{-2}),
\]
and for $j\geq2$,
\[
(\rho^j)^2(s)-(\rho^1)^2(s) = \frac{2\beta}{s}+O(s^{-j})>0, \qquad s\to\infty.
\]
It follows that, in the edge regime $s\to\infty$, the cone constraint assigns a smaller weight to the directional variation term $\|\nabla e\|_\mathrm{F}^2$. Therefore, changes in the normal direction are less penalized under the cone constraint.

Next, we compare the weights associated with variations of the gradient magnitude. Direct differentiation with respect to $s$ gives
\begin{align*}
((\rho^j)'(s))^2=\frac{\beta^{2j}}{(\beta^j+s^j)^{2/j+1}} &=
\beta^{2j} s^{-2j-2}
\left(1+\frac{\beta^j}{s^j}\right)^{-2\left(1+\frac{1}{j}\right)} \\
&= \beta^{2j} s^{-2j-2}
+O(s^{-3j-2}),
\end{align*}
as $s\to\infty$. For $j\geq2$,
\[
((\rho^1)'(s))^2-((\rho^j)'(s))^2=\frac{\beta^2}{s^4}+O(s^{-5})>0.
\]
Thus, while the cone constraint penalizes directional variations more weakly, it penalizes variations of the gradient magnitude more strongly in the edge regime. This shows that the cone constraint is anisotropically adapted to edge geometry: it allows the edge direction to change sharply, while suppressing oscillations in the edge regime.

Although the comparison is made through $\|\nabla n_T^j\|_{\mathrm F}^2$, it only serves to identify the orthogonal components inside the Frobenius norm. Taking the square root preserves the ordering of the corresponding coefficients, and hence the same interpretation applies to the regularizer $\|\nabla n_T^j\|_{\mathrm F}$.



%% file: sections/sec-mmamm-method.tex
\section{Majorization-minimization alternating minimization method} \label{sec:MMAMM}
In this section, we introduce an alternating minimization method based on the MM strategy, referred to as MMAMM. The MM strategy \cite{hunter2004tutorial, sun2016majorization} is employed to handle the nonconvex $\mathbf{n}$-subproblem by constructing a tractable surrogate function that majorizes the original objective at the current iterate, guaranteeing monotone energy descent without requiring step size tuning. We first present the discretization of the proposed model. Then, we describe the MM-based update for the $\mathbf{n}$-subproblem and discuss implementation of the method under different geometric constraints.

\subsection{Discretization of the TSGV-based model}
We discretize the image domain $\Omega$ on an $N\times N$ uniform grid. A digital image, represented by a matrix $u$ with entries $u_{i_1,i_2}$ for $1 \leq i_1, i_2 \leq N$, can be rearranged column-wise into a vector $\mathbf{u} = (u_1, u_2, \ldots, u_{N^2})^{\top} \in \mathbb{R}^{N^2}$.

We begin by defining two fundamental $N \times N$ circulant matrices under periodic boundary conditions:
\[
\mathbf{D}_1=
\begin{bmatrix}
-1 & 1 & & & \\
& -1 & \ddots & & \\
& & \ddots & 1 \\
1 & & & -1
\end{bmatrix},\quad \mathbf{D}_2 = -\mathbf{D}_1^{\top}.
\]
The forward and backward difference operators in the $x$- and
$y$-directions are defined by
\[
\mathbf{D}_x^+ = \mathbf{I}_N\otimes \mathbf{D}_1,\
\mathbf{D}_y^+ = \mathbf{D}_1\otimes \mathbf{I}_N,\
\mathbf{D}_x^- = \mathbf{I}_N\otimes \mathbf{D}_2,\
\mathbf{D}_y^- = \mathbf{D}_2\otimes \mathbf{I}_N,
\]
where $\mathbf{I}_N \in \mathbb{R}^{N\times N}$ is the identity matrix, and $\otimes$ denotes the Kronecker product of two matrices. Under the above notation, the discrete gradient operator $\nabla: \mathbb{R}^{N^2}\rightarrow\mathbb{R}^{N^2}\times\mathbb{R}^{N^2}$ is defined by
\[
\nabla \mathbf{u} =
\begin{bmatrix}
\mathbf{D}_x^+\mathbf{u}\\
\mathbf{D}_y^+\mathbf{u}
\end{bmatrix}.
\]
By the standard inner products in $\mathbb{R}^{N^2}$ and $\mathbb{R}^{N^2}\times \mathbb{R}^{N^2}$, the discrete divergence operator is given by
\[
\operatorname{div}\mathbf{p}=\mathbf{D}_x^-\mathbf{p}_1 + \mathbf{D}_y^-\mathbf{p}_2,
\]
where $\mathbf{p}=\begin{bmatrix}
\mathbf{p}_1\\
\mathbf{p}_2
\end{bmatrix}\in \mathbb{R}^{2N^2}$. For notational convenience, we denote the $i$th row of $\mathbf{D}_x^+$ (respectively, $\mathbf{D}_y^+$) as $\mathbf{D}_{x,i}^+$ (respectively, $\mathbf{D}_{y,i}^+$), and $\nabla_i \mathbf{u} = \begin{bmatrix}
\mathbf{D}_{x,i}^+ \mathbf{u}\\
\mathbf{D}_{y,i}^+ \mathbf{u}
\end{bmatrix}$ for $1\leq i\leq N^2$.

Using the above discretization and a smooth approximation of the nonsmooth regularization term, we obtain the discrete form of \eqref{bilinear_model},
\begin{equation} \label{discrete_bilinear_model}
\begin{aligned}
&\min_{\mathbf{u},\mathbf{n},\mathbf{q}} E(\mathbf{u},\mathbf{n},\mathbf{q}) = \sum_{i=1}^{N^2}
\left\|\left[\begin{array}{cc}
\mathbf{D}_{x,i}^+ \mathbf{n}_1 & \mathbf{D}_{y,i}^+ \mathbf{n}_1 \\
\mathbf{D}_{x,i}^+ \mathbf{n}_2 & \mathbf{D}_{y,i}^+ \mathbf{n}_2
\end{array}\right] \right\|_{\mathrm{F},\epsilon} + \frac{\lambda}{2} \|\mathbf{u}-\mathbf{f}\|^2 \\
&\ \ \mbox{s.t.} \ \mathbf{D}_{x}^+ \mathbf{u} = \mathbf{q}\odot \mathbf{n}_1,\ \mathbf{D}_{y}^+ \mathbf{u} = \mathbf{q}\odot \mathbf{n}_2,\ \bm{\beta} = \mathbf{q}\odot \mathbf{n}_3, \\
& \ \qquad |\mathbf{n}|_{\psi_j}=\mathbf{1},\ \mathbf{q} \geq \bm{\beta},\ \mathbf{n}_3\geq0, \\
\end{aligned}
\end{equation}
where $j=1$ or $2$, with
\begin{equation*}
\begin{aligned}
&|\mathbf{n}|_{\psi_1}=\sqrt{\mathbf{n}_1\odot \mathbf{n}_1 + \mathbf{n}_2\odot \mathbf{n}_2} + \mathbf{n}_3,\\
&|\mathbf{n}|_{\psi_2}=\sqrt{\mathbf{n}_1\odot \mathbf{n}_1 + \mathbf{n}_2\odot \mathbf{n}_2 + \mathbf{n}_3\odot \mathbf{n}_3},
\end{aligned}
\end{equation*}
$\mathbf{1}$ and $\bm{\beta}$ represent two vectors whose elements are all ones and $\beta$, respectively, $\odot$ denotes the componentwise product, $\mathbf{q}=(q_1,q_2,\ldots,q_{N^2})^{\top} \geq \bm{\beta}$ means $q_i\geq \beta$ for $1\leq i\leq N^2$, $\mathbf{n}=[\mathbf{n}_1,\mathbf{n}_2,\mathbf{n}_3]^\top$, $\mathbf{n}_l=(n_{l,1},n_{l,2},\ldots,n_{l,N^2})^{\top}$, $l=1,2,3$, $\|\cdot\|$ denotes the standard $\ell^2$ norm, and $\|\cdot\|_{\mathrm F,\epsilon}$ denotes the relaxed Frobenius norm, i.e, for any $2\times2$ matrix
$H=
\begin{bmatrix}
H_{11} & H_{12}\\
H_{21} & H_{22}
\end{bmatrix}$, $\|H\|_{\mathrm F,\epsilon}=\sqrt{H_{11}^2+H_{12}^2+H_{21}^2+H_{22}^2+\epsilon}$ and $\epsilon>0$ is a small positive constant.
Then, we approximate the constrained
minimization problem \eqref{discrete_bilinear_model} by the following unconstrained optimization problem
\begin{align}
\min_{\mathbf{u},\mathbf{n},\mathbf{q}} E_{\alpha,\mathbb{I}}(\mathbf{u},\mathbf{n},\mathbf{q}) =
& \sum_{i=1}^{N^2}
\left\|\left[\begin{array}{cc}
\mathbf{D}_{x,i}^+ \mathbf{n}_1 & \mathbf{D}_{y,i}^+ \mathbf{n}_1 \\
\mathbf{D}_{x,i}^+ \mathbf{n}_2 & \mathbf{D}_{y,i}^+ \mathbf{n}_2
\end{array}\right] \right\|_{\mathrm{F},\epsilon} + \frac{\lambda}{2} \|\mathbf{u}-\mathbf{f}\|^2 \nonumber \\
& +\frac{\alpha}{2} \left(\|\mathbf{D}_{x}^+ \mathbf{u} - \mathbf{q}\odot \mathbf{n}_1\|^2 + \|\mathbf{D}_{y}^+ \mathbf{u} - \mathbf{q}\odot \mathbf{n}_2\|^2 + \|\bm{\beta} - \mathbf{q}\odot \mathbf{n}_3\|^2 \right) \label{unconstrained_discrete_bilinear_model} \\
& + \mathbb{I}_{\mathcal{S}_{j,+}}(\mathbf{n}) + \mathbb{I}_{\mathcal{R}_\beta}(\mathbf{q}), \nonumber
\end{align}
where
\begin{align*}
& \mathcal{S}_{j,+}=\left\{ \mathbf{n}=\begin{bmatrix}
\mathbf{n}_1\\
\mathbf{n}_2\\
\mathbf{n}_3
\end{bmatrix} \Bigg| \ \mathbf{n}_1,\mathbf{n}_2,\mathbf{n}_3 \in \mathbb{R}^{N^2}, \ \mathbf{n}_3\geq0,\ |\mathbf{n}|_{\psi_j}=\mathbf{1} \right\}; \\
& \mathcal{R}_\beta=\left\{ \mathbf{q} \in \mathbb{R}^{N^2} | \ \mathbf{q}\geq \bm{\beta} \right\},
\end{align*}
$\alpha$ is a positive parameter, $j=1$ or $2$, and the indicator function $\mathbb{I}$ is defined by
\begin{align*}
\mathbb{I}_{\mathcal{A}}(s)=
\left\{\begin{array}{lc}
0,       & \text{if } s\in \mathcal{A}; \\
+\infty, & \text{otherwise}.
\end{array}\right.
\end{align*}

We define a smooth function $E_{\alpha}(\mathbf{u},\mathbf{n} ,\mathbf{q})$ by
\begin{align}
E_{\alpha}(\mathbf{u},\mathbf{n},\mathbf{q}) =
& \sum_{i=1}^{N^2}
\left\|\left[\begin{array}{cc}
\mathbf{D}_{x,i}^+ \mathbf{n}_1 & \mathbf{D}_{y,i}^+ \mathbf{n}_1 \\
\mathbf{D}_{x,i}^+ \mathbf{n}_2 & \mathbf{D}_{y,i}^+ \mathbf{n}_2
\end{array}\right] \right\|_{\mathrm{F},\epsilon} + \frac{\lambda}{2} \|\mathbf{u}-\mathbf{f}\|^2 \nonumber \\
& \ + \frac{\alpha}{2} \left(\|\mathbf{D}_{x}^+ \mathbf{u} - \mathbf{q}\odot \mathbf{n}_1\|^2 + \|\mathbf{D}_{y}^+ \mathbf{u} - \mathbf{q}\odot \mathbf{n}_2\|^2 + \|\bm{\beta} - \mathbf{q}\odot \mathbf{n}_3\|^2 \right). \label{E_alpha}
\end{align}
We then write the objective function in \eqref{unconstrained_discrete_bilinear_model} as
\begin{equation*}
E_{\alpha,\mathbb{I}}(\mathbf{u},\mathbf{n},\mathbf{q}) = E_{\alpha}(\mathbf{u},\mathbf{n},\mathbf{q}) + \mathbb{I}_{\mathcal{S}_{j,+}}(\mathbf{n}) + \mathbb{I}_{\mathcal{R}_\beta}(\mathbf{q}).
\end{equation*}

For the minimization problem \eqref{unconstrained_discrete_bilinear_model}, by denoting $k=0,1,\ldots$ as the iteration number, we compute $(\mathbf{u}^{k+1},\mathbf{n}^{k+1},\mathbf{q}^{k+1})$ alternately, i.e.,
\begin{equation}\label{alm}
\left\{
\begin{aligned}
& \mathbf{u}^{k+1} = \arg\min_{\mathbf{u}} E_{\alpha,\mathbb{I}}(\mathbf{u}, \mathbf{n}^k, \mathbf{q}^k), \\
& \mathbf{n}^{k+1} = \arg\min_{\mathbf{n}} E_{\alpha,\mathbb{I}}(\mathbf{u}^{k+1}, \mathbf{n}, \mathbf{q}^k), \\
& \mathbf{q}^{k+1} = \arg\min_{\mathbf{q}} E_{\alpha,\mathbb{I}}(\mathbf{u}^{k+1}, \mathbf{n}^{k+1}, \mathbf{q}).
\end{aligned}
\right.
\end{equation}
Since $E_{\alpha,\mathbb{I}}$ is strongly convex with respect to $\mathbf{u}$ and $\mathbf{q}$, respectively, the corresponding subproblems can be solved efficiently in closed form. The $\mathbf{n}$-subproblem, however, is more challenging due to the non-convex constraint $\mathbf{n} \in \mathcal{S}_{j,+}$. In \cite{liu2024fast}, the authors solved a similar subproblem via a forward-backward splitting scheme, which provides convergence guarantees but requires the step size to satisfy $\tau \leq 1/L$, where the Lipschitz constant $L$ depends on $\mathbf{q}^k$ and may become large, forcing the step size to be prohibitively small. If the step size is not chosen appropriately, the energy-decreasing property cannot be guaranteed. To overcome this limitation, we solve the $n$-subproblem using an MM approach, which will be detailed in the next subsection.

\subsection{MM method for the $\mathbf{n}$-subproblem}
We first characterize the structure of the $\mathbf{n}$-subproblem. Fixing $\mathbf u^{k+1}$ and $\mathbf{q}^k$, denote
\[
\phi(\mathbf{n})=\sum_{i=1}^{N^2}
\left\|
\begin{bmatrix}
\mathbf D_{x,i}^+ \mathbf n_1 & \mathbf D_{y,i}^+ \mathbf n_1 \\
\mathbf D_{x,i}^+ \mathbf n_2 & \mathbf D_{y,i}^+ \mathbf n_2
\end{bmatrix}
\right\|_{\mathrm F,\epsilon}.
\]
The $\mathbf n$-subproblem associated with the energy $E_{\alpha,\mathbb{I}}$ is then given by
\begin{multline} \label{n-subproblem}
    E_{\alpha,\mathbb{I}}(\mathbf u^{k+1}, \mathbf n, \mathbf q^k) = \phi(\mathbf{n}) + \mathbb{I}_{\mathcal S_{j,+}}(\mathbf n) \\
    + \frac{\alpha}{2}
\Big(
\|\mathbf D_x^+ \mathbf u^{k+1} - \mathbf q^k \odot \mathbf n_1\|^2
+ \|\mathbf D_y^+ \mathbf u^{k+1} - \mathbf q^k \odot \mathbf n_2\|^2
+ \|\boldsymbol\beta - \mathbf q^k \odot \mathbf n_3\|^2
\Big).
\end{multline}
The quadratic term in \eqref{n-subproblem} can be rewritten as
\[
h^k(\mathbf n) = \frac{\alpha}{2} \left\|\mathbf q^k\odot\mathbf n - \begin{bmatrix}
\mathbf D_{x}^+ \mathbf u^{k+1} \\
\mathbf D_{y}^+ \mathbf u^{k+1} \\
\bm{\beta}
\end{bmatrix} \right\|^2.
\]
Since $h^k$ is strictly convex and quadratic in $\mathbf{n}$, it admits the exact second-order expansion
\begin{equation}\label{second_order_h}
h^k(\mathbf n)
= h^k(\mathbf n^k) + \left< \nabla_\mathbf{n}h^k(\mathbf{n}^k), \mathbf{n}-\mathbf{n}^k\right> + \frac{1}{2} (\mathbf{n}-\mathbf{n}^k)^\top H^k(\mathbf{n}-\mathbf{n}^k),
\end{equation}
where
\[
\nabla_\mathbf{n}h^k(\mathbf{n}^k) = \alpha (\mathbf q^k)^2
\left(\mathbf n^k - \frac{1}{\mathbf q^k}
\begin{bmatrix}
\mathbf D_{x}^+ \mathbf u^{k+1} \\
\mathbf D_{y}^+ \mathbf u^{k+1} \\
\bm{\beta}
\end{bmatrix}\right),
\]
and $H^k=\alpha\mathrm{diag}((\mathbf q^k)^2, (\mathbf q^k)^2, (\mathbf q^k)^2)$ with $H_i^k=\alpha\mathrm{diag}((\mathbf q_i^k)^2, (\mathbf q_i^k)^2, (\mathbf q_i^k)^2)$. To handle the term $\phi(\mathbf{n})$, we note that its gradient is Lipschitz continuous with constant $L_\phi$. This Lipschitz condition directly implies the following quadratic upper bound:
\begin{equation}\label{second_order_phi}
\phi(\mathbf{n}) \leq \phi(\mathbf{n}^k) + \left< \nabla_{\mathbf{n}}\phi(\mathbf{n}^k), \mathbf{n}-\mathbf{n}^k\right> + \frac{\mu}{2}\|\mathbf{n}-\mathbf{n}^k\|^2,
\end{equation}
where $\mu>L_\phi$. The strict choice $\mu>L_\phi$ will be used later to derive a sufficient descent property for the $\mathbf{n}$-subproblem.

Combining the exact second-order expansion of $h^k$ in \eqref{second_order_h} with the majorization of $\phi$ in \eqref{second_order_phi}, we construct the following function for the $\mathbf{n}$-subproblem,
\begin{align}
Q(\mathbf n \mid \mathbf n^k)
=& h^k(\mathbf n^k) + \left< \nabla_\mathbf{n}h^k(\mathbf{n}^k), \mathbf{n}-\mathbf{n}^k\right> + \frac{1}{2} (\mathbf{n}-\mathbf{n}^k)^\top H^k(\mathbf{n}-\mathbf{n}^k) \nonumber \\
& + \phi(\mathbf{n}^k) + \left< \nabla_{\mathbf{n}}\phi(\mathbf{n}^k), \mathbf{n}-\mathbf{n}^k\right> + \frac{\mu}{2}\|\mathbf{n}-\mathbf{n}^k\|^2 + \mathbb{I}_{\mathcal S_{j,+}}(\mathbf n) \label{Qn}.
\end{align}
The following lemma proves that $Q(\cdot \mid \mathbf{n}^k)$ is an MM surrogate function of $E_{\alpha,\mathbb{I}}$ with respect to $\mathbf{n}$.

\begin{lemma} \label{lem:mm-surrogate}
The function $Q(\cdot \mid \mathbf n^k)$ defined in \eqref{Qn} is an MM surrogate function of $E_{\alpha,\mathbb{I}}$ in the sense that
\begin{equation}\label{n_decrease}
\begin{aligned}
Q(\mathbf n \mid \mathbf n^k) &\geq E_{\alpha,\mathbb{I}}(\mathbf u^{k+1}, \mathbf n, \mathbf q^k)+ \frac{\mu-L_\phi}{2}\|\mathbf{n}-\mathbf{n}^k\|^2,\\
Q(\mathbf n^k \mid \mathbf n^k) &= E_{\alpha,\mathbb{I}}(\mathbf u^{k+1}, \mathbf n^k, \mathbf q^k).
\end{aligned}
\end{equation}
\end{lemma}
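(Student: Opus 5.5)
The proof compares $Q(\cdot\mid\mathbf n^k)$ with the $\mathbf n$-dependent part of $E_{\alpha,\mathbb I}(\mathbf u^{k+1},\cdot,\mathbf q^k)$ term by term. The smooth quadratic part $h^k$ is reproduced exactly, $\phi$ is handled by the descent lemma, and the indicator appears identically on both sides.

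\textbf{Step 1: split the energy.} For fixed $\mathbf u^{k+1},\mathbf q^k$, write
\[
E_{\alpha,\mathbb I}(\mathbf u^{k+1},\mathbf n,\mathbf q^k)=\phi(\mathbf n)+h^k(\mathbf n)+\mathbb I_{\mathcal S_{j,+}}(\mathbf n)+c^k,
\]
with $c^k=\frac{\lambda}{2}\|\mathbf u^{k+1}-\mathbf f\|^2+\mathbb I_{\mathcal R_\beta}(\mathbf q^k)$. Since $\mathbf q^k$ is produced by the $\mathbf q$-update, $\mathbf q^k\in\mathcal R_\beta$, so $c^k$ is a finite constant independent of $\mathbf n$. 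This matches the reduced form \eqref{n-subproblem}. I would state explicitly that $Q$ is compared with the energy up to this constant, equivalently that $c^k$ is absorbed into $Q$; this does not affect the minimizer.

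\textbf{Step 2: justify the Lipschitz constant $L_\phi$.} Each summand of $\phi$ has the form $g(A_i\mathbf n)$, where $g(z)=\sqrt{|z|^2+\epsilon}$ and $A_i$ stacks the rows $\mathbf D^+_{x,i},\mathbf D^+_{y,i}$ acting on $\mathbf n_1,\mathbf n_2$. The Hessian of $g$ has norm at most $1/\sqrt\epsilon$. Hence $\nabla\phi$ is Lipschitz with
\[
L_\phi\le \epsilon^{-1/2}\Big\|\sum_i A_i^\top A_i\Big\| \le \epsilon^{-1/2}\big(\|\mathbf D_x^+\|^2+\|\mathbf D_y^+\|^2\big).
\]
The descent lemma then gives
\[
\phi(\mathbf n)\le\phi(\mathbf n^k)+\langle\nabla\phi(\mathbf n^k),\mathbf n-\mathbf n^k\rangle+\tfrac{L_\phi}{2}\|\mathbf n-\mathbf n^k\|^2 .
\]
Adding $\frac{\mu-L_\phi}{2}\|\mathbf n-\mathbf n^k\|^2$ to both sides shows that the $\phi$-part of $Q$ dominates $\phi(\mathbf n)+\frac{\mu-L_\phi}{2}\|\mathbf n-\mathbf n^k\|^2$.

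\textbf{Step 3: the $h^k$ part.} By \eqref{second_order_h}, $h^k$ is quadratic with constant Hessian $H^k$, so its second-order Taylor expansion is exact. The corresponding terms of $Q$ therefore equal $h^k(\mathbf n)$.

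\textbf{Step 4: combine.} Add the inequality from Step 2, the identity from Step 3, and $\mathbb I_{\mathcal S_{j,+}}(\mathbf n)$ to both sides. This yields the first relation in \eqref{n_decrease} for $\mathbf n\in\mathcal S_{j,+}$. For $\mathbf n\notin\mathcal S_{j,+}$, both sides equal $+\infty$, so the inequality holds trivially.

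For the equality, set $\mathbf n=\mathbf n^k$. All linear and quadratic terms vanish, leaving $h^k(\mathbf n^k)+\phi(\mathbf n^k)+\mathbb I_{\mathcal S_{j,+}}(\mathbf n^k)$. Here $\mathbf n^k\in\mathcal S_{j,+}$ because it comes from the previous $\mathbf n$-update (or from a feasible initialization), so the indicator is zero. This equals the energy, again up to $c^k$.

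\textbf{Main obstacle.} The only nontrivial point is Step 2, namely confirming that $\nabla\phi$ is globally Lipschitz. This relies on $\epsilon>0$: without the relaxation, the Frobenius norm is nonsmooth at zero and no such constant exists. The rest is bookkeeping, including the feasibility of $\mathbf n^k$ and $\mathbf q^k$ needed for the equality case.
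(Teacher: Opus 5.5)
Your proposal is correct and follows the paper's argument, which substitutes the exact expansion \eqref{second_order_h} of $h^k$ and the quadratic upper bound \eqref{second_order_phi} for $\phi$ into the reduced energy \eqref{n-subproblem}. Your version is more careful in two places: you apply the descent lemma with $L_\phi$ and then add the slack, which is what actually produces the term $\frac{\mu-L_\phi}{2}\|\mathbf n-\mathbf n^k\|^2$ (substituting \eqref{second_order_phi} with $\mu$ alone would only give $Q\geq E_{\alpha,\mathbb I}$); and you state explicitly that the identity holds modulo the constant $\frac{\lambda}{2}\|\mathbf u^{k+1}-\mathbf f\|^2$, which \eqref{n-subproblem} drops without comment.
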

\begin{proof}
The result follows directly from substituting \eqref{second_order_h} and \eqref{second_order_phi} into \eqref{n-subproblem}.
\end{proof}

As an immediate consequence of Lemma~\ref{lem:mm-surrogate}, the MM update for the $\mathbf{n}$-subproblem inherits a natural energy descent property, as stated in the following proposition.


\begin{proposition}\label{prop:energy-descent-n}
Let
\[
\mathbf n^{k+1} \in \arg\min_{\mathbf n} Q(\mathbf n \mid \mathbf n^k).
\]
Then the energy associated with the $\mathbf n$-subproblem satisfies
\[
E_{\alpha,\mathbb{I}}(\mathbf u^{k+1}, \mathbf n^k, \mathbf q^k)
\geq E_{\alpha,\mathbb{I}}(\mathbf u^{k+1}, \mathbf n^{k+1}, \mathbf q^k) + \frac{\mu-L_\phi}{2} \|\mathbf n^{k+1}-\mathbf n^k\|^2.
\]
\end{proposition}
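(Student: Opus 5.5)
The plan is the standard MM sandwich argument: evaluate the two relations of Lemma~\ref{lem:mm-surrogate} at $\mathbf n=\mathbf n^{k+1}$ and at $\mathbf n=\mathbf n^k$, and link them through the minimality of $\mathbf n^{k+1}$.

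First, we ensure that $\mathbf n^{k+1}$ is well defined and feasible. The function $Q(\cdot\mid\mathbf n^k)$ is a continuous, coercive quadratic plus the indicator of $\mathcal S_{j,+}$. This set is closed and, for $j=1,2$, bounded: every component satisfies $|n_{l,i}|\le 1$. Hence a minimizer exists. Since $\mathbf n^k\in\mathcal S_{j,+}$ (either it comes from a previous update or it is a feasible initialization), we have $Q(\mathbf n^k\mid\mathbf n^k)<\infty$. Therefore $Q(\mathbf n^{k+1}\mid\mathbf n^k)<\infty$, which forces $\mathbf n^{k+1}\in\mathcal S_{j,+}$.

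Second, optimality of $\mathbf n^{k+1}$ gives $Q(\mathbf n^{k+1}\mid\mathbf n^k)\le Q(\mathbf n^k\mid\mathbf n^k)$. By the equality in \eqref{n_decrease}, the right-hand side equals $E_{\alpha,\mathbb I}(\mathbf u^{k+1},\mathbf n^k,\mathbf q^k)$.

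Third, apply the inequality in \eqref{n_decrease} at $\mathbf n=\mathbf n^{k+1}$:
\[
Q(\mathbf n^{k+1}\mid\mathbf n^k)\ge E_{\alpha,\mathbb I}(\mathbf u^{k+1},\mathbf n^{k+1},\mathbf q^k)+\tfrac{\mu-L_\phi}{2}\|\mathbf n^{k+1}-\mathbf n^k\|^2 .
\]
Chaining this with the second step yields the claim.

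The only point that needs care is the first inequality in \eqref{n_decrease} itself, in case one wants to verify rather than cite it. The descent lemma is stated with $\mu/2$, so to extract the extra $\frac{\mu-L_\phi}{2}\|\mathbf n-\mathbf n^k\|^2$ one should apply the sharper bound
\[
\phi(\mathbf n)\le\phi(\mathbf n^k)+\langle\nabla\phi(\mathbf n^k),\mathbf n-\mathbf n^k\rangle+\tfrac{L_\phi}{2}\|\mathbf n-\mathbf n^k\|^2 ,
\]
and then add $\frac{\mu-L_\phi}{2}\|\mathbf n-\mathbf n^k\|^2$ to both sides. Combined with the exact expansion \eqref{second_order_h} of $h^k$ and the shared indicator term, this gives the lemma. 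I would take Lemma~\ref{lem:mm-surrogate} as given, as the paper does, so the proposition reduces to the three-line chain above.
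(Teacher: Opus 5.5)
Your proof is correct and uses the same sandwich argument as the paper: it chains $E_{\alpha,\mathbb{I}}(\mathbf u^{k+1},\mathbf n^{k+1},\mathbf q^k)+\frac{\mu-L_\phi}{2}\|\mathbf n^{k+1}-\mathbf n^k\|^2\le Q(\mathbf n^{k+1}\mid\mathbf n^k)\le Q(\mathbf n^k\mid\mathbf n^k)=E_{\alpha,\mathbb{I}}(\mathbf u^{k+1},\mathbf n^k,\mathbf q^k)$, using Lemma~\ref{lem:mm-surrogate} and the minimality of $\mathbf n^{k+1}$. Your two extra remarks are valid refinements that the paper leaves implicit: the existence and feasibility of $\mathbf n^{k+1}$, and the fact that the extra $\frac{\mu-L_\phi}{2}$ term in Lemma~\ref{lem:mm-surrogate} needs the $L_\phi/2$ descent bound rather than the $\mu/2$ bound written in \eqref{second_order_phi}.
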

\begin{proof}
By Lemma~\ref{lem:mm-surrogate} and the optimality of $\mathbf{n}^{k+1}$:
\begin{align*}
&E_{\alpha,\mathbb{I}}(\mathbf u^{k+1}, \mathbf n^{k+1}, \mathbf q^k) + \frac{\mu-L_\phi}{2} \|\mathbf n^{k+1}-\mathbf n^k\|^2 \\
\leq &  Q(\mathbf n^{k+1} \mid \mathbf n^k) \leq Q(\mathbf n^k \mid \mathbf n^k) =
E_{\alpha,\mathbb{I}}(\mathbf u^{k+1}, \mathbf n^k, \mathbf q^k).
\end{align*}
\end{proof}

This descent property follows from the MM construction and does not rely on the particular algorithm adopted. Consequently, any algorithm that exactly or sufficiently decreases $Q(\cdot \mid \mathbf n^k)$ can be employed to solve the $\mathbf{n}$-subproblem, which provides considerable flexibility in the choice of inner solver.

\subsection{Implementation of MMAMM}
Embedding the MM method into the alternating minimization framework \eqref{alm}, we update $(\mathbf{u}^{k+1},\mathbf{n}^{k+1},\mathbf{q}^{k+1})$, for $k=0,1,\ldots$, in the following framework
\begin{equation*}
\left\{
\begin{aligned}
& \mathbf{u}^{k+1} = \arg\min_{\mathbf{u}} E_{\alpha,\mathbb{I}}(\mathbf{u}, \mathbf{n}^k, \mathbf{q}^k), \\
& \mathbf{n}^{k+1} = \arg\min_{\mathbf{n}} Q(\mathbf n \mid \mathbf n^k), \\
& \mathbf{q}^{k+1} = \arg\min_{\mathbf{q}} E_{\alpha,\mathbb{I}}(\mathbf{u}^{k+1}, \mathbf{n}^{k+1}, \mathbf{q}).
\end{aligned}
\right.
\end{equation*}
The $\mathbf{u}$-subproblem is
\begin{equation*}
\mathbf{u}^{k+1} = \arg\min_{\mathbf{u}} \frac{\lambda}{2}\|\mathbf{u} - \mathbf{f}\|^2 + \frac{\alpha}{2} \big(\|\mathbf{D}_{x}^+ \mathbf{u} - \mathbf{q}^k\odot \mathbf{n}_1^k\|^2  + \|\mathbf{D}_{y}^+ \mathbf{u} - \mathbf{q}^k\odot \mathbf{n}_2^k\|^2 \big).
\end{equation*}
The unique minimum solution is given by
\begin{equation} \label{u_solution}
\begin{aligned}
\mathbf{u}^{k+1} = & \left(\lambda\mathbf{I}_{N^2} - \alpha\left(\mathbf{D}_x^- \mathbf{D}_x^+ + \mathbf{D}_y^- \mathbf{D}_y^+\right)\right)^{-1} \\
& \ \times \left(\lambda\mathbf{f} - \alpha\left(\mathbf{D}_x^-(\mathbf{q}^k \odot \mathbf{n}_1^k) + \mathbf{D}_y^-(\mathbf{q}^k \odot \mathbf{n}_2^k)\right)\right).
\end{aligned}
\end{equation}
Under periodic boundary conditions, the linear system can be solved efficiently by FFT.

For the $\mathbf{n}$-subproblem, minimizing $Q(\mathbf{n}\mid\mathbf{n}^k)$ yields
\begin{equation*}
\begin{aligned}
\mathbf{n}^{k+1}
&= \arg\min_{\mathbf{n}} Q(\mathbf n \mid \mathbf n^k) \\
&= \arg\min_{\mathbf{n}\in \mathcal S_{j,+}} \left< \nabla_\mathbf{n}(h^k+\phi)(\mathbf{n}^k), \mathbf{n}-\mathbf{n}^k\right> + \frac{1}{2} (\mathbf{n}-\mathbf{n}^k)^\top \left(\mu\mathbf{I}+H^k\right)(\mathbf{n}-\mathbf{n}^k) \nonumber \\
&= \arg\min_{\mathbf{n}\in \mathcal S_{j,+}} \frac{1}{2}\left\|\mathbf{n} - \mathbf{n}^k + (\mu\mathbf{I}+H^k)^{-1} \nabla_\mathbf{n}(h^k+\phi)(\mathbf{n}^k) \right\|^2_{\mu\mathbf{I}+H^k},
\end{aligned}
\end{equation*}
where $\|\mathbf{v}\|^2_\Lambda := \mathbf{v}^\top \Lambda \mathbf{v}$, $\Lambda^k := \mu\mathbf{I}+H^k$ and $\mathbf{n}^{k+\frac{1}{2}} := \mathbf{n}^k - (\Lambda^k)^{-1} \nabla_\mathbf{n}(h^k+\phi)(\mathbf{n}^k)$. Since $\Lambda^k$ is a block-diagonal matrix, the weighted norm $\|\cdot\|_{\Lambda^k}^2$ decouples across pixels, reducing the minimization to $N^2$ independent subproblems of the form
\begin{equation*}
\begin{aligned}
[\mathbf{n}^{k+1}]_i
&= \arg\min_{\mathbf{n}\in \mathcal S^1_{j,+}} \frac{1}{2}\left\|\mathbf{n} - [\mathbf{n}^{k+\frac{1}{2}}]_i \right\|^2\\
&= \operatorname{Proj}_{\mathcal{S}_{j,+}^1} \left([\mathbf{n}^{k+\frac{1}{2}}]_i\right),
\end{aligned}
\end{equation*}
where $1\leq i\leq N^2$, $j=1,2$,
\begin{equation*}
\begin{aligned}
\mathcal{S}_{1,+}^1 &= \{(x,y,z)^\top \in \mathbb{R}^3 | \sqrt{x^2 + y^2} + z = 1,\ z\geq0\},\\
\mathcal{S}_{2,+}^1 &= \{(x,y,z)^\top \in \mathbb{R}^3 | x^2 + y^2 + z^2 = 1,\ z\geq0\},
\end{aligned}
\end{equation*}
with $[\mathbf{n}^{k+1}]_i=[n_{1,i}^{k+1},n_{2,i}^{k+1},n_{3,i}^{k+1}]^\top$ and $\operatorname{Proj}_{\mathcal{S}_{j,+}^1}$ represents the projection operator onto an upper--cone surface ${\mathcal{S}_{1,+}^1}$ or an upper--hemispherical surface ${\mathcal{S}_{2,+}^1}$.

For $j=1$, we begin with the projection onto the whole cone surface
\[
\mathcal{S}_1^1 = \{(x,y,z)^\top \in \mathbb{R}^3 | \sqrt{x^2 + y^2} + z = 1\},
\]
and then incorporate the additional upper-surface constraint. The projection point of $P_0=[x_0,y_0,z_0]^\top$ onto the surface $\mathcal{S}_1^1$ is denoted by $P_p=[x_p,y_p,z_p]^\top$. The outward normal vector to the cone $\mathcal{S}_1^1$ at $P_p$ is given by $[\frac{x_p}{r_p},\frac{y_p}{r_p},1]^\top$, where $r_p=\sqrt{x_p^2+y_p^2}$ and $r_p\neq 0$ (the case $r_p=0$ is a special case, we will discuss separately). The normal line passing through $P_p$ and $P_0$ satisfies the following equation
\[
\frac{x_0-x_p}{\frac{x_p}{r_p}} = \frac{y_0-y_p}{\frac{y_p}{r_p}} = \frac{z_0-z_p}{1}.
\]
This equation can be rewritten as
\begin{equation} \label{normal_line}
\left\{
\begin{aligned}
& x_0-x_p = a\frac{x_p}{r_p},\\
& y_0-y_p = a\frac{y_p}{r_p},\\
& z_0-z_p = a,\\
& r_p+z_p = 1,
\end{aligned}
\right.
\end{equation}
where $a$ is a scalar constant. From the third and fourth equations in \eqref{normal_line}, we have
\[
a=z_0-z_p=z_0+r_p-1.
\]
Let $r_0=\sqrt{x_0^2+y_0^2}$. Using the first and second equations of \eqref{normal_line}, we derive the relationship between $r_0$ and $r_p$
\begin{equation} \label{radial}
r_0 = r_p \left| 1+\frac{a}{r_p} \right| = r_p \left| 1+\frac{z_0+r_p-1}{r_p} \right| = \left| 2r_p+z_0-1 \right|.
\end{equation}
We now proceed to analyze the absolute value by considering the following three cases:

\begin{description}
\item[Case1:] $r_p\neq 0$ and $2r_p+z_0-1>0$.

Under this condition, the equation \eqref{radial} yields
\[
r_p=\frac{r_0-z_0+1}{2}.
\]
Since $r_p\geq0$, it follows that $r_0-z_0+1\geq0$. The equality $r_0 - z_0 + 1 = 0$ leads to a special case that will be discussed separately. In the present case, we assume $0<r_0-z_0+1$.

\begin{description}
\item[Case1a:] $0<r_0 - z_0 + 1\leq2$.

From the first equation in \eqref{normal_line} and the expression of $r_0$, we have
\[
x_0 = x_p \left( 1+\frac{a}{r_p} \right) = x_p \left( 2+\frac{z_0-1}{r_p} \right) = x_p\frac{r_0}{r_p},
\]
which means
\[
x_p=x_0\frac{r_p}{r_0}=x_0\frac{r_0-z_0+1}{2r_0}.
\]
From the second and the fourth equations in \eqref{normal_line}, we obtain
\[
y_p=y_0\frac{r_0-z_0+1}{2r_0}, \quad z_p=1-r_p=\frac{1+z_0-r_0}{2}.
\]
Consequently, the projection point of $P_0$ onto $\mathcal{S}_{1,+}^1$ is given by
\[
P_p = \left[x_0\frac{r_0-z_0+1}{2r_0}, y_0\frac{r_0-z_0+1}{2r_0}, \frac{1+z_0-r_0}{2}\right]^{\top}.
\]

\item[Case1b:] $r_0 - z_0 + 1>2$.

The projection point obtained above satisfies $z_p=\frac{1+z_0-r_0}{2}<0$. This point does not satisfy the constraint $z_p\geq0$. Hence, the projection point must lie on the boundary circle and has the same azimuthal direction as $(x_0,y_0)$. Therefore, the projection of $P_0$ onto $\mathcal{S}_{1,+}^1$ is
\[
P_p=
\left[
\frac{x_0}{r_0},
\frac{y_0}{r_0},
0
\right]^{\top}.
\]
\end{description}

\item[Case2:] $r_p\neq 0$ and $2r_p+z_0-1<0$.

Equation \eqref{radial} gives
\[
r_p=\frac{1-r_0-z_0}{2},
\]
with $1-r_0-z_0>0$ to ensure $r_p>0$. This condition means that $P_0$ lies the interior region associated with the whole cone surface. For a valid projection onto the cone, $P_0$ and $P_p$ should share the same azimuthal angle in the $xy$-plane, implying $x_0$ and $x_p$ (and similarly $y_0$ and $y_p$) must have the same sign. From the first equation in \eqref{normal_line},
\[
x_0 = x_p \left( 1+\frac{a}{r_p} \right) = x_p \left( 2+\frac{z_0-1}{r_p} \right) = -x_p\frac{r_0}{r_p},
\]
since $\frac{r_0}{r_p}>0$, it follows that $x_0$ and $x_p$ have opposite signs, contradicting the fact that $x_0$ has the same sign as $x_p$. Therefore, \textbf{Case 2} is invalid for this projection problem.

\item[Case3:] $r_p\neq 0$ and $2r_p+z_0-1=0$.

In this case, we obtain $r_p=\frac{1-z_0}{2}$, and assume $z_0<1$ (the case $z_0\geq1$ will be discussed separately). 

\begin{description}
\item[Case3a:] $-1\leq z_0<1$.

The condition $r_0=0$ implies $P_0=[0,0,z_0]^\top$. From the fourth equation in \eqref{normal_line},
\[
z_p=1-r_p=\frac{1+z_0}{2}.
\]
Thus, the projection point is
\[
P_p=[x_p,y_p,\frac{1+z_0}{2}]^\top, \quad x_p^2+y_p^2=\left(\frac{1-z_0}{2}\right)^2.
\]
\item[Case3b:] $z_0<-1$.

The projection point obtained above satisfies $z_p=\frac{1+z_0}{2}<0$. Hence, the projection point must lie on the boundary circle and
\[
P_p=[x_p,y_p,0]^{\top}, \quad x_p^2+y_p^2=1.
\]
\end{description}
\end{description}
We now address the remaining special subcases arising from the conditions in \textbf{Case1} and \textbf{Case3}: \textbf{Case1} with $r_0-z_0+1=0$; \textbf{Case3} with $1-z_0\leq0$. Both situations are encompassed by the case $r_p=0$.

\begin{description}
\item[Case4:] $r_p=0$.

If $r_0-z_0+1=0$ in \textbf{Case1}, then $r_p=0$. If $1-z_0\leq0$ in \textbf{Case3}, the relation $r_p=\frac{1-z_0}{2}$ forces $r_p=0$ and consequently $z_0=1$.
Geometrically, these conditions imply that $P_0$ lies inside or on the cone
\[
\bar{\mathcal{S}}_1^1 = \{(x,y,z)^\top \in \mathbb{R}^3 : z-\sqrt{x^2 + y^2} = 1\},
\]
which is symmetric to the whole cone surface $\mathcal S_1^1$ with respect to the plane $z=1$. For such $P_0$, the projection onto $\mathcal{S}_{1,+}^1$ is the apex, i.e., $P_p=[0,0,1]^\top$.
\end{description}

Based on the above discussion, the solution of $\mathbf{n}$-subproblem for $j=1$ is
\begin{equation}\label{n_solution_1}
[\mathbf{n}^{k+1}]_i =
\begin{cases}
[x_0\frac{r_0-z_0+1}{2r_0},y_0\frac{r_0-z_0+1}{2r_0},\frac{1+z_0-r_0}{2}] & \text{if } r_0>0,\ 0<r_0-z_0+1\leq2, \\
[0,0,1] & \text{if } r_0>0,\ r_0-z_0+1\leq0, \\
[\frac{x_0}{r_0},\frac{y_0}{r_0},0] & \text{if } r_0>0,\ 2<r_0-z_0+1, \\
[\frac{1-z_0}{2},0,\frac{1+z_0}{2}] & \text{if } r_0=0, -1\leq z_0<1,\\
[0,0,1] & \text{if } r_0=0,\ z_0\geq1, \\
[\mathbf{d}_1^\top,0] & \text{if } r_0=0,\ z_0<-1,
\end{cases}
\end{equation}
and for $j=2$, the closed-form solutions are
\begin{equation}\label{n_solution_2}
[\mathbf{n}^{k+1}]_i =
\begin{cases}
\frac{[x_0,y_0,z_0]}{\sqrt{x_0^2+y_0^2+z_0^2}} & \text{if } z_0\geq0,\ x_0^2+y_0^2+z_0^2 \neq 0, \\
\mathbf{d}_2 & \text{if } z_0=0,\ r_0=0, \\
[\frac{x_0}{r_0},\frac{y_0}{r_0},0] & \text{if } z_0<0,\ r_0>0, \\
[1,0,0] & \text{if } z_0<0,\ r_0=0, \\
\end{cases}
\end{equation}
where $r_0=\sqrt{x_0^2+y_0^2}$, $[x_0,y_0,z_0] = [n_{1,i}^{k+\frac{1}{2}},n_{2,i}^{k+\frac{1}{2}},n_{3,i}^{k+\frac{1}{2}}]$, $\mathbf d_1\in\mathbb R^2$ and $\mathbf{d}_2\in\mathcal{S}_{2,+}^1$ are arbitrary unit vectors.

The $\mathbf{q}$-subproblem is
\begin{equation*}
\begin{aligned}
\mathbf{q}^{k+1} = \arg\min_{\mathbf{q}\geq\bm{\beta}} & \frac{\alpha}{2} \big(\|\mathbf{D}_{x}^+ \mathbf{u}^{k+1} - \mathbf{q}\odot \mathbf{n}_1^{k+1}\|^2 + \|\mathbf{D}_{y}^+ \mathbf{u}^{k+1} - \mathbf{q}\odot \mathbf{n}_2^{k+1}\|^2 \\
&\quad+ \|\bm{\beta} - \mathbf{q}\odot \mathbf{n}_3^{k+1}\|^2 \big),
\end{aligned}
\end{equation*}
which is also separable. For $1\leq i\leq N^2$, it can be decomposed into the following $N^2$ independent minimization problems
\begin{equation*}
\begin{aligned}
q_i^{k+1} = \arg\min_{q_i\geq\beta} & \frac{\alpha}{2} \left((\mathbf{D}_{x,i}^+ \mathbf{u}^{k+1} - q_i n_{1,i}^{k+1})^2 + (\mathbf{D}_{y,i}^+ \mathbf{u}^{k+1} - q_i n_{2,i}^{k+1})^2 + (\beta - q_i n_{3,i}^{k+1})^2 \right) \\
= \arg\min_{q_i\geq\beta} & \frac{\alpha}{2}q_i^2 ((n_{1,i}^{k+1})^2+(n_{2,i}^{k+1})^2+(n_{3,i}^{k+1})^2) \\
& -\alpha q_i(\mathbf{D}_{x,i}^+ \mathbf{u}^{k+1} n_{1,i}^{k+1} + \mathbf{D}_{y,i}^+ \mathbf{u}^{k+1} n_{2,i}^{k+1} + \beta n_{3,i}^{k+1}),
\end{aligned}
\end{equation*}
where $(n_{1,i}^{k+1})^2+(n_{2,i}^{k+1})^2+(n_{3,i}^{k+1})^2\neq0$, by using \eqref{n_solution_1} (or \eqref{n_solution_2}). It has the following closed-form solution
\begin{equation}\label{q_solution}
q_i^{k+1} = \max\left(\beta,\ \frac{\mathbf{D}_{x,i}^+ \mathbf{u}^{k+1} n_{1,i}^{k+1} + \mathbf{D}_{y,i}^+ \mathbf{u}^{k+1} n_{2,i}^{k+1} + \beta n_{3,i}^{k+1}}{(n_{1,i}^{k+1})^2+(n_{2,i}^{k+1})^2+(n_{3,i}^{k+1})^2}\right).
\end{equation}
We summarize the proposed algorithm for solving \eqref{unconstrained_discrete_bilinear_model} in Algorithm \ref{algorithm}.

\begin{algorithm}[H]
\caption{MMAMM algorithm for solving TSGV-based bilinear decomposition model \eqref{unconstrained_discrete_bilinear_model}} \label{algorithm}
\SetAlgoLined
\KwIn{The observation image $\mathbf{f}$, the model parameter $\lambda$, the scaling function $\psi$ with parameter $\beta$ and algorithm parameters $\epsilon$, $\mu$, $\alpha$.}
\KwOut{$\mathbf{u}^k$.}
\textbf{Initialization:} $\mathbf{u}^0 = \mathbf{f}$, $q^0_i = \frac{1}{\psi(|\nabla_i \mathbf{u}^0|)}$, \\ $\mathbf{n}_i^0 = \operatorname{Proj}_{\mathcal{S}_{j,+}^1}([\psi(|\nabla_i \mathbf{u}^0|) \mathbf{D}_{x,i}^+ \mathbf{u}^0, \psi(|\nabla_i \mathbf{u}^0|) \mathbf{D}_{y,i}^+ \mathbf{u}^{0}, \psi(|\nabla_i \mathbf{u}^0|) \beta])$, \\$i=1,2,\ldots,N^{2}$, $j=1,2$.

Set $k = 0$.

\While{the termination condition is not satisfied}
{
\;\;Compute $\mathbf{u}^{k+1}$ by \eqref{u_solution} using FFT;\\
Compute $\mathbf{n}^{k+1}$ by \eqref{n_solution_1} (or \eqref{n_solution_2});\\
Compute $\mathbf{q}^{k+1}$ by \eqref{q_solution};\\
$k \gets k + 1$.
}
\Return{$\mathbf{u}^k$}.
\end{algorithm}


%% file: sections/sec-convergence-analysis.tex
\section{Convergence analysis of MMAMM} \label{sec:convergence}
We first establish Lipschitz continuity of the partial derivatives of the energy functional $E_{\alpha}(\mathbf{u},\mathbf{n},\mathbf{q})$ with respect to each variable. These constants are crucial for ensuring the convergence of the alternating minimization algorithm.
\begin{lemma}
For $E_{\alpha}(\mathbf{u},\mathbf{n},\mathbf{q})$ defined in \eqref{E_alpha}, the partial derivatives with respect to each variable are Lipschitz continuous. Explicit upper bounds for the Lipschitz constants, denoted by $L_u,\ L_q$, and $L_n$ are
\begin{equation*}
\begin{aligned}
L_u &= \lambda_{\max}\left(\lambda\mathbf{I}_{N^2} - \alpha\left(\mathbf{D}_x^- \mathbf{D}_x^+ + \mathbf{D}_y^- \mathbf{D}_y^+\right)\right),\\
L_q &= \alpha\|\mathbf{n}_1\odot\mathbf{n}_1 + \mathbf{n}_2\odot\mathbf{n}_2 + \mathbf{n}_3\odot\mathbf{n}_3\|_{\infty},\\
L_n &= \frac{8}{\sqrt{\epsilon}} + \alpha \|\mathbf{q}\|_{\infty}^{2},
\end{aligned}
\end{equation*}
where $\lambda_{\max}(\cdot)$ denotes the largest eigenvalue of a matrix and $\|\cdot\|_{\infty}$ means the $\ell^{\infty}$ norm of a vector in the Euclidean space.
\end{lemma}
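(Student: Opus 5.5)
We treat the three partial gradients one at a time. In each case we fix the other two variables and bound the Lipschitz constant of the corresponding partial gradient by the operator norm of a (generalized) Hessian.

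\emph{The $\mathbf{u}$-gradient.} As a function of $\mathbf{u}$, $E_\alpha$ is quadratic:
\[
\nabla_{\mathbf{u}}E_\alpha=\lambda(\mathbf{u}-\mathbf{f})+\alpha\big((\mathbf{D}_x^+)^\top(\mathbf{D}_x^+\mathbf{u}-\mathbf{q}\odot\mathbf{n}_1)+(\mathbf{D}_y^+)^\top(\mathbf{D}_y^+\mathbf{u}-\mathbf{q}\odot\mathbf{n}_2)\big).
\]
Since $(\mathbf{D}_x^+)^\top=-\mathbf{D}_x^-$ and $(\mathbf{D}_y^+)^\top=-\mathbf{D}_y^-$, the Hessian is the constant symmetric positive definite matrix $\lambda\mathbf{I}_{N^2}-\alpha(\mathbf{D}_x^-\mathbf{D}_x^++\mathbf{D}_y^-\mathbf{D}_y^+)$. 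The Lipschitz constant is therefore its spectral norm, which equals its largest eigenvalue. This gives $L_u$.

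\emph{The $\mathbf{q}$-gradient.} The $\mathbf{q}$-part is separable across pixels. Its Hessian is diagonal, with entries $\alpha(n_{1,i}^2+n_{2,i}^2+n_{3,i}^2)$. The norm of a diagonal matrix is the largest absolute value of its entries, which gives $L_q$.

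\emph{The $\mathbf{n}$-gradient.} We split $E_\alpha=\phi(\mathbf{n})+h(\mathbf{n})+\text{const}$, and the Lipschitz constant of the sum is at most the sum of the two constants.

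The quadratic term $h$ has the diagonal Hessian $\alpha\,\mathrm{diag}(\mathbf{q}^2,\mathbf{q}^2,\mathbf{q}^2)$, already written as $H^k$ in the previous section. Its norm is $\alpha\|\mathbf{q}\|_\infty^2$.

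For $\phi$, write $\phi(\mathbf{n})=\sum_i g(A_i\mathbf{n})$. Here $A_i:\mathbb{R}^{3N^2}\to\mathbb{R}^4$ extracts the four differences $\mathbf{D}_{x,i}^+\mathbf{n}_1,\mathbf{D}_{y,i}^+\mathbf{n}_1,\mathbf{D}_{x,i}^+\mathbf{n}_2,\mathbf{D}_{y,i}^+\mathbf{n}_2$, and $g(z)=\sqrt{|z|^2+\epsilon}$. The Hessian of $g$ is
\[
\nabla^2 g(z)=\frac{1}{\sqrt{|z|^2+\epsilon}}\Big(I-\frac{zz^\top}{|z|^2+\epsilon}\Big),
\]
so $\|\nabla^2 g(z)\|\le 1/\sqrt{\epsilon}$. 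Hence
\[
\nabla^2\phi=\sum_i A_i^\top\nabla^2g(A_i\mathbf{n})A_i\preceq \frac{1}{\sqrt\epsilon}\sum_iA_i^\top A_i .
\]
Moreover $\sum_iA_i^\top A_i=\mathrm{diag}(M,M,0)$ with $M=(\mathbf{D}_x^+)^\top\mathbf{D}_x^++(\mathbf{D}_y^+)^\top\mathbf{D}_y^+$. Each $\mathbf{D}_1$ is a circulant matrix with entries $-1$ and $1$, so $\|\mathbf{D}_1\|\le2$. The Kronecker factors then give $\|\mathbf{D}_x^+\|,\|\mathbf{D}_y^+\|\le2$, and therefore $\|M\|\le 4+4=8$. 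Combining the two pieces gives $L_\phi\le 8/\sqrt\epsilon$, and hence $L_n\le 8/\sqrt\epsilon+\alpha\|\mathbf{q}\|_\infty^2$.

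The main point needing care is the $\phi$ estimate. First, the bound $\|\nabla^2 g\|\le\epsilon^{-1/2}$ must hold uniformly in $z$. This is true because $I-zz^\top/(|z|^2+\epsilon)$ has eigenvalues in $(0,1]$. Second, $\|M\|$ must be tracked without losing constants; a crude triangle-inequality bound on $\|\mathbf{D}_x^+\|^2+\|\mathbf{D}_y^+\|^2$ suffices. We conclude by noting that $L_q$ and $L_n$ depend on the other variables. They are therefore "Lipschitz constants" in the blockwise sense, uniform on bounded sets, which is what the subsequent KL analysis needs.
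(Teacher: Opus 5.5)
Your proposal is correct and follows the paper's proof essentially step for step. The paper also takes direct Hessians for the $\mathbf{u}$- and $\mathbf{q}$-blocks, and for the $\mathbf{n}$-block splits into the smoothed Frobenius term plus the diagonal quadratic penalty: the pointwise Hessian bound $1/\sqrt{\epsilon}$ is composed with the linear difference map of squared norm at most $\|\mathbf{D}_x^+\|^2+\|\mathbf{D}_y^+\|^2\le 8$, and the penalty Hessian has norm $\alpha\|\mathbf{q}\|_\infty^2$. Your Loewner-order step $0\preceq\nabla^2\phi\preceq\epsilon^{-1/2}\sum_i A_i^\top A_i=\epsilon^{-1/2}\mathrm{diag}(M,M,0)$ is a slightly cleaner rendering of the paper's chain-rule step, which writes the regularizer's Hessian as a block-diagonal ``diag'' of per-pixel terms although it is really a sum of overlapping terms.
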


\begin{proof}
The partial derivatives of $E_{\alpha}$ with respect to $\mathbf{u}$ and $\mathbf{q}$ are computed as
\begin{equation*}
\begin{aligned}
\nabla_\mathbf{u}E_{\alpha}(\mathbf{u},\mathbf{n},\mathbf{q})
&=\lambda(\mathbf{u}-\mathbf{f})-\alpha (\mathbf{D}_x^-(\mathbf{D}_x^+\mathbf{u}-\mathbf{q}\odot\mathbf{n_1}) + \mathbf{D}_y^-(\mathbf{D}_y^+\mathbf{u}-\mathbf{q}\odot\mathbf{n_2})),\\
\nabla_\mathbf{q}E_{\alpha}(\mathbf{u},\mathbf{n},\mathbf{q})
&=\alpha (\mathbf{n_1}\odot(\mathbf{q}\odot\mathbf{n_1}-\mathbf{D}_x^+\mathbf{u}) + \mathbf{n_2}\odot(\mathbf{q}\odot\mathbf{n_2}-\mathbf{D}_y^+\mathbf{u}) \\
&\quad + \mathbf{n_3}\odot(\mathbf{q}\odot\mathbf{n_3}-\bm{\beta})),\\
\end{aligned}
\end{equation*}
The Lipschitz constants of $\nabla_\mathbf{u}E_{\alpha}$ and $\nabla_\mathbf{q}E_{\alpha}$ are
\begin{equation*}
\begin{aligned}
L_u &= \lambda_{\max}\left(\lambda\mathbf{I} \otimes \mathbf{I} - \alpha\left(\mathbf{D}_x^- \mathbf{D}_x^+ + \mathbf{D}_y^- \mathbf{D}_y^+\right)\right),\\
L_q(\mathbf{n}) &= \alpha\|\mathbf{n}_1\odot\mathbf{n}_1 + \mathbf{n}_2\odot\mathbf{n}_2 + \mathbf{n}_3\odot\mathbf{n}_3\|_{\infty}.
\end{aligned}
\end{equation*}
We now estimate the Lipschitz constant for $\nabla_\mathbf{n}E_{\alpha}$. The energy $E_{\alpha}$ with respect to $\mathbf{n}$ decomposes into two parts: the regularization term and the quadratic penalty term. For each pixel $1\leq i\leq N^2$, define
\[
\mathbf{n}_T = [\mathbf{n}_1, \mathbf{n}_2]^\top, \quad
\mathbf{K}_i =
\begin{bmatrix}
\mathbf{D}_{x,i}^+ & 0\\
\mathbf{D}_{y,i}^+ & 0\\
0 & \mathbf{D}_{x,i}^+\\
0 & \mathbf{D}_{y,i}^+
\end{bmatrix},
\]
and set $g_i(\mathbf{n}) = \mathbf{K}_i \mathbf{n}_T\in \mathbb{R}^4$, where $\mathbf{n}_T$
denotes the tangential part of the normal field
$\mathbf n=[n_1,n_2,n_3]^T$. Then the regularization term reads as
\[
R(\mathbf{n}) = \sum_{i=1}^{N^2} \sqrt{\|g_i(\mathbf{n})\|^2 + \epsilon},
\]
and the quadratic penalty term can be written as
\[
T(\mathbf{n}) = \frac{\alpha}{2} \left(\|\mathbf{D}_{x}^+ \mathbf{u} - \mathbf{q}\odot \mathbf{n}_1\|^2 + \|\mathbf{D}_{y}^+ \mathbf{u} - \mathbf{q}\odot \mathbf{n}_2\|^2 + \|\bm{\beta} - \mathbf{q}\odot \mathbf{n}_3\|^2 \right).
\]
We first analyze $R(\mathbf{n})$. Consider the scalar function $\varphi(g) = \sqrt{\|g\|^2 + \epsilon}$ for $g\in \mathbb{R}^4$. Its gradient and Hessian with respect to $g$ are
\[
\nabla_g \varphi(g) = \frac{g}{\sqrt{\|g\|^2 + \epsilon}},
\]
and
\[
\nabla_g^2 \varphi(g) = \frac{1}{\sqrt{\|g\|^2 + \epsilon}} \left(
\mathbf{I} - \frac{g g^\top}{\|g\|^2 + \epsilon}
\right),
\]
respectively. The Hessian $\nabla_g^2 \varphi (g)$ is symmetric and its eigenvalues can be determined explicitly. Along the direction of $g$, let the unit vector $v=\frac{g}{\|g\|}$ (if $g=0$, take an arbitrary vector), then
\begin{equation*}
\begin{aligned}
\nabla_g^2 \varphi (g) v
&= \frac{1}{\sqrt{\|g\|^2 + \epsilon}} \left(
v - \frac{g (g^\top v)}{\|g\|^2 + \epsilon}\right) = \frac{1}{\sqrt{\|g\|^2 + \epsilon}} \left(
v - \frac{\|g\| g}{\|g\|^2 + \epsilon}\right)\\
&= \frac{1}{\sqrt{\|g\|^2 + \epsilon}} \left(
v - \frac{\|g\|^2}{\|g\|^2 + \epsilon}v\right) = \frac{\epsilon}{(\|g\|^2 + \epsilon)^{3/2}}v.
\end{aligned}
\end{equation*}
Denote $\lambda_\parallel$ as the eigenvalue of $\nabla_g^2 \varphi (g)$ in the direction $g$, thus
\[
\lambda_\parallel = \frac{\epsilon}{(\|g\|^2 + \epsilon)^{3/2}}.
\]
For any vector $v_\perp$ orthogonal to $g$ ($g^\top v_\perp=0$), we have
\[
\nabla_g^2 \varphi (g) v_\perp = \frac{1}{\sqrt{\|g\|^2 + \epsilon}} v_\perp.
\]
The corresponding eigenvalue is
\[
\lambda_\perp = \frac{1}{\sqrt{\|g\|^2 + \epsilon}}.
\]
The spectral norm of $\nabla_g^2 \varphi(g)$ satisfies the upper bound
\[
\|\nabla_g^2 \varphi(g)\| = \max\{\lambda_\parallel,\lambda_\perp\} = \lambda_\perp \leq \frac{1}{\sqrt{\epsilon}}.
\]
Since $g_i(\mathbf{n})$ is linear, the Hessian of $R(\mathbf{n})$ with respect to $\mathbf{n}$ can be expressed via the chain rule.
\[
\nabla_\mathbf{n}^2 R(\mathbf{n}) = diag\left[ \mathbf{K}_i^\top \nabla_g^2 \varphi(g_i) \mathbf{K}_i \right]_{i=1}^{N^2}.
\]
Difference operator norm estimate gives
\[
\|\mathbf{D}_x^+\| \leq 2, \quad \|\mathbf{D}_y^+\| \leq 2 \implies \|\mathbf{K}\|^2 \leq \|\mathbf{D}_x^{+}\|^2 + \|\mathbf{D}_y^{+}\|^2 = 8.
\]
Combining with the single-pixel upper bound, the spectral norm of the Hessian of $R(\mathbf{n})$ satisfies
\begin{equation}\label{R_upperbound}
\|\nabla_\mathbf{n}^2 R(\mathbf{n})\| \leq \max_i\|\nabla_g^2\varphi(g_i)\|\cdot\|\mathbf{K}\|^2 \leq \frac{8}{\sqrt{\epsilon}}.
\end{equation}
We now turn to the quadratic term $\nabla_\mathbf{n}T(\mathbf{n})$, its gradient with respect to $\mathbf{n}$ is
\[
\nabla_\mathbf{n} T(\mathbf{n}) = \alpha \begin{bmatrix}
\mathbf{q} \odot (\mathbf{q} \odot \mathbf{n}_1 - \mathbf{D}_{x}^+ \mathbf{u}) \\
\mathbf{q} \odot (\mathbf{q} \odot \mathbf{n}_2 - \mathbf{D}_{y}^+ \mathbf{u}) \\
\mathbf{q} \odot (\mathbf{q} \odot \mathbf{n}_3 - \bm{\beta})
\end{bmatrix},
\]
and the Hessian matrix is a block diagonal matrix
\[
\nabla_\mathbf{n}^2 T = \alpha \begin{bmatrix}
\mathrm{diag}(\mathbf{q}^2) & \mathbf{0} & \mathbf{0}\\
\mathbf{0} & \mathrm{diag}(\mathbf{q}^2) & \mathbf{0}\\
\mathbf{0} & \mathbf{0} & \mathrm{diag}(\mathbf{q}^2)
\end{bmatrix}.
\]
Its spectral norm  is
\begin{equation}\label{T_upperbound}
\|\nabla_\mathbf{n}^2 T\| = \alpha \|\mathbf{q}\|_\infty^2.
\end{equation}
Combining \eqref{R_upperbound} and \eqref{T_upperbound}, we obtain
\[
\|\nabla_\mathbf{n}^2 (R + T)\| \leq \frac{8}{\sqrt{\epsilon}} + \alpha \|\mathbf{q}\|_\infty^2.
\]
Therefore, the Lipschitz constant of $\nabla_{\mathbf{n}} E_\alpha = \nabla_{\mathbf{n}} (R + T)$ satisfies
\[
L_n(\mathbf{q}) = \frac{8}{\sqrt{\epsilon}} + \alpha \|\mathbf{q}\|_{\infty}^{2}.
\]
\end{proof}

With the Lipschitz constants established, we can proceed to analyze the descent properties of the MMAMM. The following lemma shows that the energy sequence is nonincreasing along the iterations.

\begin{lemma}\label{lem:energy-descent}
Let the sequence $\{(\mathbf{u}^{k+1},\mathbf{n}^{k+1},\mathbf{q}^{k+1})\}_{k=0}^{+\infty}$ be generated by the proposed MMAMM. The energy sequence $\{E_{\alpha,\mathbb{I}}(\mathbf{u}^{k+1},\mathbf{n}^{k+1},\mathbf{q}^{k+1})\}_{k=0}^{+\infty}$ satisfies a sufficient descent property.
\end{lemma}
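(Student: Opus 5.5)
We will establish a bound of the form
\[
E_{\alpha,\mathbb{I}}(\mathbf u^{k},\mathbf n^{k},\mathbf q^{k})-E_{\alpha,\mathbb{I}}(\mathbf u^{k+1},\mathbf n^{k+1},\mathbf q^{k+1})\geq \rho\big(\|\mathbf u^{k+1}-\mathbf u^k\|^2+\|\mathbf n^{k+1}-\mathbf n^k\|^2+\|\mathbf q^{k+1}-\mathbf q^k\|^2\big),
\]
with $\rho>0$ independent of $k$. The energy gap splits into three telescoping pieces, one for each block of the MMAMM update:
\[
\big[E(\mathbf u^k,\mathbf n^k,\mathbf q^k)-E(\mathbf u^{k+1},\mathbf n^k,\mathbf q^k)\big]+\big[E(\mathbf u^{k+1},\mathbf n^k,\mathbf q^k)-E(\mathbf u^{k+1},\mathbf n^{k+1},\mathbf q^k)\big]+\big[E(\mathbf u^{k+1},\mathbf n^{k+1},\mathbf q^k)-E(\mathbf u^{k+1},\mathbf n^{k+1},\mathbf q^{k+1})\big].
\]
Each bracket is bounded from below separately and the three bounds are added.

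For the $\mathbf u$-step, $E_{\alpha,\mathbb{I}}(\cdot,\mathbf n^k,\mathbf q^k)$ is a quadratic with Hessian $\lambda\mathbf I-\alpha(\mathbf D_x^-\mathbf D_x^++\mathbf D_y^-\mathbf D_y^+)\succeq\lambda\mathbf I$, since $-\mathbf D^-\mathbf D^+=(\mathbf D^+)^\top\mathbf D^+\succeq0$. The minimizer $\mathbf u^{k+1}$ has zero gradient, so the exact expansion gives a first bracket of at least $\frac{\lambda}{2}\|\mathbf u^{k+1}-\mathbf u^k\|^2$.

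For the $\mathbf n$-step we invoke Proposition~\ref{prop:energy-descent-n} directly. This needs $\mu>L_\phi$, where $L_\phi$ is the Lipschitz constant of $\nabla\phi$. From the Lipschitz lemma, $L_\phi\le 8/\sqrt\epsilon$ independently of $\mathbf q^k$, which gives a uniform second bracket of at least $\frac{\mu-L_\phi}{2}\|\mathbf n^{k+1}-\mathbf n^k\|^2$. Here the MM construction pays off, because no dependence on $\|\mathbf q^k\|_\infty$ enters.

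For the $\mathbf q$-step, $E_{\alpha,\mathbb{I}}(\mathbf u^{k+1},\mathbf n^{k+1},\cdot)$ is a separable quadratic plus the indicator of the convex set $\mathcal R_\beta$. Its curvature at pixel $i$ is $\alpha\|[\mathbf n^{k+1}]_i\|^2$. Since $\mathbf q^{k+1}$ minimizes over the convex set and $\mathbf q^k\in\mathcal R_\beta$ is feasible (for $k\ge1$ by construction, and for $k=0$ by initialization since $1/\psi\ge\beta$), strong convexity gives a third bracket of at least $\frac{\alpha}{2}\sum_i\|[\mathbf n^{k+1}]_i\|^2 (q_i^{k+1}-q_i^k)^2$.

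The main obstacle is to obtain a lower bound on $\|[\mathbf n^{k+1}]_i\|^2$ that is uniform in $i$ and $k$. For $j=2$ it equals $1$ exactly. For $j=1$, write $r=\sqrt{n_1^2+n_2^2}$, so that $r+n_3=1$ with $r,n_3\ge0$. Then $\|[\mathbf n]_i\|^2=r^2+n_3^2\ge\frac12$, with the minimum attained at $r=n_3=\tfrac12$. Since every output of \eqref{n_solution_1} lies on $\mathcal S^1_{1,+}$, the curvature is at least $\alpha/2$, and the third bracket is at least $\frac{\alpha}{4}\|\mathbf q^{k+1}-\mathbf q^k\|^2$. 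Adding the three bounds gives the claim with
\[
\rho=\min\Big\{\frac{\lambda}{2},\ \frac{\mu-L_\phi}{2},\ \frac{\alpha}{4}\Big\}
\]
(for $j=2$ the last entry can be $\alpha/2$). A consequence is that the energy sequence is nonincreasing. Because it is bounded below by zero, it converges, and the squared increments are summable.
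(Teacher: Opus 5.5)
Your proof is correct and uses the same three-block splitting as the paper: the paper likewise gets the $\mathbf n$-step decrease from Proposition~\ref{prop:energy-descent-n}, but it simply defers the $\mathbf u$- and $\mathbf q$-step estimates to \cite{liu2024fast}. Your explicit treatment of those two steps is a useful completion, in particular the uniform curvature bound $\|[\mathbf n^{k+1}]_i\|^2\geq \frac12$ on the cone $\mathcal S^1_{1,+}$, which is exactly what makes the $\mathbf q$-step descent constant independent of $k$ and which the paper leaves implicit.
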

\begin{proof}
The decreasing property of the energy with respect to the variable $\mathbf{n}$ follows directly from Proposition~\ref{prop:energy-descent-n}, while the corresponding descent estimates with respect to the variables $\mathbf{u}$ and $\mathbf{q}$ are established in \cite{liu2024fast}.
\end{proof}

\begin{lemma}
Let $\{(\mathbf{u}^{k+1},\mathbf{n}^{k+1},\mathbf{q}^{k+1})\}_{k=0}^{+\infty}$ be a sequence generated by the proposed MMAMM. Then, there exists a positive constant $C$ independent of the index $k$ such that
\begin{equation*}
\max_k\left\{\|\mathbf{u}^{k+1}\|,\|\mathbf{n}^{k+1}\|,\|\mathbf{q}^{k+1}\|\right\} \leq C.
\end{equation*}
\end{lemma}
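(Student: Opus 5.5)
The plan is to bound the three blocks in order $\mathbf{n}$, $\mathbf{u}$, $\mathbf{q}$, using energy descent together with the explicit update formulas.

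\textbf{Bounding $\mathbf{n}$.} For $k\ge0$ the iterate $\mathbf{n}^{k+1}$ is produced by the projection formulas \eqref{n_solution_1} or \eqref{n_solution_2}, so each pixel $[\mathbf{n}^{k+1}]_i$ lies in $\mathcal{S}^1_{j,+}$.
\begin{itemize}
\item For $j=1$, the constraints $\sqrt{n_1^2+n_2^2}+n_3=1$ and $n_3\ge0$ give $\sqrt{n_1^2+n_2^2}\le1$ and $n_3\le 1$. Hence $|[\mathbf{n}]_i|\le\sqrt2$.
\item For $j=2$, we have $|[\mathbf{n}]_i|=1$.
\end{itemize}
Summing over pixels gives $\|\mathbf{n}^{k+1}\|\le\sqrt{2}N$ for all $k$.

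\textbf{Bounding $\mathbf{u}$.} I would use Lemma~\ref{lem:energy-descent}. The energy sequence is nonincreasing, so
\[
E_{\alpha,\mathbb{I}}(\mathbf{u}^{k+1},\mathbf{n}^{k+1},\mathbf{q}^{k+1})\le E_{\alpha,\mathbb{I}}(\mathbf{u}^{1},\mathbf{n}^{1},\mathbf{q}^{1})=:E_1<\infty.
\]
This bound is finite because the first iterate is feasible: $\mathbf{n}^1\in\mathcal{S}_{j,+}$ and $\mathbf{q}^1\ge\bm\beta$ by \eqref{q_solution}. Every term of $E_\alpha$ is nonnegative, so $\frac{\lambda}{2}\|\mathbf{u}^{k+1}-\mathbf{f}\|^2\le E_1$. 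It follows that $\|\mathbf{u}^{k+1}\|\le\|\mathbf{f}\|+\sqrt{2E_1/\lambda}$.

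\textbf{Bounding $\mathbf{q}$.} This is the main obstacle, since the penalty term controls only $\mathbf{q}\odot\mathbf{n}$ and not $\mathbf{q}$ directly, and $\mathbf{n}$ may degenerate. I would work from the closed form \eqref{q_solution} rather than from the energy. The key observation is a lower bound on the denominator: on both feasible sets, $n_1^2+n_2^2+n_3^2\ge\frac12$. For $j=2$ this quantity equals $1$. For $j=1$, write $r+n_3=1$ with $r=\sqrt{n_1^2+n_2^2}$; then $r^2+n_3^2\ge\frac{(r+n_3)^2}{2}=\frac12$. The numerator is bounded by Cauchy--Schwarz:
\[
|\nabla_i\mathbf{u}^{k+1}\cdot(n_1,n_2)+\beta n_3|\le(\|\nabla_i\mathbf{u}^{k+1}\|+\beta)\,|[\mathbf{n}^{k+1}]_i|\le\sqrt2\,(\|\nabla_i\mathbf{u}^{k+1}\|+\beta).
\]
Using $\|\nabla_i\mathbf{u}\|\le\|\nabla\mathbf{u}\|\le 2\sqrt2\|\mathbf{u}\|$, which follows from $\|\mathbf{D}_x^+\|,\|\mathbf{D}_y^+\|\le2$, together with the bound on $\mathbf{u}$ above, every component satisfies
\[
\beta\le q_i^{k+1}\le \max\bigl(\beta,\;2\sqrt2\,(2\sqrt2\,C_u+\beta)\bigr).
\]
Multiplying by $N$ gives a uniform bound on $\|\mathbf{q}^{k+1}\|$. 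Taking $C$ as the maximum of the three bounds completes the argument.

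The point I would double-check is that the descent in Lemma~\ref{lem:energy-descent} applies from the first iterate onward, so that $E_1$ is a legitimate uniform bound. If the initialization $(\mathbf{u}^0,\mathbf{n}^0,\mathbf{q}^0)$ is itself feasible, it could serve as the reference point instead.
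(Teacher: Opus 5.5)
Your proof is correct. It differs from the paper's only in how $\mathbf{q}$ is controlled.

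The paper notes that $\mathbf{n}$ is bounded by the constraint. It then asserts that $E_{\alpha,\mathbb{I}}$ is coercive in $(\mathbf{u},\mathbf{q})$, so the descent of Lemma~\ref{lem:energy-descent} traps all iterates in a bounded sublevel set. You bound $\mathbf{u}$ the same way, but you read the bound on $\mathbf{q}$ off the closed-form update \eqref{q_solution}.

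Your worry that ``$\mathbf{n}$ may degenerate'' is unfounded on the feasible set. In fact, your estimate $n_1^2+n_2^2+n_3^2\ge\frac12$ is the fact the paper's coercivity claim relies on without stating it. The penalty gives, pixelwise, $|q_i^{k+1}[\mathbf{n}^{k+1}]_i-(\nabla_i\mathbf{u}^{k+1},\beta)|\le\sqrt{2E_1/\alpha}$. Hence $q_i^{k+1}\le\sqrt2\bigl(\sqrt{2E_1/\alpha}+\|\nabla_i\mathbf{u}^{k+1}\|+\beta\bigr)$ follows directly from the energy.

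The trade-off is this. The sublevel-set route needs only descent and feasibility, so it survives inexact or modified $\mathbf{q}$-updates. Yours depends on the exact minimizer formula, but it is fully explicit and makes the non-degeneracy of the cone constraint visible.

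As for your last concern, either reference point works. The initialization is feasible: $q_i^0=(b+|\nabla_i\mathbf{u}^0|^j)^{1/j}\ge\beta$, and $\mathbf{n}^0$ is a projection onto $\mathcal{S}^1_{j,+}$. Each sub-step is also nonincreasing: the $\mathbf{u}$- and $\mathbf{q}$-steps are exact minimizations, and the $\mathbf{n}$-step is covered by Proposition~\ref{prop:energy-descent-n}.
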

\begin{proof}
$\mathbf{n}$ is bounded, and the functional $E_{\alpha,\mathbb{I}}$ is coercive with respect to $(\mathbf{u},\mathbf{q})$. Invoking Lemma~\ref{lem:energy-descent} yields the uniform boundedness of $\|\mathbf{u}^{k+1}\|$, $\|\mathbf{n}^{k+1}\|$ and $\|\mathbf{q}^{k+1}\|$.
\end{proof}

Having established the boundedness of the iteration sequence, we next derive an upper bound for the subgradient of the objective function along the iterates, which plays a central role in the subsequent convergence analysis.
\begin{lemma}
There exists $\mathbf{g}^{k+1} := (\mathbf{g}_\mathbf{u}^{k+1},\mathbf{g}_\mathbf{n}^{k+1},\mathbf{g}_\mathbf{q}^{k+1})$ with
\begin{align*}
\mathbf{g}_\mathbf{u}^{k+1} & = \nabla_\mathbf{u} E_{\alpha}(\mathbf{u}^{k+1},\mathbf{n}^{k+1},\mathbf{q}^{k+1}), \\
\mathbf{g}_\mathbf{n}^{k+1} & \in \nabla_\mathbf{n} E_{\alpha}(\mathbf{u}^{k+1},\mathbf{n}^{k+1},\mathbf{q}^{k+1}) + \partial_\mathbf{n} \mathbb{I}_{\mathcal{S}_{j,+}}(\mathbf{n}^{k+1}), \\
\mathbf{g}_\mathbf{q}^{k+1} & \in \nabla_\mathbf{q} E_{\alpha}(\mathbf{u}^{k+1},\mathbf{n}^{k+1},\mathbf{q}^{k+1}) + \partial_\mathbf{q} \mathbb{I}_{\mathcal{R}_\beta}(\mathbf{q}^{k+1}),
\end{align*}
such that
\begin{align*}
\|\mathbf{g}^{k+1}\| &\leq \|\mathbf{g}_\mathbf{u}^{k+1}\| + \|\mathbf{g}_\mathbf{n}^{k+1}\| + \|\mathbf{g}_\mathbf{q}^{k+1}\| \\
& \leq \hat{C}(\|\mathbf{q}^{k+1}-\mathbf{q}^k\| + \|\mathbf{n}^{k+1}-\mathbf{n}^k\|).
\end{align*}
where $\hat{C}$ is a constant independent of the index $k$.
\end{lemma}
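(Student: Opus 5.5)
The plan is to write down the first-order optimality conditions of the three subproblems at step $k$. Each one yields an element of the appropriate (limiting) subdifferential evaluated at a mixed iterate. We then shift these elements to the common point $(\mathbf{u}^{k+1},\mathbf{n}^{k+1},\mathbf{q}^{k+1})$ and control the shift using the Lipschitz constants $L_u,L_n,L_q$ from the preceding lemma. All constants will be uniform in $k$ because the iterates are bounded by $C$, as shown in the boundedness lemma.

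\textbf{The $\mathbf{u}$-block.} Since $\mathbf{u}^{k+1}$ minimizes the smooth strongly convex function $E_\alpha(\cdot,\mathbf{n}^k,\mathbf{q}^k)$, we have $\nabla_\mathbf{u}E_\alpha(\mathbf{u}^{k+1},\mathbf{n}^k,\mathbf{q}^k)=0$. We set $\mathbf{g}_\mathbf{u}^{k+1}=\nabla_\mathbf{u}E_\alpha(\mathbf{u}^{k+1},\mathbf{n}^{k+1},\mathbf{q}^{k+1})$. By the explicit formula for $\nabla_\mathbf{u}E_\alpha$, this equals
\[
\alpha\big(\mathbf{D}_x^-(\mathbf{q}^{k+1}\odot\mathbf{n}_1^{k+1}-\mathbf{q}^k\odot\mathbf{n}_1^k)+\mathbf{D}_y^-(\mathbf{q}^{k+1}\odot\mathbf{n}_2^{k+1}-\mathbf{q}^k\odot\mathbf{n}_2^k)\big).
\]
We split $\mathbf{q}^{k+1}\odot\mathbf{n}^{k+1}-\mathbf{q}^k\odot\mathbf{n}^k=(\mathbf{q}^{k+1}-\mathbf{q}^k)\odot\mathbf{n}^{k+1}+\mathbf{q}^k\odot(\mathbf{n}^{k+1}-\mathbf{n}^k)$. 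Using $\|\mathbf{D}^-\|\le2$, $\|\mathbf{n}\|_\infty\le1$ on $\mathcal{S}_{j,+}$ and $\|\mathbf{q}^k\|_\infty\le C$, this gives a bound of the form $c(\|\mathbf{q}^{k+1}-\mathbf{q}^k\|+\|\mathbf{n}^{k+1}-\mathbf{n}^k\|)$.

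\textbf{The $\mathbf{q}$-block.} $\mathbf{q}^{k+1}$ is the exact minimizer of $E_\alpha(\mathbf{u}^{k+1},\mathbf{n}^{k+1},\cdot)+\mathbb{I}_{\mathcal{R}_\beta}$, which is a convex problem, so $0\in\nabla_\mathbf{q}E_\alpha(\mathbf{u}^{k+1},\mathbf{n}^{k+1},\mathbf{q}^{k+1})+\partial\mathbb{I}_{\mathcal{R}_\beta}(\mathbf{q}^{k+1})$. We can therefore take $\mathbf{g}_\mathbf{q}^{k+1}=0$.

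\textbf{The $\mathbf{n}$-block.} This is the only delicate step, because $\mathbf{n}^{k+1}$ minimizes the surrogate $Q(\cdot\mid\mathbf{n}^k)$ rather than $E_{\alpha,\mathbb{I}}$. Fermat's rule for the limiting subdifferential, applied to $Q$ (smooth part plus the indicator of the closed set $\mathcal{S}_{j,+}$), gives
\[
0\in\nabla_\mathbf{n}(h^k+\phi)(\mathbf{n}^k)+\Lambda^k(\mathbf{n}^{k+1}-\mathbf{n}^k)+\partial\mathbb{I}_{\mathcal{S}_{j,+}}(\mathbf{n}^{k+1}),
\]
where $\Lambda^k=\mu\mathbf{I}+H^k$. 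Note that $h^k+\phi=E_\alpha(\mathbf{u}^{k+1},\cdot,\mathbf{q}^k)$ up to constants. We define
\[
\mathbf{g}_\mathbf{n}^{k+1}=\nabla_\mathbf{n}E_\alpha(\mathbf{u}^{k+1},\mathbf{n}^{k+1},\mathbf{q}^{k+1})-\nabla_\mathbf{n}E_\alpha(\mathbf{u}^{k+1},\mathbf{n}^k,\mathbf{q}^k)-\Lambda^k(\mathbf{n}^{k+1}-\mathbf{n}^k),
\]
which lies in the required set. To bound it, we insert the intermediate point $(\mathbf{u}^{k+1},\mathbf{n}^{k+1},\mathbf{q}^k)$.
\begin{itemize}
\item The change in $\mathbf{n}$ is bounded by $L_n(\mathbf{q}^k)\|\mathbf{n}^{k+1}-\mathbf{n}^k\|\le(8/\sqrt{\epsilon}+\alpha C^2)\|\mathbf{n}^{k+1}-\mathbf{n}^k\|$.
\item The change in $\mathbf{q}$ only affects the penalty gradient $\alpha\,\mathbf{q}\odot(\mathbf{q}\odot\mathbf{n}-\mathbf{w})$, where $\mathbf{w}=(\mathbf{D}_x^+\mathbf{u}^{k+1},\mathbf{D}_y^+\mathbf{u}^{k+1},\bm\beta)$. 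This map is locally Lipschitz in $\mathbf{q}$ on the bounded set, with a constant depending on $C$, $\|\mathbf{D}^+\|\le2$ and $\beta$.
\item Finally, $\|\Lambda^k\|\le\mu+\alpha C^2$.
\end{itemize}

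Summing the three blocks gives the claim with $\hat C$ depending only on $\alpha,\mu,\epsilon,\beta,C$. The main obstacle is the $\mathbf{n}$-block. There we must justify using the limiting subdifferential of the nonconvex indicator (the optimality condition holds because $\mathbf{n}^{k+1}$ is a global minimizer of a smooth function plus an indicator of a closed set). We must also make sure the mismatch between the surrogate gradient at $\mathbf{n}^k$ and the true gradient at $\mathbf{n}^{k+1}$ is absorbed via Lipschitz continuity, using the uniform bound on $\|\mathbf{q}^k\|_\infty$ so that $L_n$ and $\Lambda^k$ stay bounded independently of $k$.
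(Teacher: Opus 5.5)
Your proposal is correct and follows essentially the same route as the paper: you use $\nabla_{\mathbf{u}}E_\alpha(\mathbf{u}^{k+1},\mathbf{n}^k,\mathbf{q}^k)=0$ for the $\mathbf{u}$-block, the optimality condition of the surrogate $Q(\cdot\mid\mathbf{n}^k)$ shifted through the intermediate point $(\mathbf{u}^{k+1},\mathbf{n}^{k+1},\mathbf{q}^k)$ for the $\mathbf{n}$-block, and $\mathbf{g}_{\mathbf{q}}^{k+1}=0$ from the exact $\mathbf{q}$-minimization. If anything, your version is more explicit than the paper's about where the $k$-independent constants come from (namely $\|\mathbf{n}\|_\infty\le 1$ on $\mathcal{S}_{j,+}$ and the uniform bound on $\mathbf{u}^{k+1}$ and $\mathbf{q}^k$) and about applying Fermat's rule with the limiting subdifferential of the nonconvex indicator.
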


\begin{proof}
For $\mathbf{u}$-subproblem,
\begin{align*}
\| \mathbf{g}_\mathbf{u}^{k+1} \|
&= \big\| \nabla_\mathbf{u} E_\alpha (\mathbf{u}^{k+1}, \mathbf{n}^{k+1}, \mathbf{q}^{k+1}) \big\| \\
&\leq \big\| \nabla_\mathbf{u} E_\alpha (\mathbf{u}^{k+1}, \mathbf{n}^{k+1}, \mathbf{q}^{k+1}) - \nabla_\mathbf{u} E_\alpha (\mathbf{u}^{k+1}, \mathbf{n}^{k}, \mathbf{q}^{k+1}) \big\| \\
&\quad + \big\| \nabla_\mathbf{u} E_\alpha (\mathbf{u}^{k+1}, \mathbf{n}^{k}, \mathbf{q}^{k+1}) - \nabla_\mathbf{u} E_\alpha (\mathbf{u}^{k+1}, \mathbf{n}^{k}, \mathbf{q}^{k}) \big\| \\
&\quad + \big\| \nabla_\mathbf{u} E_\alpha (\mathbf{u}^{k+1}, \mathbf{n}^{k}, \mathbf{q}^{k}) \big\| \\
&\leq C_1 (\| \mathbf{n}^{k+1} - \mathbf{n}^{k} \| + \| \mathbf{q}^{k+1} - \mathbf{q}^{k} \| ),
\end{align*}
where $C_1$ is a constant independent of $k$. For $\xi^{k+1}\in\partial_\mathbf{n} \mathbb{I}_{\mathcal{S}_{j,+}}(\mathbf{n}^{k+1})$, using the optimality condition of the $\mathbf{n}$-subproblem, we have
\begin{align*}
0 &= \nabla_{\mathbf{n}} Q(\mathbf n^{k+1} \mid \mathbf n^k) + \xi^{k+1} \\
& = \nabla_{\mathbf{n}}\phi(\mathbf{n}^k) + \nabla_\mathbf{n}h^k(\mathbf{n}^k) + (H^k+\mu\mathbf{I})(\mathbf{n}^{k+1}-\mathbf{n}^k) + \xi^{k+1}\\
& = \nabla_\mathbf{n} E_\alpha (\mathbf{u}^{k+1}, \mathbf{n}^k, \mathbf{q}^{k}) + (H^k+\mu\mathbf{I})(\mathbf{n}^{k+1}-\mathbf{n}^k) + \xi^{k+1},
\end{align*}
where the last equation is obtained by using $\nabla_\mathbf{n} E_\alpha (\mathbf{u}^{k+1}, \mathbf{n}, \mathbf{q}^{k}) = \nabla_{\mathbf{n}}\phi(\mathbf{n}) + \nabla_\mathbf{n}h^k(\mathbf{n})$. Then, we have
\begin{align*}
\|\mathbf{g}_\mathbf{n}^{k+1}\|
&= \|\nabla_\mathbf{n} E_{\alpha}(\mathbf{u}^{k+1},\mathbf{n}^{k+1},\mathbf{q}^{k+1}) + \xi^{k+1}\| \\
& \leq \|\nabla_\mathbf{n} E_{\alpha}(\mathbf{u}^{k+1},\mathbf{n}^{k+1},\mathbf{q}^{k+1}) - \nabla_\mathbf{n} E_\alpha (\mathbf{u}^{k+1}, \mathbf{n}^{k+1}, \mathbf{q}^{k})\| \\
& \quad + \|\nabla_\mathbf{n} E_{\alpha}(\mathbf{u}^{k+1},\mathbf{n}^{k+1},\mathbf{q}^k) - \nabla_\mathbf{n} E_\alpha (\mathbf{u}^{k+1}, \mathbf{n}^k, \mathbf{q}^{k})\| \\
& \quad + \|(H^k+\mu\mathbf{I})(\mathbf{n}^{k+1}-\mathbf{n}^k)\| \\
& \leq C_2(\|\mathbf{n}^{k+1}-\mathbf{n}^k\| + \|\mathbf{q}^{k+1}-\mathbf{q}^k\|),
\end{align*}
where $C_2$ is a constant independent of $k$. For the $\mathbf{q}$-subproblem, we have
\[
0 \in \nabla_\mathbf{q} E_\alpha (\mathbf{u}^{k+1}, \mathbf{n}^{k+1}, \mathbf{q}^{k+1}) + \partial_\mathbf{q} \mathbb{I}_{\mathcal{R}_\beta}(\mathbf{q}^{k+1}).
\]
Then, there exists a $\zeta^{k+1} \in \partial_\mathbf{q} \mathbb{I}_{\mathcal{R}_\beta}(\mathbf{q}^{k+1})$ such that
\[
\|\mathbf{g}_\mathbf{q}^{k+1}\| = \|\nabla_\mathbf{q} E_\alpha (\mathbf{u}^{k+1}, \mathbf{n}^{k+1}, \mathbf{q}^{k+1}) + \zeta^{k+1}\| = 0.
\]
This proves the lemma.
\end{proof}

Building on these foundational results and leveraging the KL property of the objective function, we establish the main convergence theorem. The detailed proof follows the established framework in \cite{bolte2014proximal, liu2024fast}.
\begin{theorem}
Let $\{(\mathbf{u}^{k+1},\mathbf{n}^{k+1},\mathbf{q}^{k+1})\}_{k=0}^{+\infty}$ be a sequence generated by the proposed MMAMM. Then
\begin{enumerate}
    \item The sequence $\{(\mathbf{u}^{k+1},\mathbf{n}^{k+1},\mathbf{q}^{k+1})\}_{k=0}^{+\infty}$ has a finite length, i.e.,
    \[
    \sum_{k=0}^{+\infty} \| (\mathbf{u}^{k+1},\mathbf{n}^{k+1},\mathbf{q}^{k+1}) - (\mathbf{u}^k,\mathbf{n}^k,\mathbf{q}^k)\| < +\infty.
    \]
    \item The sequence $\{(\mathbf{u}^{k+1},\mathbf{n}^{k+1},\mathbf{q}^{k+1})\}_{k=0}^{+\infty}$ converges to a critical point of the quadratic-penalty problem $E_{\alpha,\mathbb{I}}$.
\end{enumerate}
\end{theorem}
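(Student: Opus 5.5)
The argument follows the abstract convergence scheme of \cite{bolte2014proximal}. It needs three ingredients for $\Psi:=E_{\alpha,\mathbb{I}}$: (H1) sufficient decrease, (H2) a relative-error bound on subgradients, and (H3) the KL property of $\Psi$ together with boundedness of the iterates.

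\textbf{Step 1: sufficient decrease.} Make Lemma~\ref{lem:energy-descent} quantitative. The $\mathbf{u}$-subproblem is $\lambda$-strongly convex, since its Hessian $\lambda\mathbf{I}-\alpha(\mathbf{D}_x^-\mathbf{D}_x^++\mathbf{D}_y^-\mathbf{D}_y^+)\succeq\lambda\mathbf{I}$. Hence
\[
E_{\alpha,\mathbb{I}}(\mathbf{u}^k,\mathbf{n}^k,\mathbf{q}^k)-E_{\alpha,\mathbb{I}}(\mathbf{u}^{k+1},\mathbf{n}^k,\mathbf{q}^k)\ge\tfrac{\lambda}{2}\|\mathbf{u}^{k+1}-\mathbf{u}^k\|^2.
\]
For $\mathbf{n}$, Proposition~\ref{prop:energy-descent-n} gives the gain $\frac{\mu-L_\phi}{2}\|\mathbf{n}^{k+1}-\mathbf{n}^k\|^2$. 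The $\mathbf{q}$-subproblem is separable with curvature $\alpha|[\mathbf{n}^{k+1}]_i|^2$ in each coordinate. On $\mathcal{S}_{1,+}^1$ we have $|\mathbf{n}_i|^2\ge 1/2$, because $r+z=1$ implies $r^2+z^2\ge1/2$; on $\mathcal{S}_{2,+}^1$ we have $|\mathbf{n}_i|=1$. So the $\mathbf{q}$-step is $\frac{\alpha}{2}$-strongly convex on the convex set $\mathcal{R}_\beta$, and it decreases the energy by at least $\frac{\alpha}{4}\|\mathbf{q}^{k+1}-\mathbf{q}^k\|^2$. Summing the three steps gives
\[
\Psi(z^k)-\Psi(z^{k+1})\ge\rho\|z^{k+1}-z^k\|^2,\qquad \rho=\min\{\tfrac{\lambda}{2},\tfrac{\mu-L_\phi}{2},\tfrac{\alpha}{4}\}>0,\quad z^k=(\mathbf{u}^k,\mathbf{n}^k,\mathbf{q}^k).
\]
Since $\Psi\ge0$, the series $\sum_k\|z^{k+1}-z^k\|^2$ converges.

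\textbf{Step 2: relative error.} The preceding lemma bounds a subgradient $\mathbf{g}^{k+1}\in\partial\Psi(z^{k+1})$ by $\hat C(\|\mathbf{q}^{k+1}-\mathbf{q}^k\|+\|\mathbf{n}^{k+1}-\mathbf{n}^k\|)\le \sqrt2\hat C\|z^{k+1}-z^k\|$. The constants there are uniform in $k$ because the iterates are bounded by the boundedness lemma. In particular $\|H^k\|\le\alpha C^2$, and the Lipschitz moduli of $\nabla_{\mathbf{u}}E_\alpha$ and $\nabla_{\mathbf{n}}E_\alpha$ in the variables $(\mathbf{n},\mathbf{q})$ are uniform on bounded sets, since $E_\alpha$ is polynomial in $(\mathbf{u},\mathbf{q},\mathbf{n})$ plus a smooth term. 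I also need $\partial\mathbb{I}_{\mathcal{S}_{j,+}}$ to be understood as the limiting normal cone, so that the optimality condition of the projection step is valid. That holds because $\mathbf{n}^{k+1}$ is a global minimizer of $Q$.

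\textbf{Step 3: KL property and finite length.} $E_\alpha$ is semialgebraic: it is built from polynomials and $\sqrt{\|\cdot\|^2+\epsilon}$. The sets $\mathcal{S}_{1,+}$, $\mathcal{S}_{2,+}$ and $\mathcal{R}_\beta$ are semialgebraic, for $j=1$ since $\sqrt{n_1^2+n_2^2}=1-n_3$ with $n_3\le1$ is a polynomial condition. Therefore $\Psi$ is a proper, lower semicontinuous, semialgebraic function and satisfies the KL property at every point. The iterates are bounded, so the set $\omega(z^0)$ of limit points is nonempty and compact. $\Psi$ is constant on $\omega(z^0)$: it is continuous on the closed feasible set, and $\Psi(z^k)$ is monotone and bounded below. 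By Steps 1--2, every limit point is critical, using closedness of the limiting subdifferential. Applying the uniformized KL inequality on a neighbourhood of $\omega(z^0)$ and the standard concavity estimate of \cite[Thm.~1]{bolte2014proximal} gives
\[
2\|z^{k+1}-z^k\|\le\|z^k-z^{k-1}\|+\tfrac{\sqrt2\hat C}{\rho}\big(\varphi(\Psi(z^k)-\Psi^*)-\varphi(\Psi(z^{k+1})-\Psi^*)\big).
\]
Telescoping yields finite length, which is item 1. Finite length makes $\{z^k\}$ Cauchy, so it converges to some $z^*\in\omega(z^0)$, which is critical. This is item 2.

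\textbf{Main obstacle.} The delicate point is Step 1 for $\mathbf{q}$, namely the uniform positive lower bound on $|[\mathbf{n}^{k+1}]_i|$ obtained from the geometry of the cone or hemisphere. A second delicate point is Step 2 near the apex of the cone, where $\mathcal{S}_{1,+}$ is not smooth. There I would rely on the limiting-subdifferential calculus (sum rule with a $C^1$ function), which holds without any smoothness of the constraint set.
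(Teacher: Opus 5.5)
Your proposal is correct and follows the same route as the paper, which invokes the KL framework of \cite{bolte2014proximal} on top of its sufficient-descent, boundedness and subgradient-bound lemmas. Your explicit bound $|[\mathbf{n}^{k+1}]_i|^2\ge 1/2$ on the cone, which gives the $\frac{\alpha}{4}\|\mathbf{q}^{k+1}-\mathbf{q}^k\|^2$ decrease for the $\mathbf{q}$-step, fills in a detail the paper only delegates to the earlier HALM analysis, where the direction field lies on the unit sphere.
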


\begin{remark}
We finally comment on the feasibility of the bilinear constraints in the penalized formulation. For the iterates generated by MMAMM, define
\[
\mathbf r_x^k=\mathbf D_x^+\mathbf u^k-\mathbf q^k\odot \mathbf n_1^k,\quad
\mathbf r_y^k=\mathbf D_y^+\mathbf u^k-\mathbf q^k\odot \mathbf n_2^k,\quad
\mathbf r_\beta^k=\boldsymbol\beta-\mathbf q^k\odot \mathbf n_3^k .
\]
These quantities measure the violation of the three equality constraints in the bilinear decomposition. Since the energy is nonincreasing and the penalty term is nonnegative, we have
\[
\frac{\alpha}{2}
\left(
\|\mathbf r_x^k\|^2
+
\|\mathbf r_y^k\|^2
+
\|\mathbf r_\beta^k\|^2
\right)
\leq
E_{\alpha,\mathbb I}(\mathbf u^k,\mathbf n^k,\mathbf q^k)
<
E_{\alpha,\mathbb I}(\mathbf u^0,\mathbf n^0,\mathbf q^0).
\]
Consequently,
\[
\left(
\|\mathbf r_x^k\|^2
+
\|\mathbf r_y^k\|^2
+
\|\mathbf r_\beta^k\|^2
\right)^{1/2}
<
\sqrt{
\frac{
2E_{\alpha,\mathbb I}(\mathbf u^0,\mathbf n^0,\mathbf q^0)
}{\alpha}
}.
\]
Thus, for a fixed penalty parameter $\alpha$, the feasibility violation is uniformly controlled along the whole sequence. In particular, increasing $\alpha$ enforces the bilinear constraints more strongly.
\end{remark}

%% file: sections/sec-experiments.tex
\section{Numerical experiments} \label{sec:experiments}
In this section, we present a series of numerical experiments to evaluate the effectiveness and efficiency of the proposed MMAMM algorithms (Algorithm \ref{algorithm}) for solving the TSGV-based bilinear decomposition model \eqref{unconstrained_discrete_bilinear_model}. We first compare the proposed MMAMM algorithms with the IOS algorithm for the TSGV model (IOS--TSGV) in terms of edge and corner preservation and the convergence behavior of the relative error. Subsequently, we compare the proposed method with several representative variational denoising approaches, including IOS--TSGV and the hybrid alternating minimization algorithm for the Euler elastica model (HALM--EE). Finally, we apply the proposed method to the NLOS imaging problem to further demonstrate its applicability and effectiveness in practical imaging tasks. All image denoising experiments are performed on a laptop running Windows 11 (64-bit), 32GB of RAM, an Intel Core i9-14900HX (2.20 GHz), and MATLAB R2023a. For the NLOS imaging problem, all experiments are further accelerated using an NVIDIA GeForce RTX 5060 Laptop GPU.

\subsection{Experiment settings}

\begin{figure}[tbp]
	\centering
	\subfigure[\shortstack{TestImg1\\($256\times256$)}]{
    \includegraphics[scale=0.45]{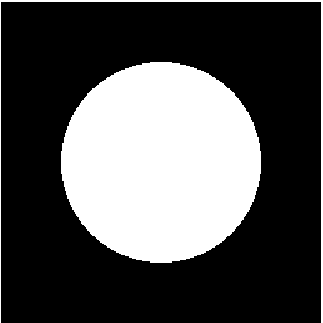}
	}
	\subfigure[\shortstack{TestImg2\\($256\times256$)}]{
    \includegraphics[scale=0.45]{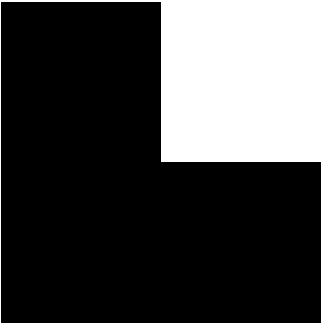}
	}
	\subfigure[\shortstack{TestImg3\\($256\times256$)}]{
    \includegraphics[scale=0.45]{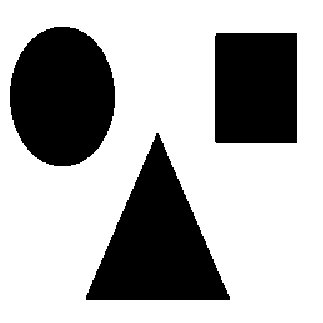}
	}\\
    \subfigure[\shortstack{TestImg4\\($60\times60$)}]{
    \includegraphics[scale=1.9]{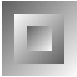}
	}
	\subfigure[\shortstack{TestImg5\\($256\times256$)}]{
    \includegraphics[scale=0.45]{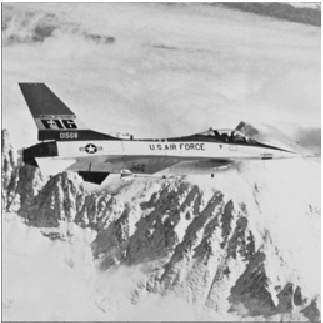}
	}
	\subfigure[\shortstack{TestImg6\\($256\times256$)}]{
    \includegraphics[scale=0.45]{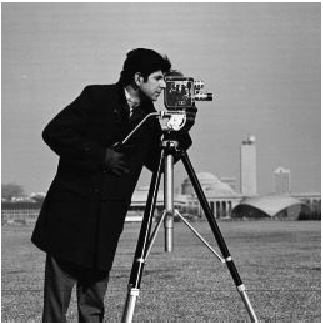}
    }
    \subfigure[\shortstack{TestImg7\\($321\times481$)}]{
    \includegraphics[scale=0.36]{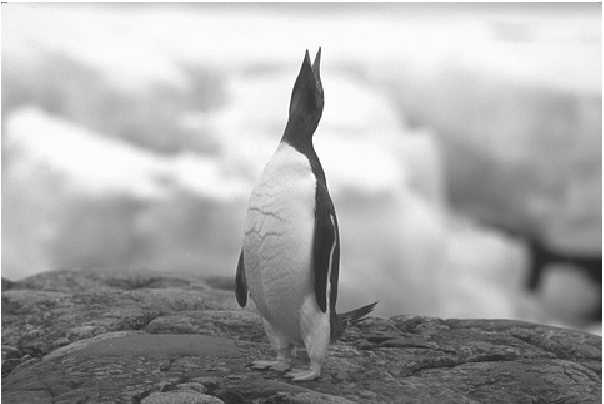}
    }\\
    \subfigure[\shortstack{TestImg8\\($60\times60$)}]{
    \includegraphics[scale=1.9]{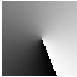}
	}
	\subfigure[\shortstack{TestImg9\\($256\times256$)}]{
    \includegraphics[scale=0.45]{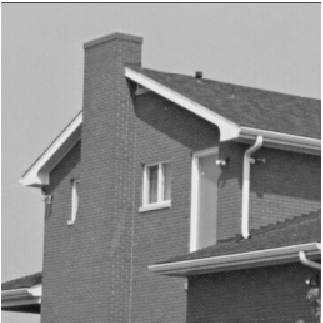}
	}
	\subfigure[\shortstack{TestImg10\\($256\times256$)}]{
    \includegraphics[scale=0.45]{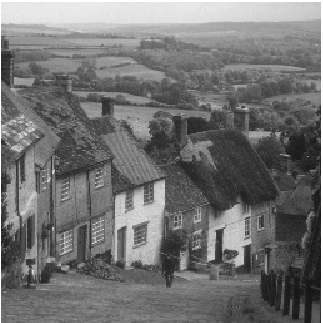}
    }
    \subfigure[\shortstack{TestImg11\\($216\times332$)}]{
    \includegraphics[scale=0.535]{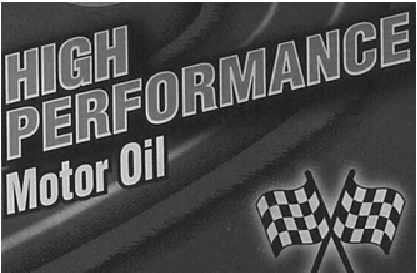}
    }
\caption{Original test images used in the Gaussian denoising experiments. In numerical tests, Gaussian noise with mean $0$ and variance $0.0025$ is added to TestImg$1$--TestImg$7$, while Gaussian noise with mean $0$ and variance $0.005$ is added to TestImg$8$--TestImg$11$.} \label{figure_denoising_image}
\end{figure}

Different parameter settings are adopted for different scaling functions $\psi$ in Algorithm \ref{algorithm}. In particular, we set $\epsilon = 0.001$ for Algorithm \ref{algorithm} with $\psi_1$ (denoted as MMAMM--$\psi_1$); we set $\epsilon = 0.01$ for Algorithm \ref{algorithm} with $\psi_2$ (denoted as MMAMM--$\psi_2$). Moreover, we set $b = 0.001$ and choose $\mu = \frac{8}{\sqrt{\epsilon}}+0.1$, so that the condition $\mu>L_\phi$ required in the convergence analysis is satisfied. The algorithm is terminated once the relative error
\begin{equation} \label{relative_error}
\frac{\|u^k-u^{k-1}\|}{\|u^{k-1}\|}\leq \mathrm{Tol}
\end{equation}
with $\mathrm{Tol}=1\times 10^{-4}$ is satisfied, or when the maximum number of iterations $\mathrm{Maxit}=500$ is reached. The parameter settings for all competing methods are set to the default values provided in their original implementations, and all algorithmic parameters are further tuned to achieve the best PSNR performance. The intensity values of both the observed image $f$ and the reconstructed image $u$ are normalized to the interval $[0,1]$. Figure \ref{figure_denoising_image} shows the test images used in the denoising experiments, including binary images, synthetic images, and real images with different resolutions, where the corresponding resolutions are displayed below each image.

\begin{figure}[tbp]
    \centering
    \subfigure[MMAMM--$\psi_1$ on TestImg4]{
    \includegraphics[scale=0.38]{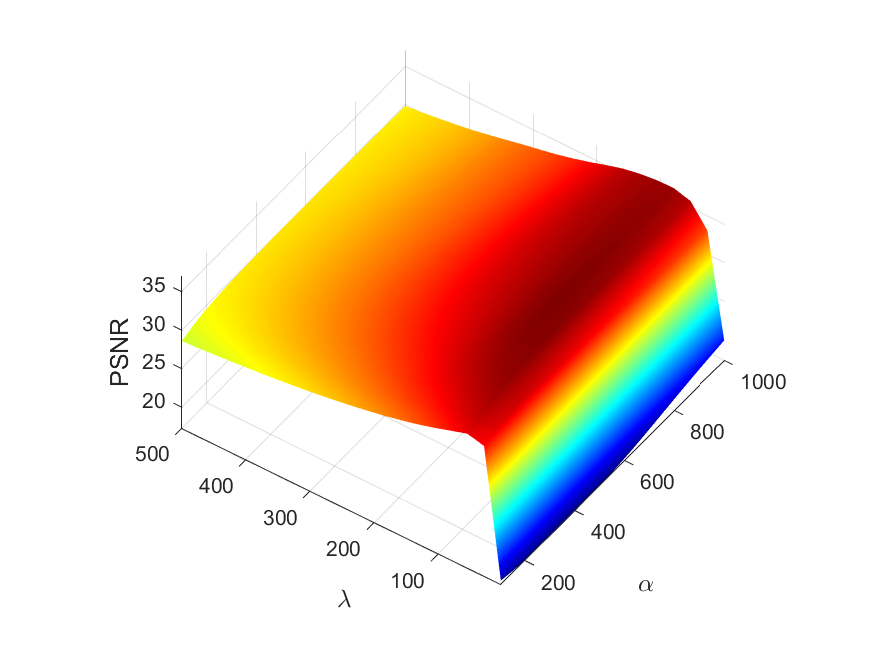}
    }
    \subfigure[MMAMM--$\psi_1$ on TestImg8]{
    \includegraphics[scale=0.38]{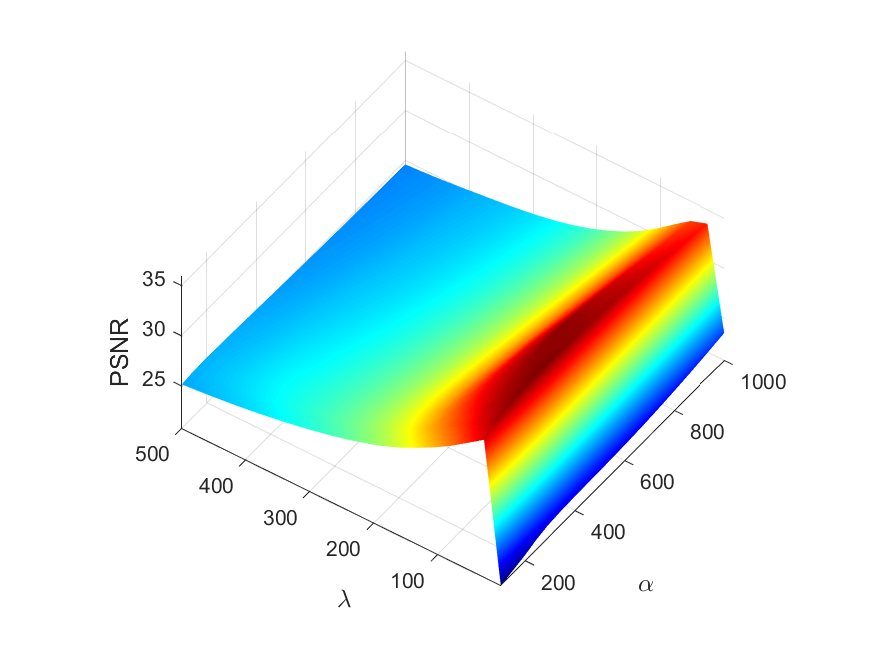}
    }\\
    \subfigure[MMAMM--$\psi_2$ on TestImg4]{
    \includegraphics[scale=0.38]{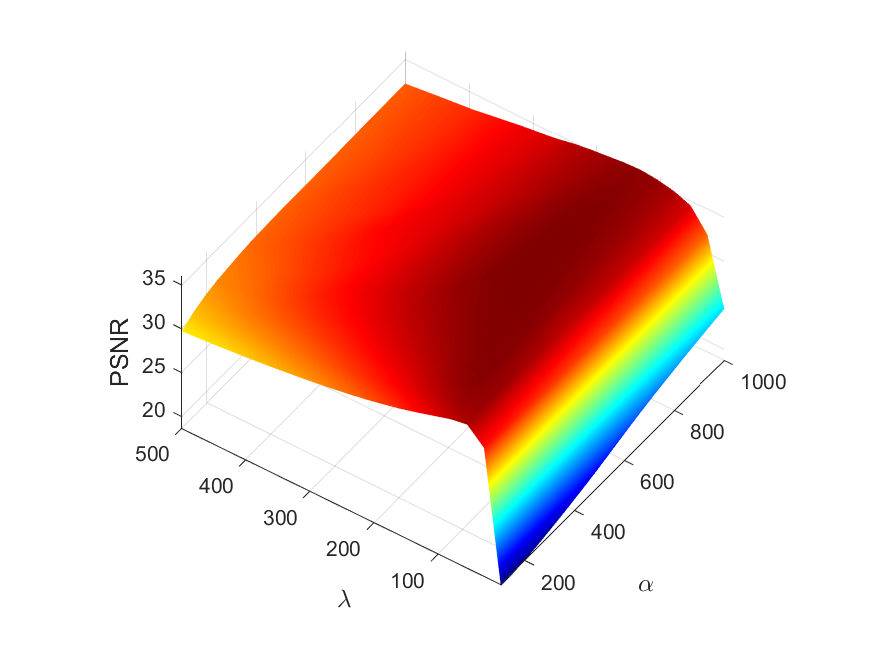}
    }
    \subfigure[MMAMM--$\psi_2$ on TestImg8]{
    \includegraphics[scale=0.38]{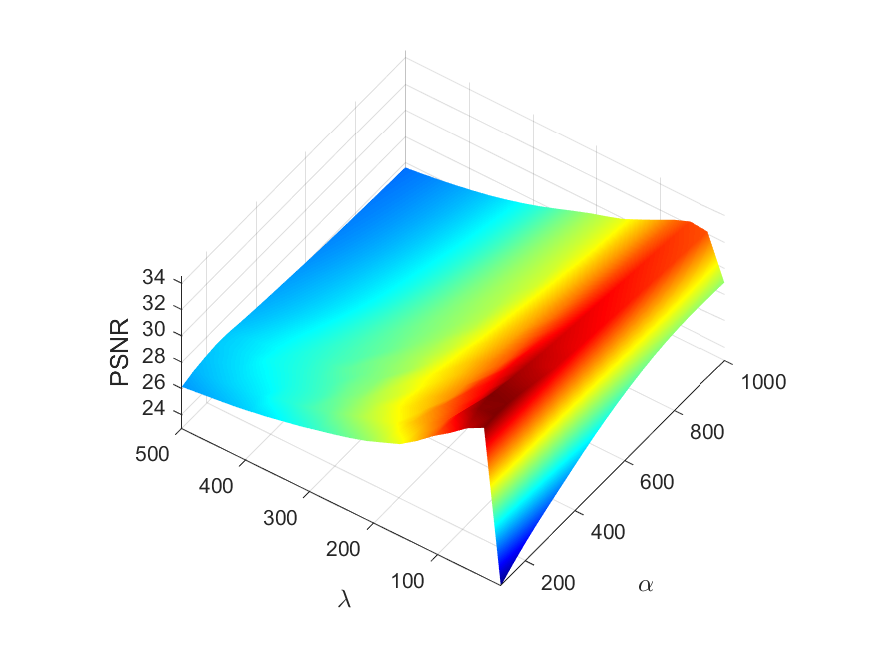}
    }
\caption{Performance of the proposed MMAMM algorithms for $\lambda \in [1,500]$ and $\alpha \in [100,1000]$ on TestImg$4$ and TestImg$8$ with different noise levels.} \label{figure_lambda_alpha}
\end{figure}

In the proposed MMAMM algorithms, two parameters must be manually adjusted: the model parameter $\lambda$ and the penalty parameter $\alpha$. In the following, we numerically investigate suitable parameter ranges for these two parameters using TestImg4 and TestImg8 as representative examples. Figure \ref{figure_lambda_alpha} presents the PSNR values achieved by MMAMM--$\psi_1$ and MMAMM--$\psi_2$ for $\lambda \in [1,500]$ and $\alpha \in [100,1000]$. Empirically, for most Gaussian denoising experiments, the proposed MMAMM algorithms perform well for $\lambda\in[10,300]$ and $\alpha\in[100,800]$.

\subsection{Advantages of MMAMM over the IOS algorithm}

In this subsection, we compare the proposed MMAMM algorithm with the IOS algorithm for solving the TSGV model. Since the TSGV model is capable of preserving edge and corner contrasts, we first use TestImg1--TestImg3 to evaluate the two algorithms' ability to preserve these structures. Figure \ref{figure_edge_corner_preservation} presents the noisy images and the corresponding residual images, where the residual images are computed by $\mathbf{f}-\mathbf{u}+0.4$. From the enlarged regions in Figure \ref{figure_edge_corner_preservation}, it can be observed that the residual images obtained by MMAMM--$\psi_1$ contain less edge information than those obtained by IOS--TSGV and MMAMM--$\psi_2$, indicating that MMAMM--$\psi_1$ can better exploit the edge- and corner-preserving capability of the TSGV model. Table \ref{table_edge_corner_preservation} reports the corresponding quantitative results, including the PSNR values, the number of iterations, the running times, and the average time per iteration (Time/$\#$Iter) of the algorithms. It shows that although the MMAMM algorithms require more iterations and longer overall runtimes, the average computational time per iteration is significantly smaller than that of IOS--TSGV (approximately $1/6$). Moreover, the MMAMM algorithms achieve substantially higher PSNR values than IOS--TSGV. Compared with MMAMM--$\psi_1$, MMAMM--$\psi_2$ attains higher PSNR values for the restored images. Taken together with the previous observations, these results suggest that for binary images with simple edge structures such as TestImg1--TestImg3, MMAMM--$\psi_1$ provides better edge preservation, whereas MMAMM--$\psi_2$ achieves the highest PSNR.

\begin{figure}[t]
    \centering
    \subfigure[Noisy image]{
        \begin{minipage}[b]{0.17\linewidth}
        \centering
        \includegraphics[scale=0.407]{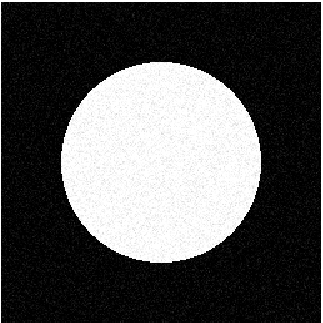}\\
        \vspace{0.02cm}
        \includegraphics[scale=0.407]{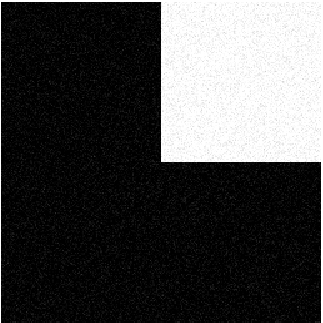}\\
        \vspace{0.02cm}
        \includegraphics[scale=0.407]{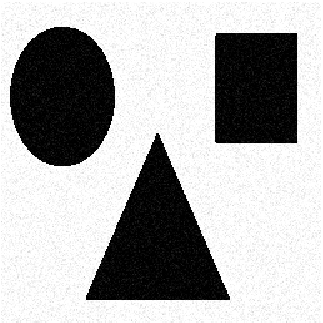}\\
        \vspace{0.02cm}
        \end{minipage}
        }
    \subfigure[Noise]{
        \begin{minipage}[b]{0.17\linewidth}
        \centering
        \includegraphics[scale=0.2]{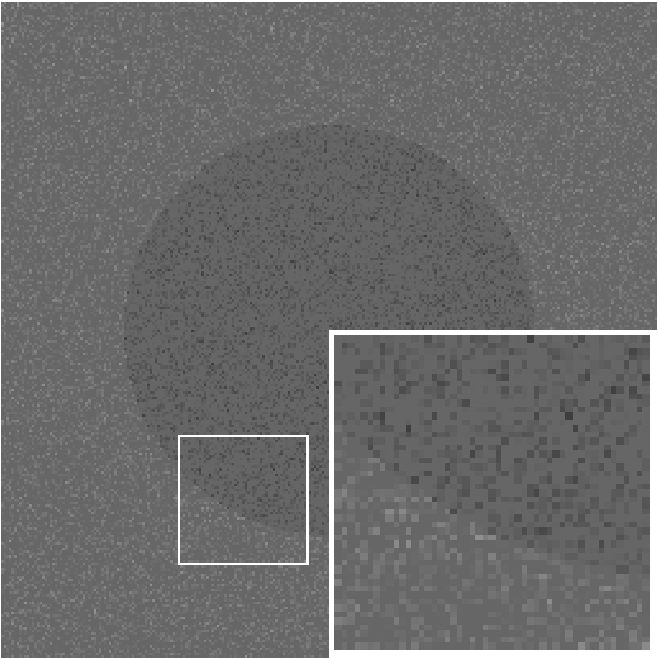}\\
        \vspace{0.02cm}
        \includegraphics[scale=0.2]{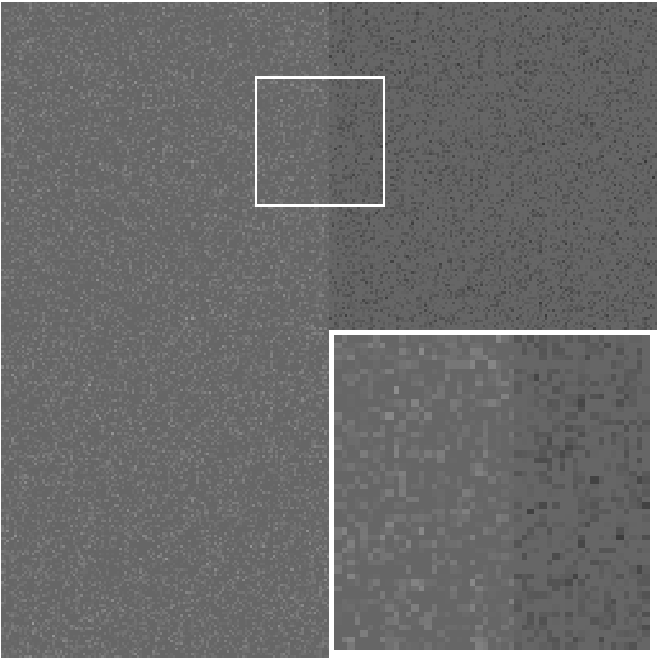}\\
        \vspace{0.02cm}
        \includegraphics[scale=0.2]{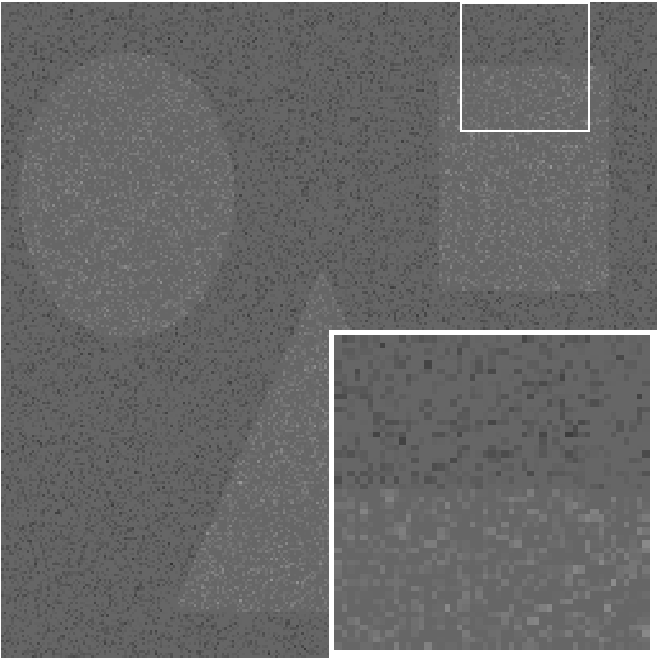}\\
        \vspace{0.02cm}
        \end{minipage}
        }
    \subfigure[IOS--TSGV]{
        \begin{minipage}[b]{0.17\linewidth}
        \centering
        \includegraphics[scale=0.2]{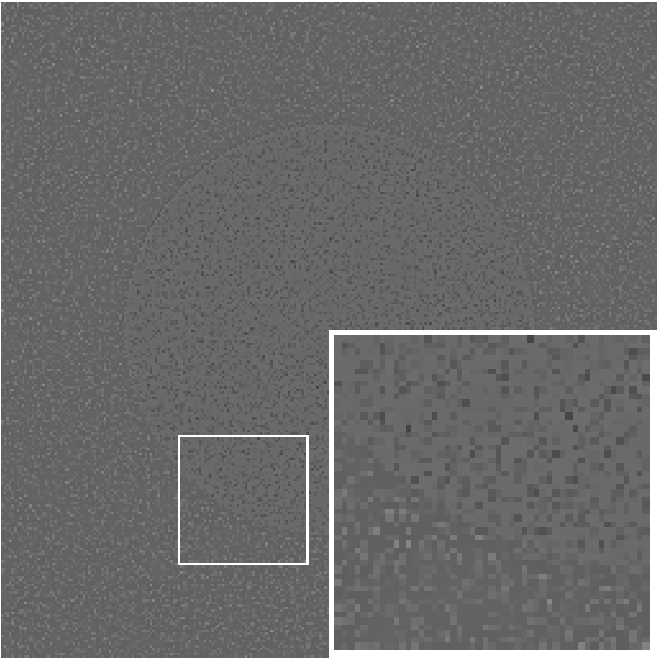}\\
        \vspace{0.02cm}
        \includegraphics[scale=0.2]{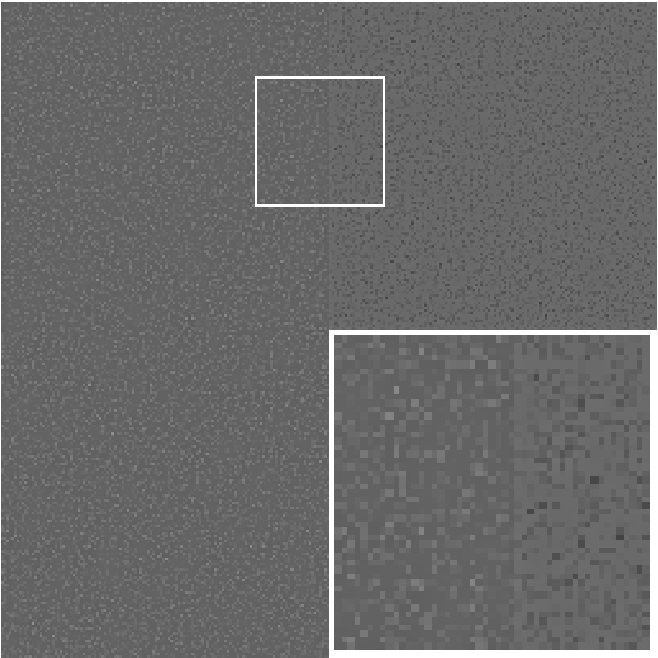}\\
        \vspace{0.02cm}
        \includegraphics[scale=0.2]{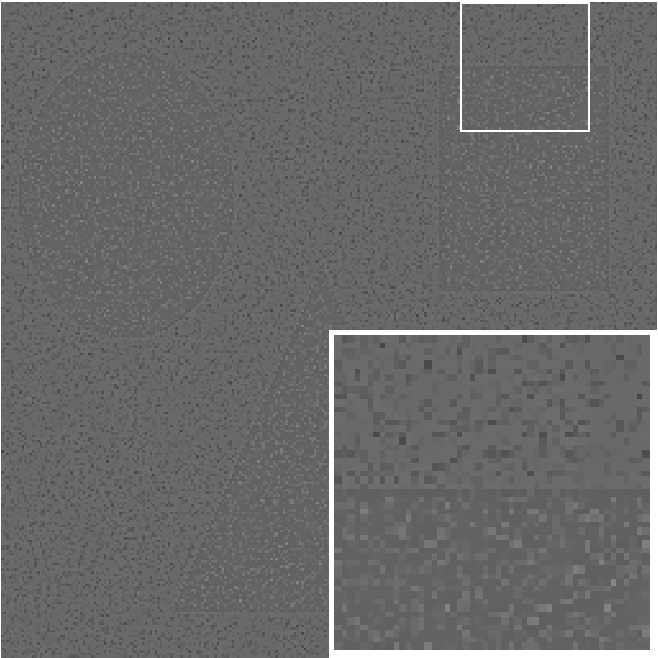}\\
        \vspace{0.02cm}
        \end{minipage}
        }
        \subfigure[MMAMM--$\psi_1$]{
        \begin{minipage}[b]{0.17\linewidth}
        \centering
        \includegraphics[scale=0.2]{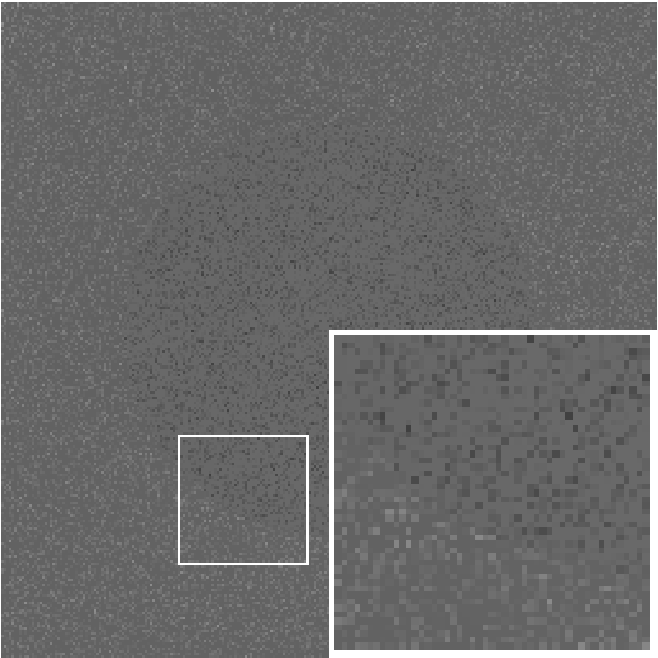}\\
        \vspace{0.02cm}
        \includegraphics[scale=0.2]{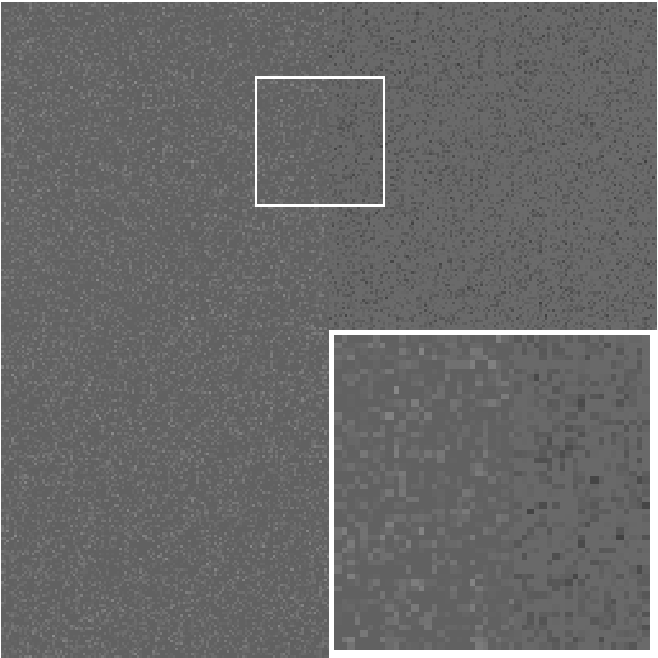}\\
        \vspace{0.02cm}
        \includegraphics[scale=0.2]{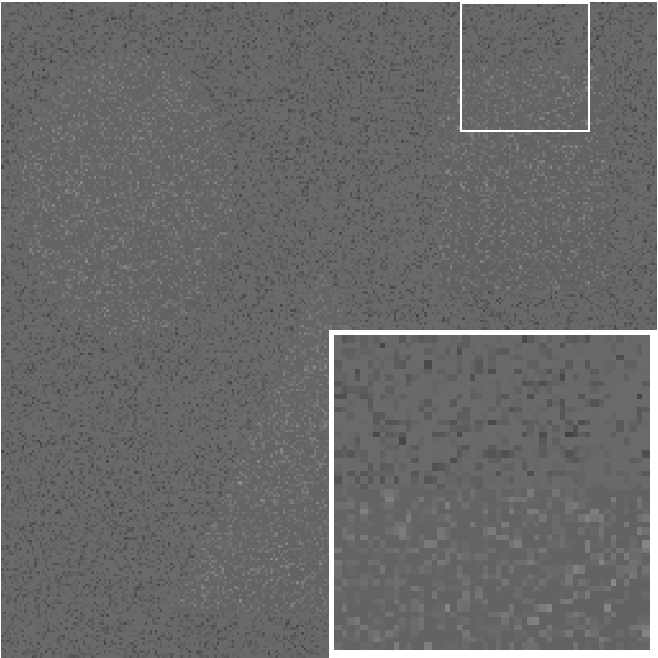}\\
        \vspace{0.02cm}
        \end{minipage}
        }
        \subfigure[MMAMM--$\psi_2$]{
        \begin{minipage}[b]{0.17\linewidth}
        \centering
        \includegraphics[scale=0.2]{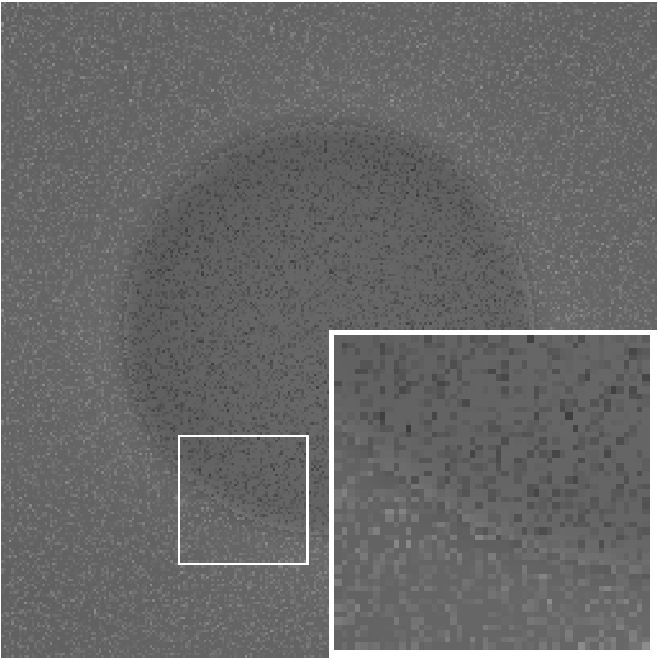}\\
        \vspace{0.02cm}
        \includegraphics[scale=0.2]{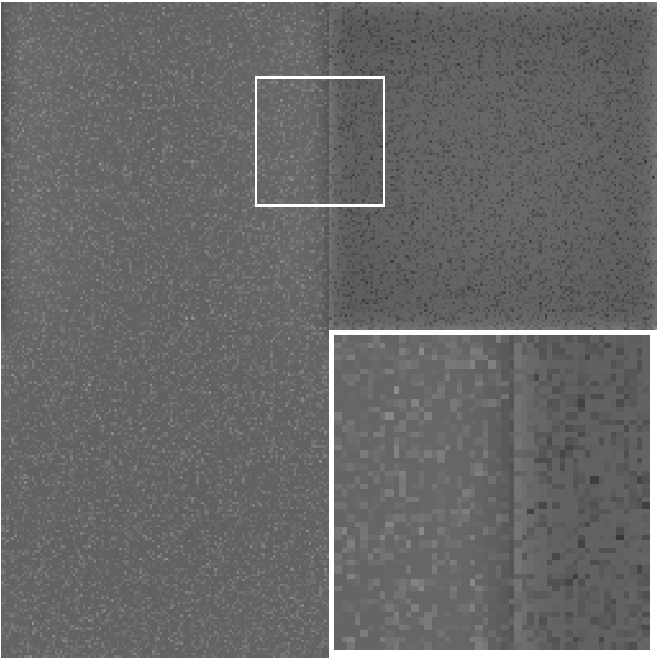}\\
        \vspace{0.02cm}
        \includegraphics[scale=0.2]{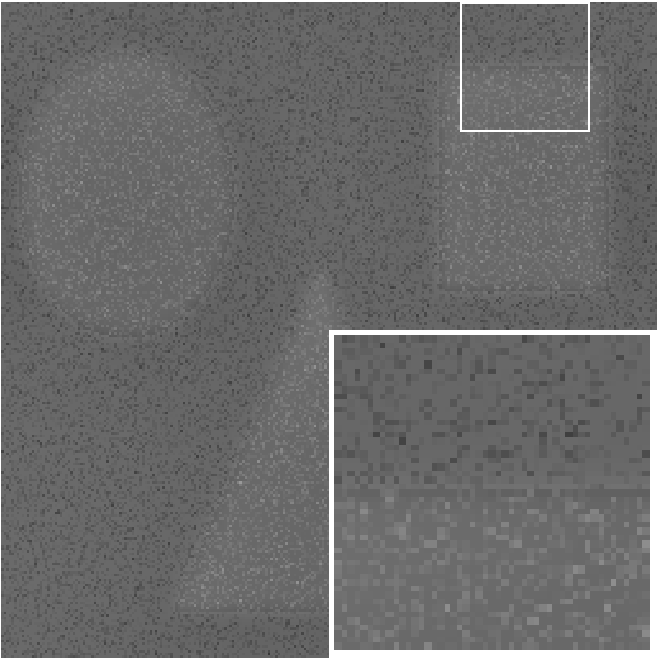}\\
        \vspace{0.02cm}
        \end{minipage}
        }
\caption{Residual images for Gaussian denoising on TestImg$1$--TestImg$3$. (a) Noisy images; (b) residual images of the noisy observations; (c) residual images obtained by IOS--TSGV; (d) residual images obtained by MMAMM--$\psi_1$; (e) residual images obtained by MMAMM--$\psi_2$. The residual images are computed by $\mathbf{f}-\mathbf{u}+0.4$.} \label{figure_edge_corner_preservation}
\end{figure}
\begin{table}[tbp]
\centering
\small{
\caption{Quantitative results of IOS--TSGV and the proposed MMAMM algorithms, including the PSNR values, number of iterations ($\#$Iter), running times (Time) and average time
per iteration (Time/$\#$Iter).} \label{table_edge_corner_preservation}
\setlength{\tabcolsep}{0.8mm}{
\vspace{0.1cm}
\begin{tabular}{ccccccccccc}
\toprule
                & \multicolumn{3}{c}{TestImg1} & \multicolumn{3}{c}{TestImg2} & \multicolumn{3}{c}{TestImg3} & \multirow{2}{*}{Time/\#Iter} \\
                \cmidrule(lr){2-4} \cmidrule(lr){5-7} \cmidrule(lr){8-10}
                & PSNR    & $\#$Iter   & Time  & PSNR    & $\#$Iter   & Time  & PSNR    & $\#$Iter   & Time  &                              \\ \midrule
IOS--TSGV       & 33.36   & 54         & 0.5   & 33.45   & 27         & 0.3   & 33.18   & 54         & 0.5   & 0.029                        \\
MMAMM--$\psi_1$ & 36.11   & 129        & 0.6   & 34.41   & 129        & 0.6   & 36.38   & 111        & 0.5   & 0.005                        \\
MMAMM--$\psi_2$ & 37.85   & 186        & 0.9   & 35.78   & 258        & 1.2   & 37.95   & 142        & 0.7   & 0.005                        \\ \bottomrule
\end{tabular}}}
\end{table}

We further compare the denoising performance of different algorithms in smooth regions. The restored image surfaces are shown in Figure~\ref{figure_surface}. IOS--TSGV removes a large amount of noise, but its reconstructed surfaces still exhibit visible local fluctuations. In comparison, the surfaces obtained by the proposed MMAMM methods are smoother and more regular, indicating that the proposed method can effectively suppress noise. In particular, MMAMM--$\psi_1$ yields a visually smoother surface and better removes small oscillatory artifacts. For MMAMM--$\psi_2$, a few isolated points still contain visible residual noise, suggesting that the sphere-constrained model may be slightly less effective in suppressing certain local perturbations in these examples.
\begin{figure}[t]
    \centering
    \subfigure[Noisy image]{
        \begin{minipage}[b]{0.2\linewidth}
        \centering
        \includegraphics[scale=0.17]{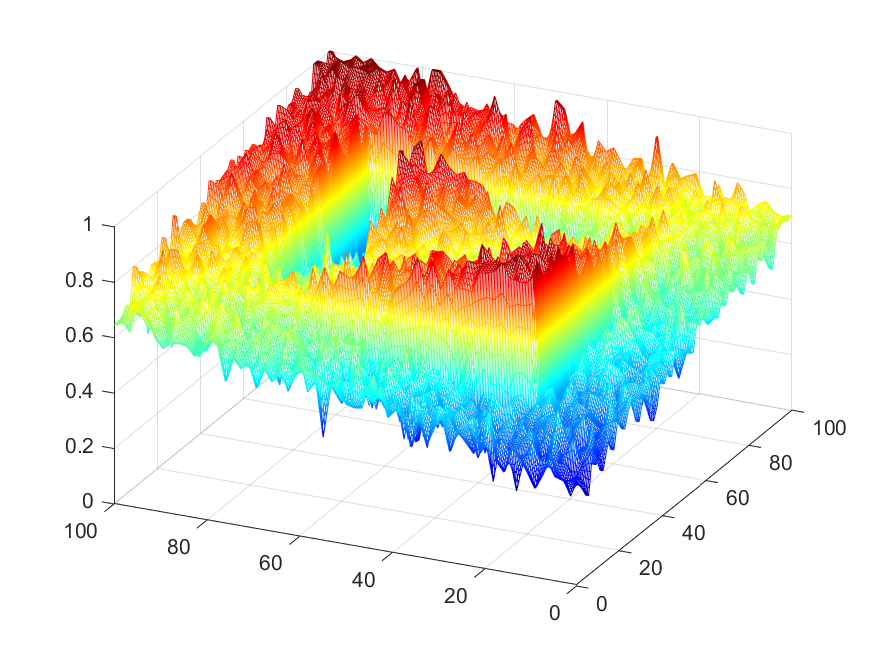}\\
        \vspace{0.02cm}
        \includegraphics[scale=0.17]{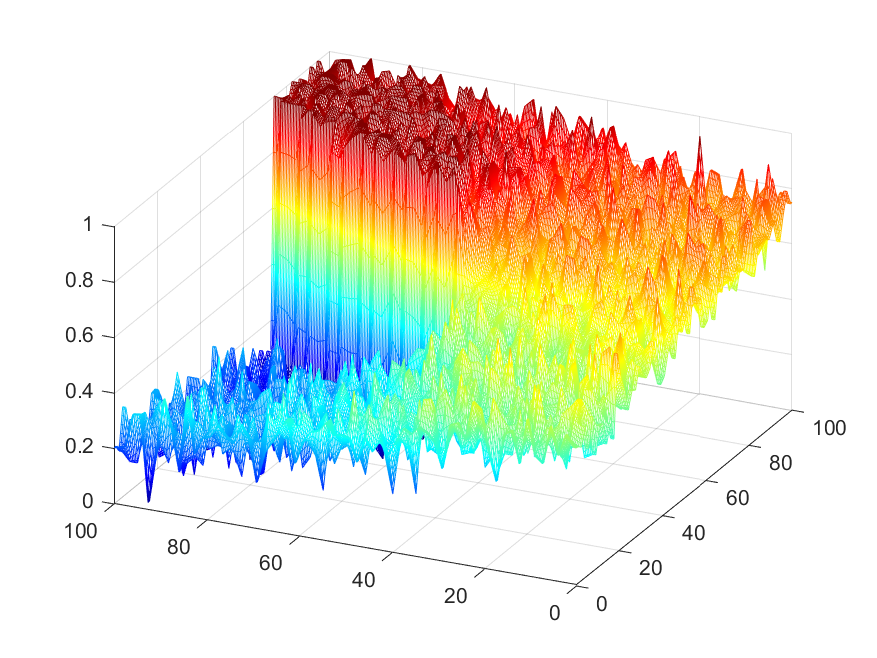}\\
        \vspace{0.02cm}
        \end{minipage}
        }
    \subfigure[IOS--TSGV]{
        \begin{minipage}[b]{0.2\linewidth}
        \centering
        \includegraphics[scale=0.17]{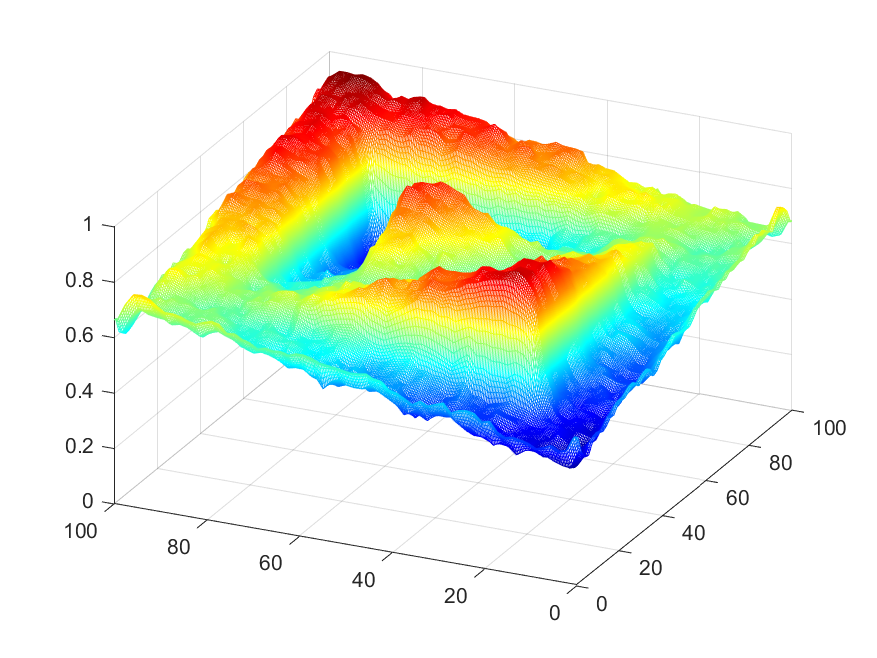}\\
        \vspace{0.02cm}
        \includegraphics[scale=0.17]{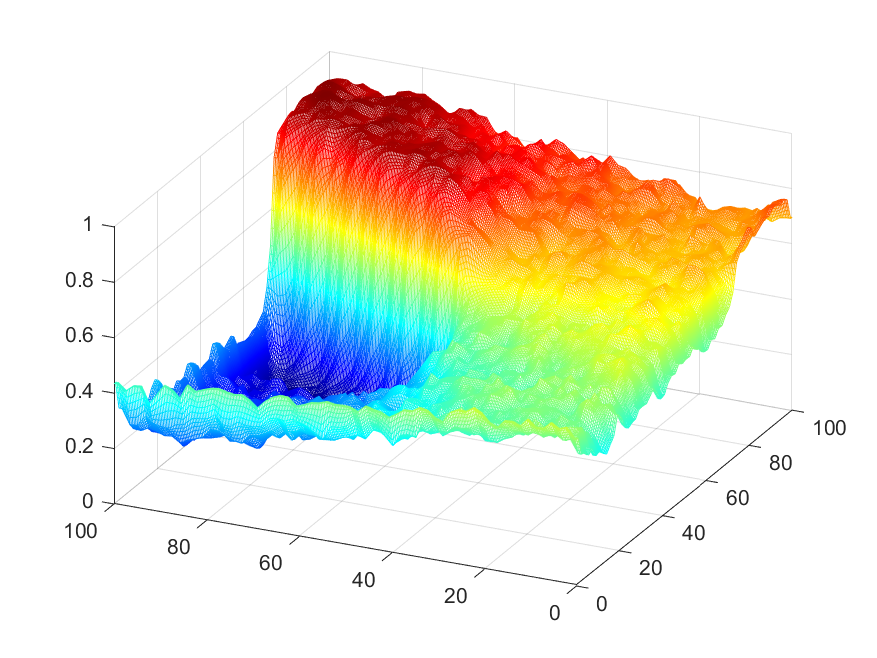}\\
        \vspace{0.02cm}
        \end{minipage}
        }
    \subfigure[MMAMM--$\psi_1$]{
        \begin{minipage}[b]{0.2\linewidth}
        \centering
        \includegraphics[scale=0.17]{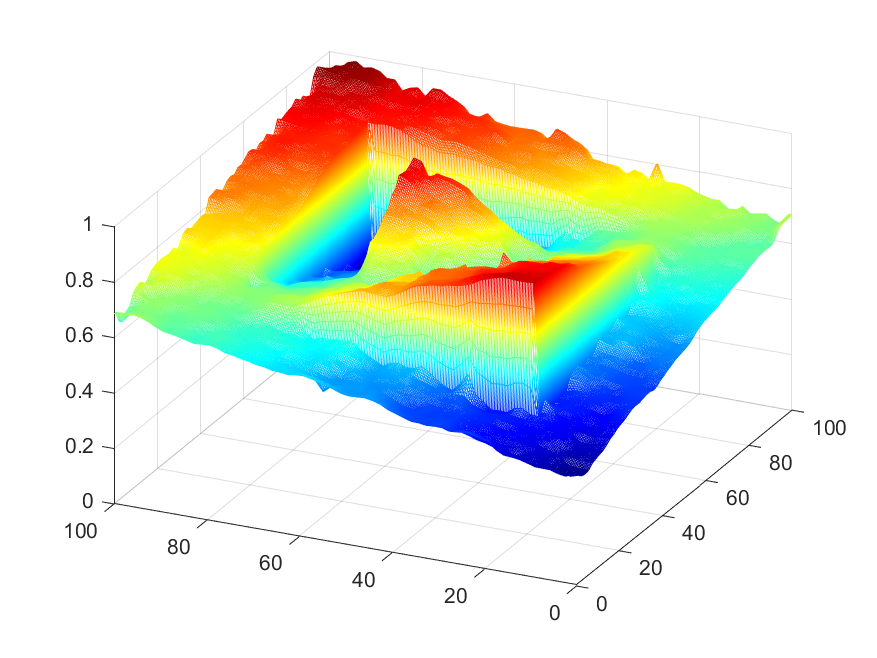}\\
        \vspace{0.02cm}
        \includegraphics[scale=0.17]{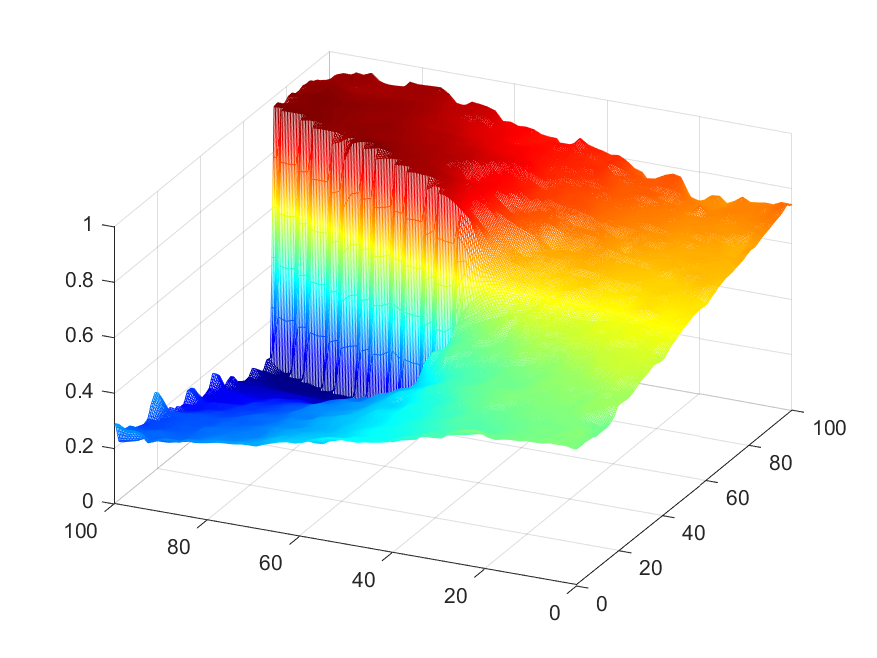}\\
        \vspace{0.02cm}
        \end{minipage}
        }
    \subfigure[MMAMM--$\psi_2$]{
        \begin{minipage}[b]{0.2\linewidth}
        \centering
        \includegraphics[scale=0.17]{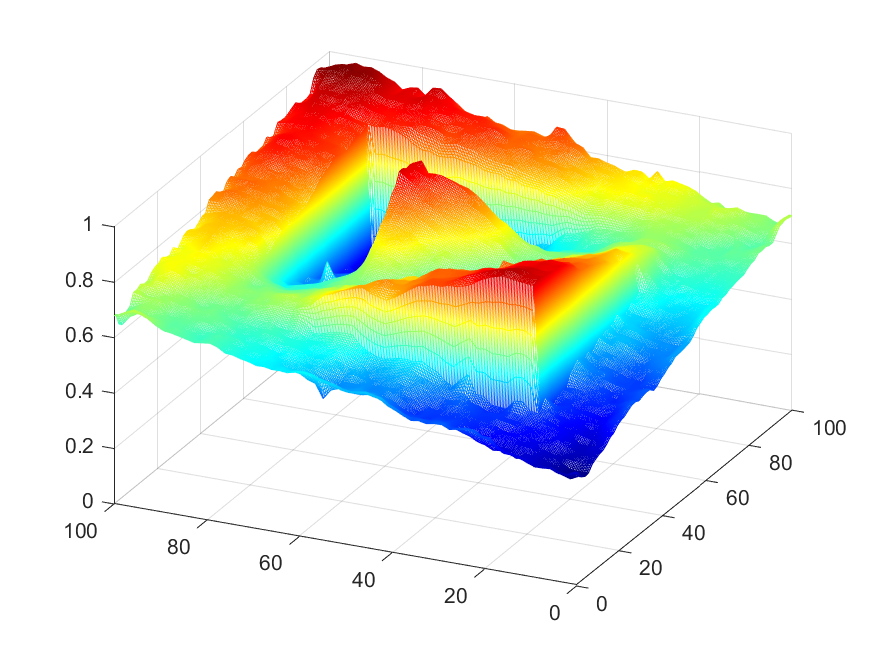}\\
        \vspace{0.02cm}
        \includegraphics[scale=0.17]{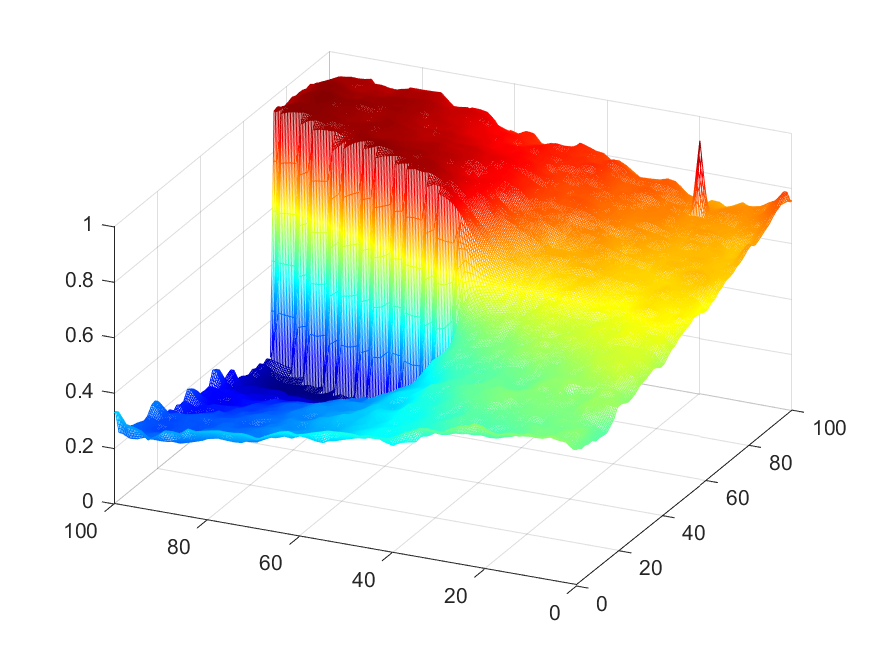}\\
        \vspace{0.02cm}
        \end{minipage}
        }
\caption{Image surface comparison for Gaussian denoising on TestImg$4$ and TestImg$8$. (a) Noisy image surfaces; (b) restored image surfaces by IOS--TSGV; (c) restored image surfaces by MMAMM--$\psi_1$; (d) restored image surfaces by MMAMM--$\psi_2$.} \label{figure_surface}
\end{figure}
\begin{figure}[tbp]
    \centering
    \subfigure[TestImg4]{
    \includegraphics[scale=0.407]{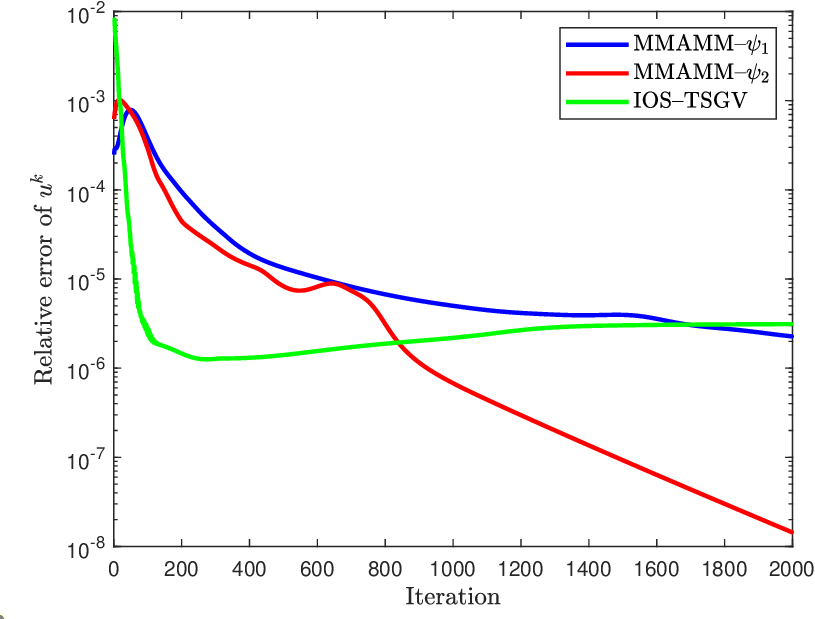}
    }
    \subfigure[TestImg8]{
    \includegraphics[scale=0.407]{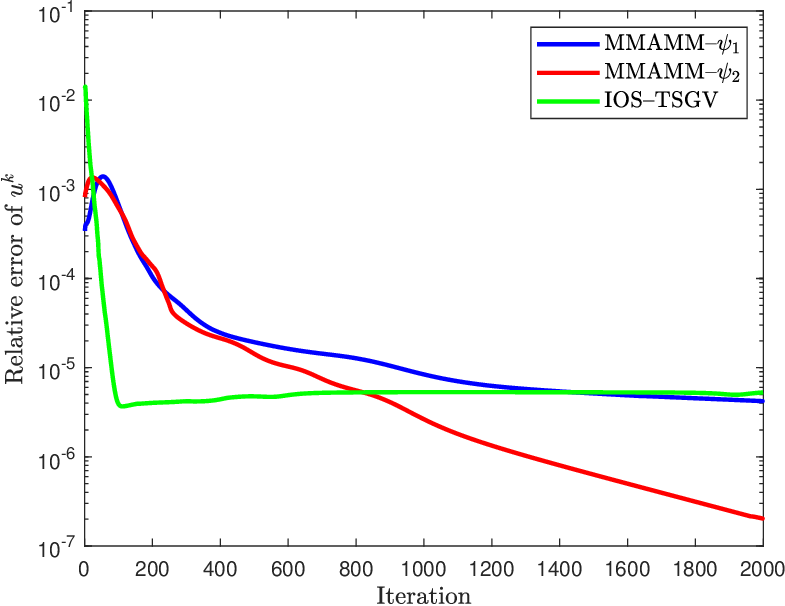}
    }
\caption{Evolution of the relative error of $\mathbf{u}^k$ for IOS--TSGV and the proposed MMAMM algorithms on TestImg$4$ and TestImg$8$ with different noise levels.} \label{figure_relative_error}
\end{figure}

Figure \ref{figure_relative_error} illustrates the evolution of the relative error of $\mathbf{u}^k$ for IOS--TSGV and the proposed MMAMM algorithms on TestImg4 and TestImg8. All algorithms are performed for 2000 iterations. The curves show that all the tested algorithms converge stably. Owing to the acceleration strategy, IOS--TSGV converges faster than the MMAMM algorithms. However, its final convergence value is larger than that of the proposed MMAMM methods.

\subsection{Comparison of state-of-the-art methods}
Table \ref{table_comparison} reports the quantitative comparisons among IOS--TSGV, HALM--EE, and the proposed MMAMM algorithms in terms of PSNR, SSIM, number of iterations, running time, and average time per iteration. The results in the table show that when the noise variance is $0.0025$, the proposed MMAMM--$\psi_1$ delivers competitive and often superior restoration quality compared with IOS--TSGV and HALM--EE in terms of PSNR and SSIM. When the noise variance increases to $0.005$, MMAMM--$\psi_1$ consistently yields the best PSNR and SSIM values, indicating improved robustness in the high-noise situation. Although the proposed MMAMM algorithms generally require more iterations than IOS--TSGV, their average time per iteration is lower, as shown by the Time/$\#$Iter statistics, indicating that each MMAMM iteration is computationally efficient. Compared with HALM--EE, the proposed methods not only achieve higher PSNR and SSIM values but also require significantly less runtime in most experiments. Figure \ref{figure_comparison} presents the restored images and the corresponding residual images for Gaussian denoising on TestImg5 and TestImg11. From the residual images, it is clear that the proposed MMAMM algorithms for the TSGV-based model preserve image edges more effectively than IOS--TSGV and HALM--EE.

\begin{table}[htbp]
\centering
\caption{Quantitative results of IOS--TSGV, HALM--EE and the proposed MMAMM algorithms, including the PSNR values, SSIM values, number of iterations ($\#$Iter), running times (Time) and average time per iteration (Time/$\#$Iter).}
\label{table_comparison}
\scriptsize{
\setlength{\tabcolsep}{0.1mm}
\begin{tabular}{l cccc cccc cccc cccc}
\toprule
\multirow{2}{*}{Image} & \multicolumn{4}{c}{IOS--TSGV} & \multicolumn{4}{c}{HALM--EE} & \multicolumn{4}{c}{MMAMM--$\psi_1$} & \multicolumn{4}{c}{MMAMM--$\psi_2$} \\
\cmidrule(lr){2-5} \cmidrule(lr){6-9} \cmidrule(lr){10-13} \cmidrule(lr){14-17}
 & PSNR & SSIM & $\#$Iter & Time & PSNR & SSIM & $\#$Iter & Time & PSNR & SSIM & $\#$Iter & Time & PSNR & SSIM & $\#$Iter & Time \\
\midrule
\multicolumn{17}{c}{Noise Level: 0.0025} \\
\midrule
TestImg4 & 35.13 & 0.9293 & 17 & 0.1 & 36.35 & 0.9431 & 500 & 0.3 & \textbf{36.87} & \textbf{0.9512} & 197 & 0.1 & 35.84 & 0.9250 & 151 & 0.1 \\
TestImg5 & 32.29 & \textbf{0.9164} & 26 & 0.2 & 31.70 & 0.9035 & 447 & 4.2 & \textbf{32.30} & 0.9158 & 240 & 1.1 & 32.02 & 0.9019 & 177 & 0.8 \\
TestImg6 & 31.60 & 0.8844 & 42 & 0.4 & 31.36 & 0.8819 & 500 & 6.3 & \textbf{31.61} & \textbf{0.8867} & 304 & 1.4 & 31.26 & 0.8591 & 237 & 1.1 \\
TestImg7 & \textbf{34.23} & \textbf{0.9061} & 22 & 0.9 & 33.62 & 0.8858 & 144 & 4.5 & 34.09 & 0.8998 & 234 & 4.8 & 33.78 & 0.8871 & 184 & 3.6 \\
\midrule
\multicolumn{17}{c}{Noise Level: 0.005} \\
\midrule
TestImg8 & 33.73 & 0.9000 & 42 & 0.1 & 34.17 & 0.9200 & 500 & 0.2 & \textbf{35.74} & \textbf{0.9498} & 204 & 0.1 & 34.65 & 0.9120 & 222 & 0.1 \\
TestImg9 & 32.05 & 0.8458 & 20 & 0.2 & 31.69 & 0.8514 & 500 & 6.4 & \textbf{32.54} & \textbf{0.8536} & 300 & 1.4 & 31.80 & 0.8302 & 223 & 1.0 \\
TestImg10 & 28.40 & 0.7749 & 31 & 0.3 & 28.28 & 0.7687 & 500 & 4.8 & \textbf{28.44} & \textbf{0.7753} & 403 & 1.9 & 28.14 & 0.7619 & 339 & 1.6 \\
TestImg11 & 29.87 & 0.8879 & 19 & 0.2 & 29.20 & 0.8716 & 500 & 5.4 & \textbf{30.42} & \textbf{0.8953} & 355 & 2.0 & 29.98 & 0.8780 & 287 & 1.6 \\
\midrule
Time/$\#$Iter & \multicolumn{4}{c}{0.012} & \multicolumn{4}{c}{0.011} & \multicolumn{4}{c}{0.006} & \multicolumn{4}{c}{0.006} \\
\bottomrule
\end{tabular}
}
\end{table}

\begin{figure}[t]
    \centering
    \subfigure[Noisy image]{
        \begin{minipage}[b]{0.17\linewidth}
        \centering
        \includegraphics[scale=0.4]{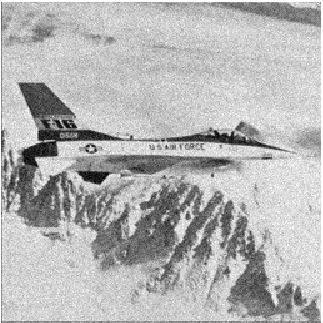}\\
        \vspace{0.02cm}
        \includegraphics[scale=0.2]{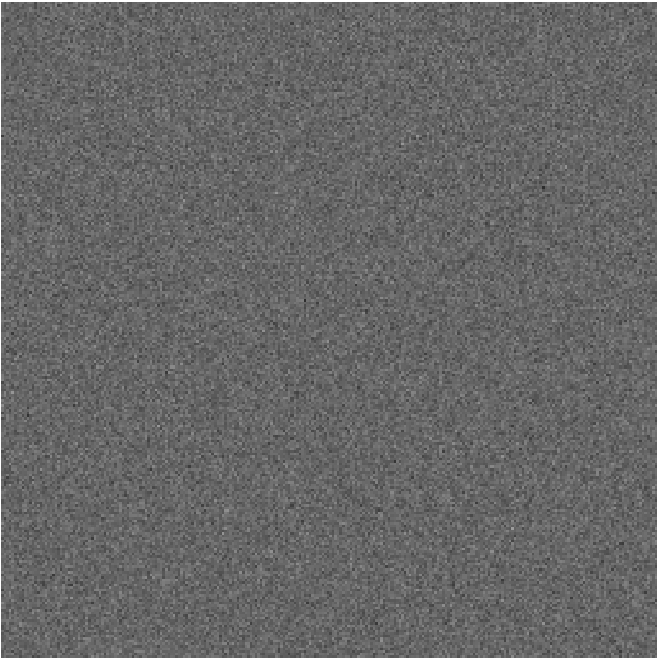}\\
        \vspace{0.02cm}
        \includegraphics[scale=0.315]{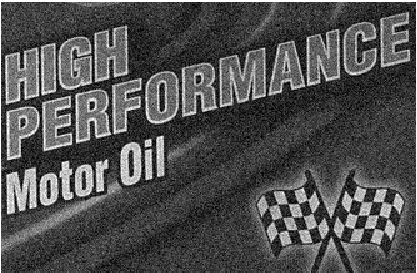}\\
        \vspace{0.02cm}
        \includegraphics[scale=0.15]{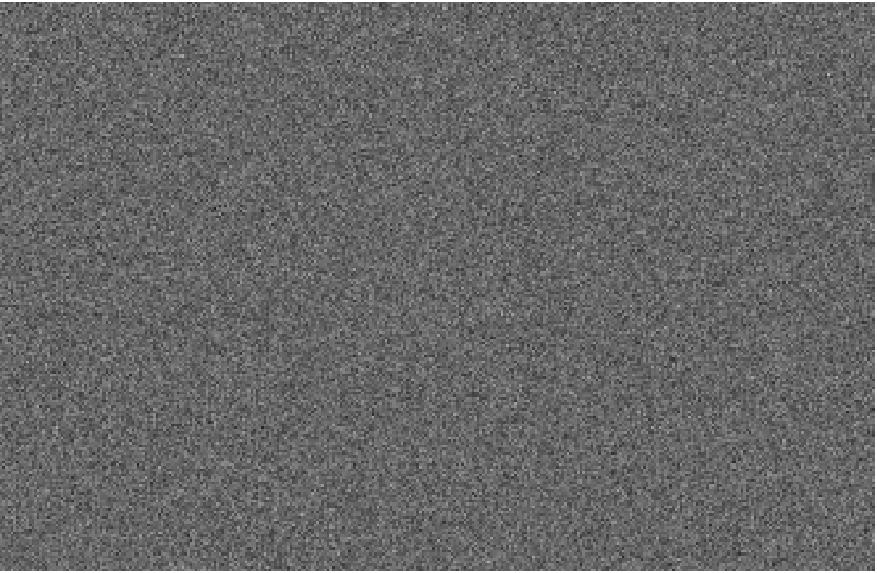}\\
        \vspace{0.02cm}
        \end{minipage}
        }
    \subfigure[IOS--TSGV]{
        \begin{minipage}[b]{0.17\linewidth}
        \centering
        \includegraphics[scale=0.2]{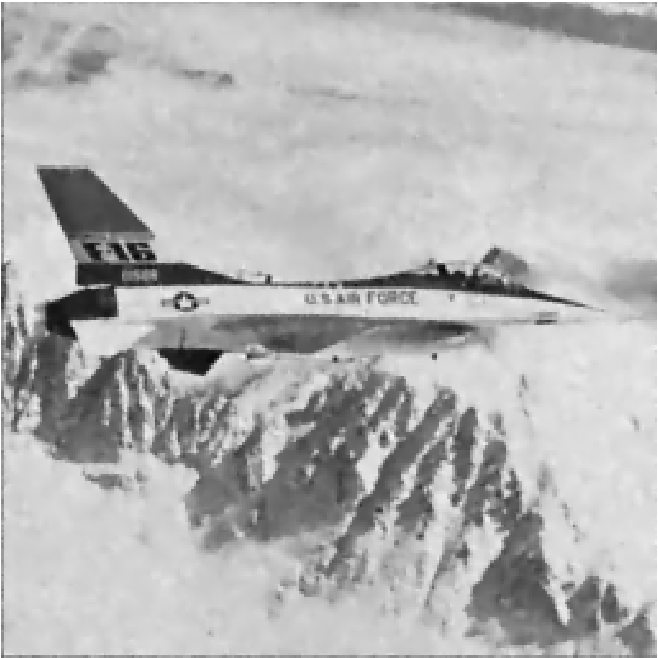}\\
        \vspace{0.02cm}
        \includegraphics[scale=0.2]{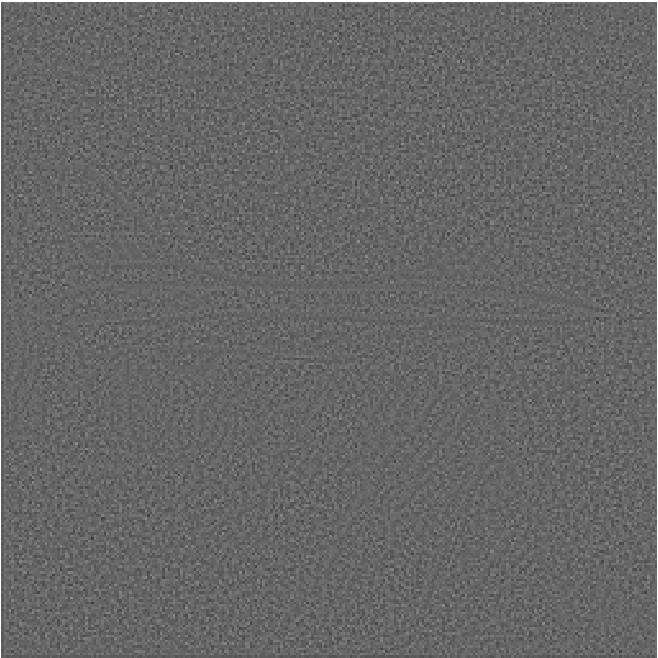}\\
        \vspace{0.02cm}
        \includegraphics[scale=0.15]{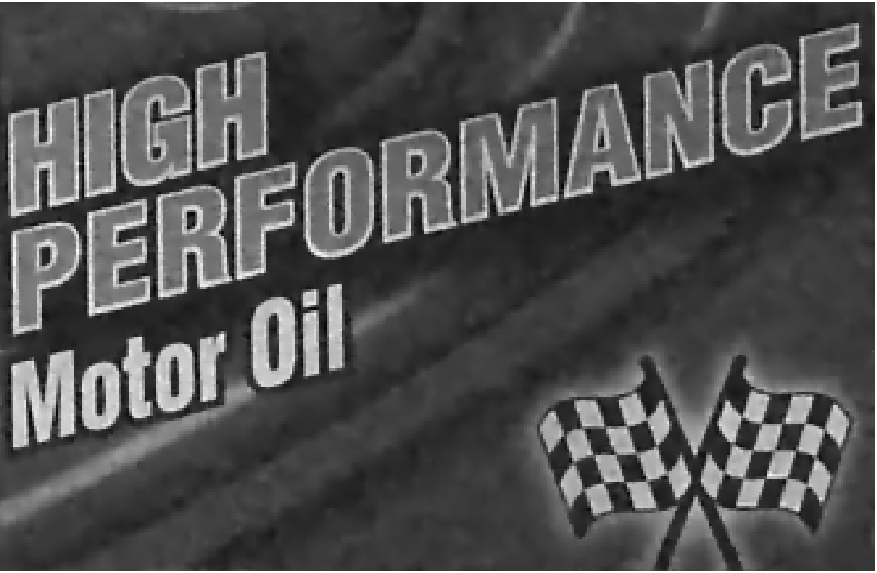}\\
        \vspace{0.02cm}
        \includegraphics[scale=0.15]{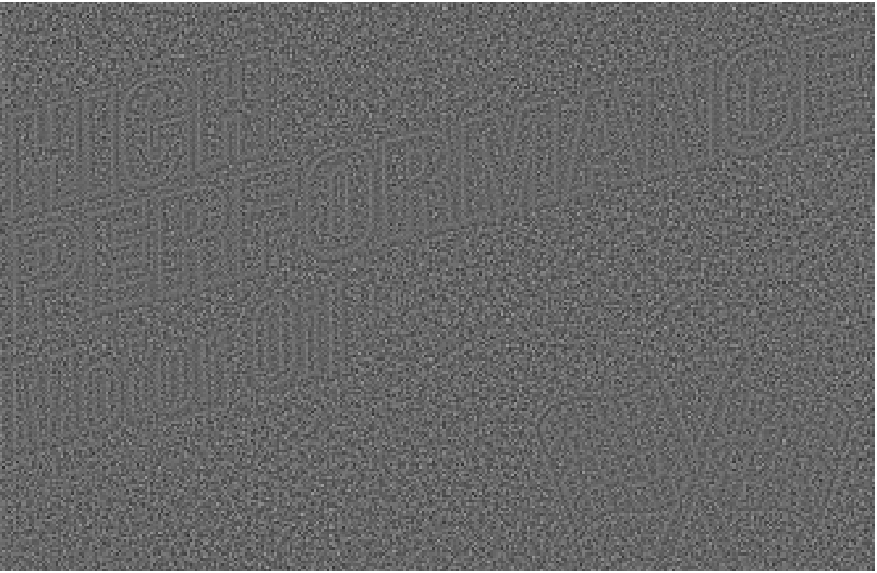}\\
        \vspace{0.02cm}
        \end{minipage}
        }
    \subfigure[HALM--EE]{
        \begin{minipage}[b]{0.17\linewidth}
        \centering
        \includegraphics[scale=0.2]{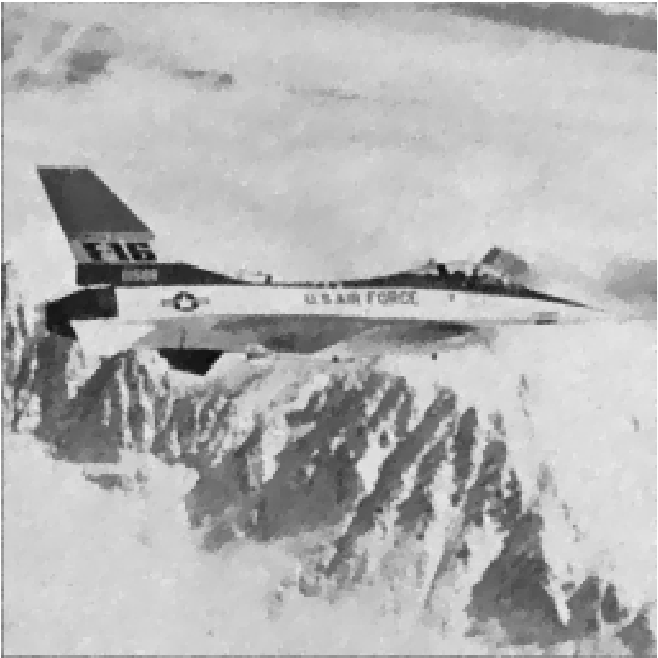}\\
        \vspace{0.02cm}
        \includegraphics[scale=0.2]{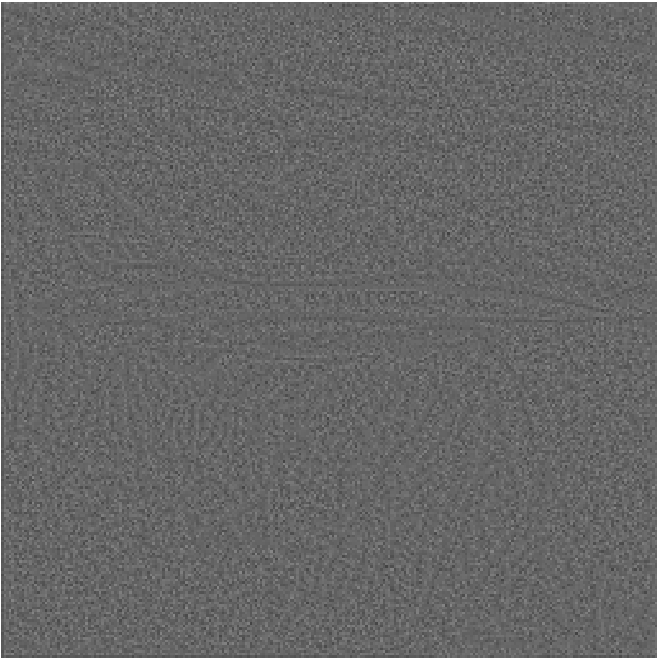}\\
        \vspace{0.02cm}
        \includegraphics[scale=0.15]{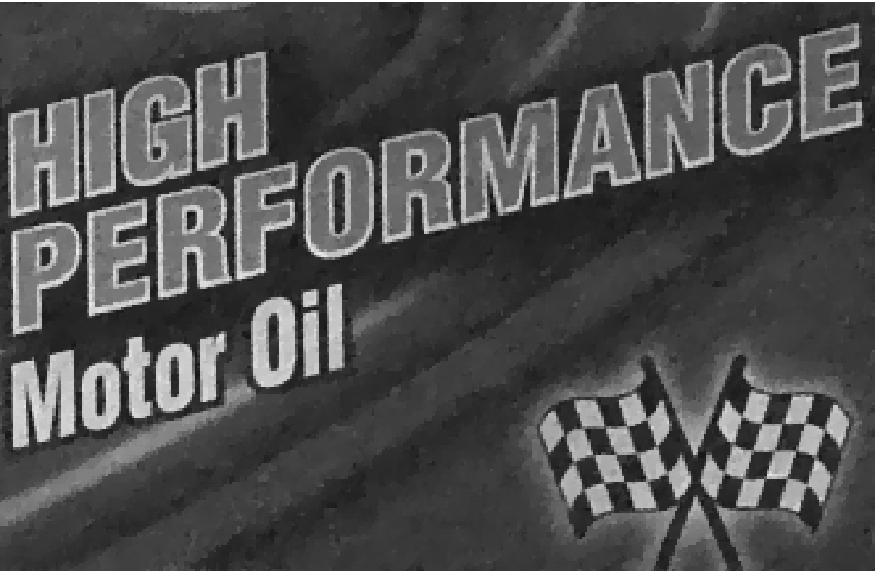}\\
        \vspace{0.02cm}
        \includegraphics[scale=0.15]{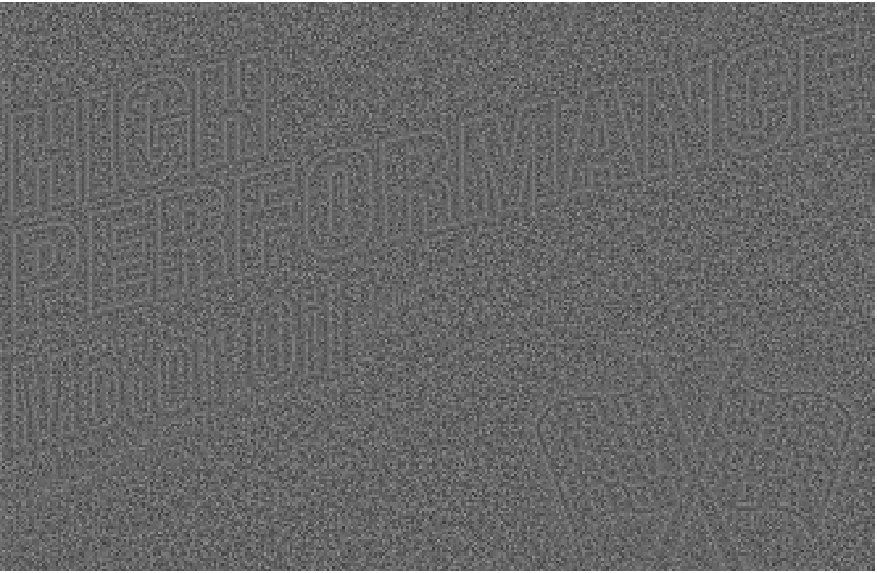}\\
        \vspace{0.02cm}
        \end{minipage}
        }
    \subfigure[MMAMM--$\psi_1$]{
        \begin{minipage}[b]{0.17\linewidth}
        \centering
        \includegraphics[scale=0.2]{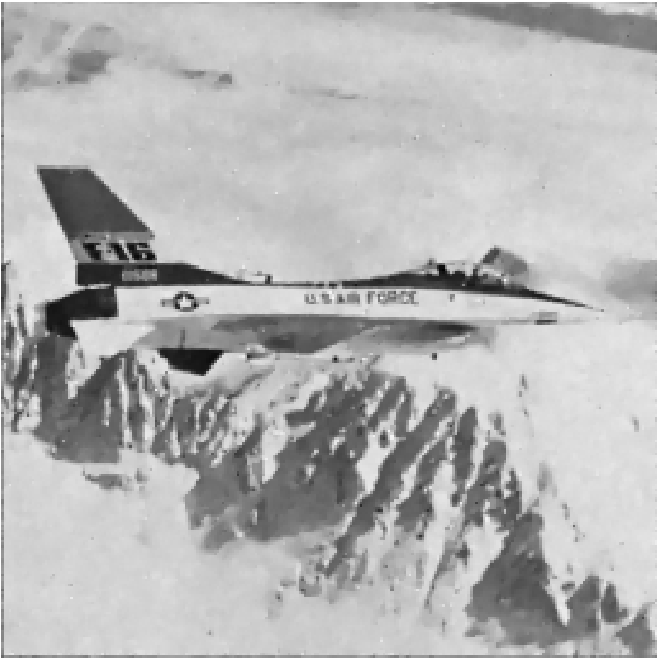}\\
        \vspace{0.02cm}
        \includegraphics[scale=0.2]{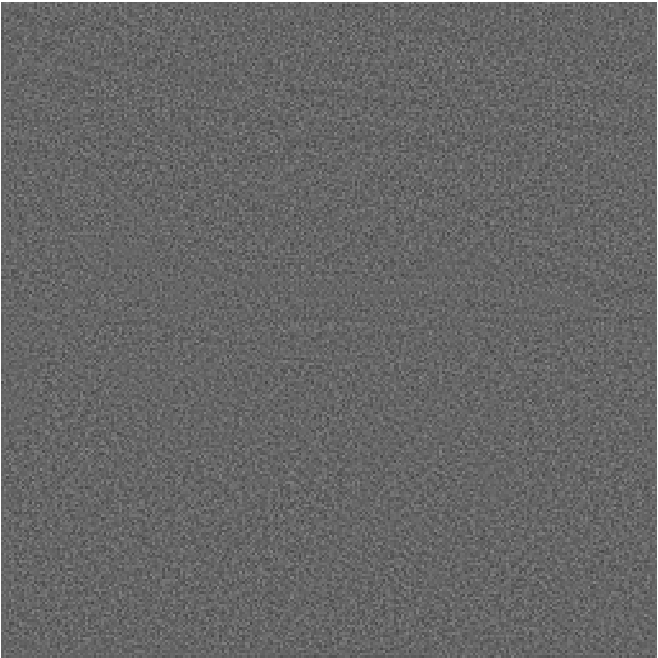}\\
        \vspace{0.02cm}
        \includegraphics[scale=0.15]{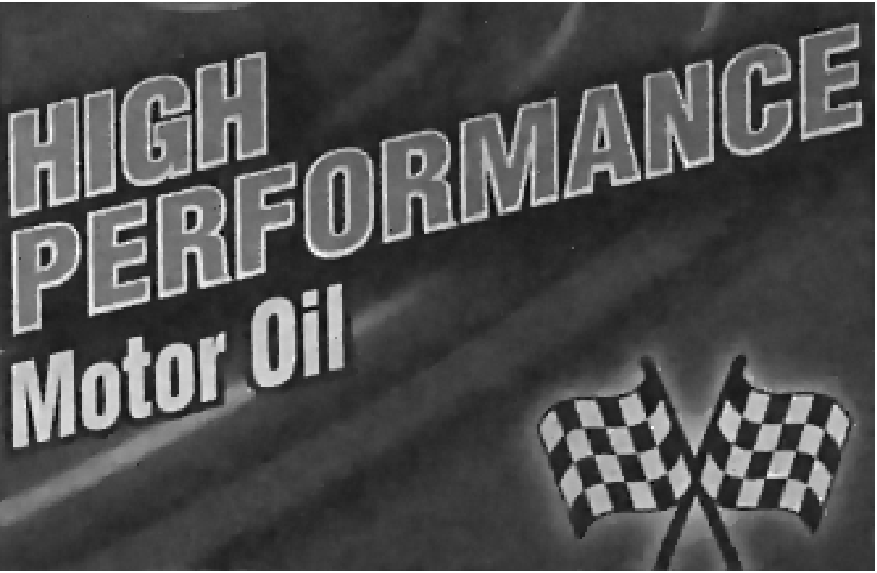}\\
        \vspace{0.02cm}
        \includegraphics[scale=0.15]{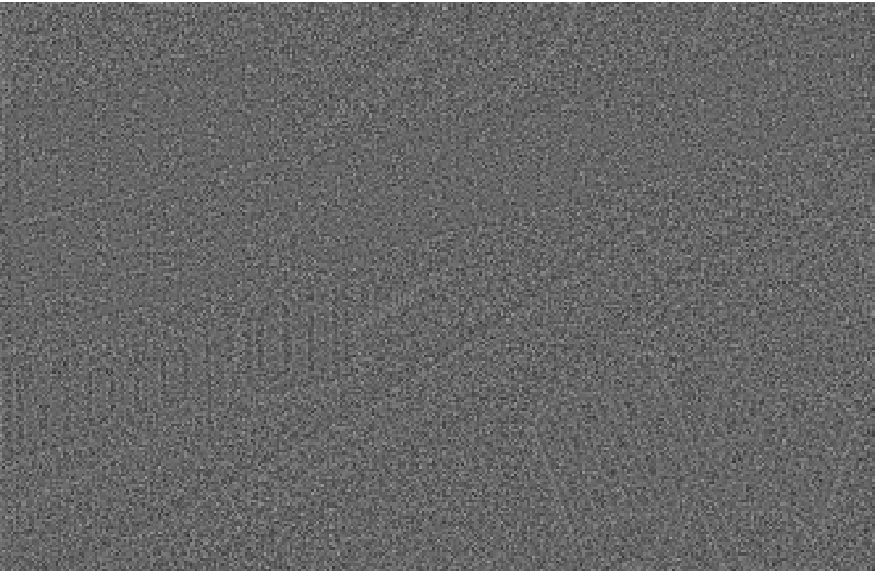}\\
        \vspace{0.02cm}
        \end{minipage}
        }
    \subfigure[MMAMM--$\psi_2$]{
        \begin{minipage}[b]{0.17\linewidth}
        \centering
        \includegraphics[scale=0.2]{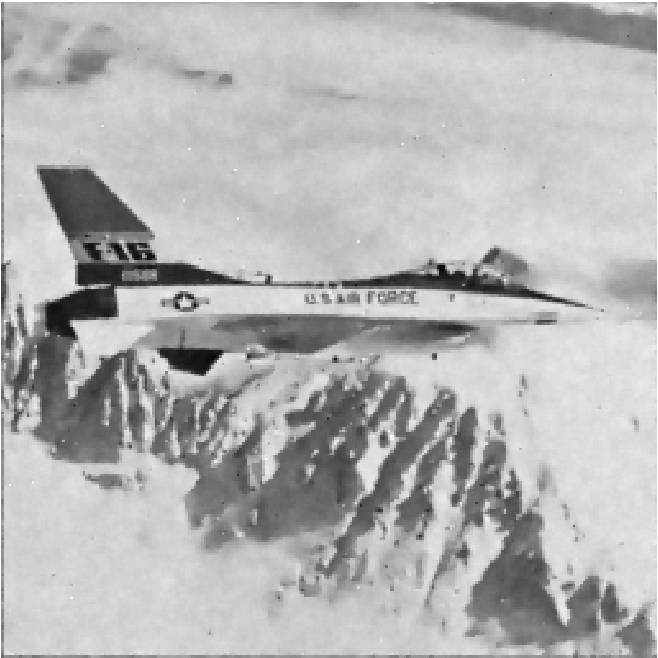}\\
        \vspace{0.02cm}
        \includegraphics[scale=0.2]{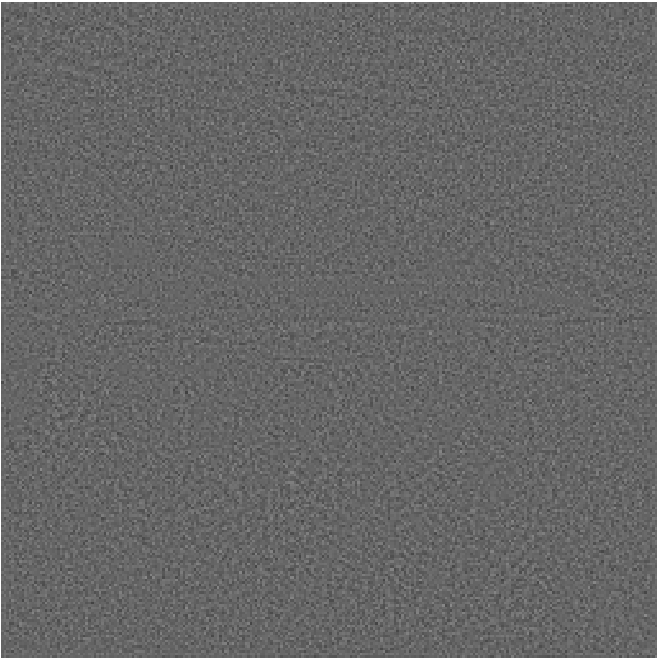}\\
        \vspace{0.02cm}
        \includegraphics[scale=0.15]{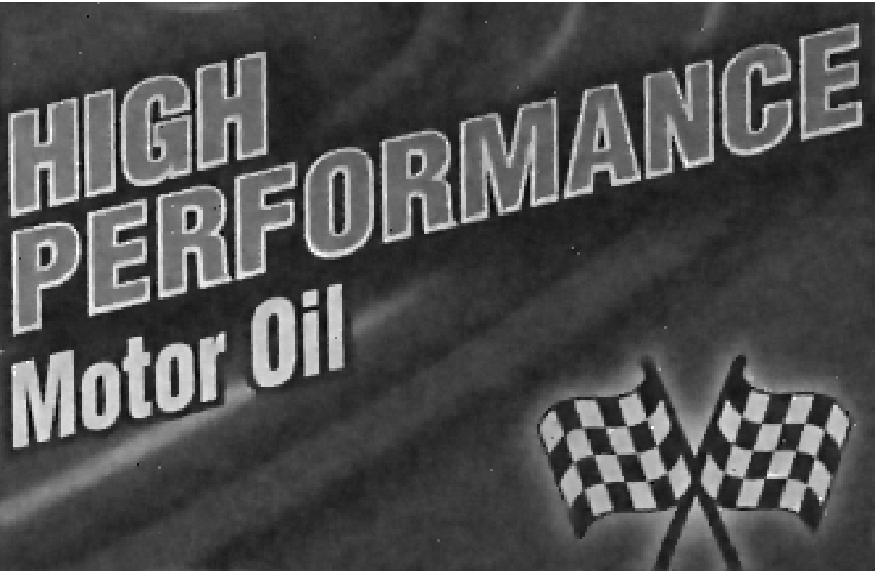}\\
        \vspace{0.02cm}
        \includegraphics[scale=0.15]{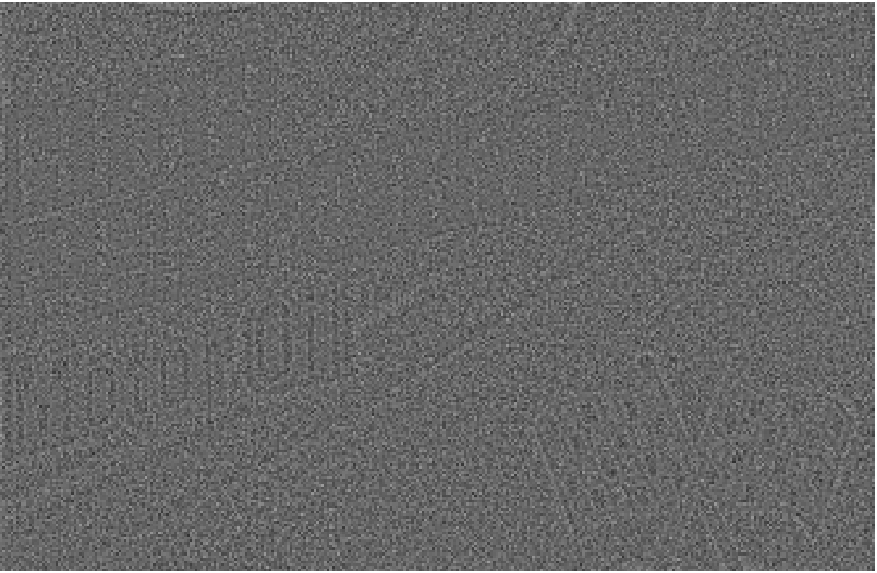}\\
        \vspace{0.02cm}
        \end{minipage}
        }
\caption{Restored images and residual images for Gaussian denoising on TestImg5 and TestImg11. The first and third rows show the restored images, while the second and fourth rows show the corresponding residual images. The residual images are computed by $\mathbf{f}-\mathbf{u}+0.4$.} \label{figure_comparison}
\end{figure}

\subsection{Confocal non-line-of-sight imaging}
NLOS imaging aims to recover the geometry or reflectivity of objects hidden from direct line of sight by analyzing time-resolved light-transport measurements. In an NLOS imaging system, a laser pulse emitted from the laser source first illuminates a point (referred to as the illumination point) on the visible wall. After reflecting off the wall, part of the light propagates toward the hidden object outside the direct line of sight. The hidden object then reflects the light to the visible wall, where the reflected signal is finally captured by the detector at another point (the detection point) on the visible wall. This process corresponds to a multi-bounce indirect light transport procedure and forms the physical foundation of NLOS imaging.

When the illumination points and the detection points are located at different positions on the visible wall, the imaging configuration is referred to as a non-confocal NLOS imaging system. In contrast, when the illumination points and the detection points coincide, the system is called a confocal NLOS imaging system. The confocal NLOS imaging system significantly simplifies the light transport model and reduces the computational complexity of the reconstruction problem. In this subsection, we apply the proposed method to the confocal NLOS imaging problem.

Let $(x,y,z)\in\Omega\subset\mathbb{R}^3$ denote the three-dimensional scene coordinates of the hidden volume. The visible wall is defined as the plane \(
\{(x,y,z)\in\mathbb{R}^3:z=0\}
\).
Let $(x', y',0)$ denote the illumination point and detection point on the visible wall. Under the confocal configuration, the continuous volumetric albedo model can be expressed as \cite{ding2024curvature,o2018confocal}
\[
\tau(x',y',t)
=
\iiint_{\Omega}
\frac{u(x,y,z)}{r^4}
\delta\!\left(2r-ct\right)\mathrm{d}x\mathrm{d}y\mathrm{d}z,
\]
where
\[
r=\sqrt{(x-x')^2+(y-y')^2+z^2},
\]
$u(x,y,z)$ denotes the volumetric albedo of the hidden scene, $c$ is the speed of light, and $\delta(\cdot)$ represents the Dirac delta function enforcing the time-of-flight constraint.

After discretization, the corresponding discrete formation model is given by \cite{o2018confocal}
\[
\boldsymbol{\tau}=\mathbf{A}\mathbf{u}
=
\mathbf{R}_t^{-1}\mathbf{H}\mathbf{R}_z\mathbf{u},
\]
where $\mathbf{u}$ and $\boldsymbol{\tau}$ denote the discretized hidden volume and transient measurements, respectively. The matrix $\mathbf{A}$ represents the light transport operator, $\mathbf{H}$ denotes a shift-invariant three-dimensional convolution operator associated with the confocal light propagation model, and $\mathbf{R}_t$ and $\mathbf{R}_z$ are transformation operators acting on the temporal and spatial dimensions, respectively. This formulation converts the NLOS reconstruction problem into a deconvolution problem in the transformed domain, enabling efficient numerical implementation using FFT-based algorithms \cite{o2018confocal}.

Compared with Gaussian denoising for grayscale images, the major modification of the proposed MMAMM algorithm when applied to the NLOS imaging problem, beyond the increase in dimensionality, lies in how the data-fidelity term is formulated. In the NLOS reconstruction task, the fidelity term is given by
\[
\frac{\lambda}{2}\|\mathbf{S}\mathbf{A}\mathbf{u}-\boldsymbol{\tau}\|_2^2,
\]
where $\mathbf{S}$ is the under-sampling operator. At iteration $k+1$, we employ a second-order surrogate approximation of the fidelity term:
\begin{align*}
\frac{\lambda}{2}\|\mathbf{S}\mathbf{A}\mathbf{u}-\boldsymbol{\tau}\|_2^2
&\approx
\frac{\lambda}{2}\|\mathbf{S}\mathbf{A}\mathbf{u}^k-\boldsymbol{\tau}\|_2^2
+
\lambda\left\langle
\mathbf{A}^\top \mathbf{S}^\top(\mathbf{S}\mathbf{A}\mathbf{u}^k-\boldsymbol{\tau}),
\mathbf{u}-\mathbf{u}^k
\right\rangle
+
\frac{L}{2}\|\mathbf{u}-\mathbf{u}^k\|_2^2\\
&= \frac{L}{2} \left\|\mathbf{u} - \left(\mathbf{u}^k-\frac{\lambda}{L}\mathbf{A}^\top \mathbf{S}^\top
\left(
\mathbf{S}\mathbf{A}\mathbf{u}^k-\boldsymbol{\tau}\right)\right)\right\|_2^2
+\Tilde{C},
\end{align*}
where $L=\lambda\|\mathbf{S}\mathbf{A}\|_2^2$, $\Tilde{C}$ is a constant independent of $\mathbf{u}$. Consequently, the resulting approximation with respect to $\mathbf{u}$ becomes a convex quadratic functional, which admits an efficient closed-form solution in the Fourier domain.

For NLOS imaging experiments, unless otherwise specified, we set the maximum number of iterations to $300$, initialize the algorithm with $\mathbf{u}^0=0$ in our methods, and use the same remaining parameters as in the image denoising experiments.

\begin{table}[tbp]
\centering
\footnotesize{
\caption{Quantitative results of LCT, CNLOS, and the proposed MMAMM, including the PSNR values and SSIM values.} \label{table_nlos_comparison}
\setlength{\tabcolsep}{1.5mm}{
\vspace{0.1cm}
\begin{tabular}{cccccccccc}
\toprule
                         &                 & \multicolumn{2}{c}{LCT} & \multicolumn{2}{c}{CNLOS} & \multicolumn{2}{c}{MMAMM--$\psi_1$} & \multicolumn{2}{c}{MMAMM--$\psi_2$} \\
                         \cmidrule(lr){3-4} \cmidrule(lr){5-6}
                         \cmidrule(lr){7-8} \cmidrule(lr){9-10}
                         & scanning points & PSNR       & SSIM       & PSNR            & SSIM    & PSNR                 & SSIM         & PSNR        & SSIM                  \\ \midrule
\multirow{4}{*}{bowling} & $64\times64$    & 14.69      & 0.2500     & 16.62           & 0.4703  & \textbf{16.82}       & 0.4743       & 16.81       & \textbf{0.4744}       \\
                         & $32\times32$    & 14.63      & 0.2470     & 16.54           & 0.4250  & \textbf{16.84}       & 0.4642       & 16.83       & \textbf{0.4647}       \\
                         & $16\times16$    & 14.30      & 0.2330     & 16.16           & 0.3277  & \textbf{16.57}       & 0.4225       & 16.48       & \textbf{0.4352}       \\
                         & $8\times8$      & 12.88      & 0.1798     & \textbf{15.68}  & 0.2505  & 15.26                & 0.3852       & 15.21       & \textbf{0.3879}       \\ \bottomrule
\end{tabular}}}
\end{table}
\begin{figure}[tbp]
\centering


\begin{minipage}[c]{0.02\linewidth}
\centering \rotatebox{90}{$64\times64$}
\end{minipage}
\begin{minipage}[c]{0.2\linewidth}
\centering
\includegraphics[scale=0.38]{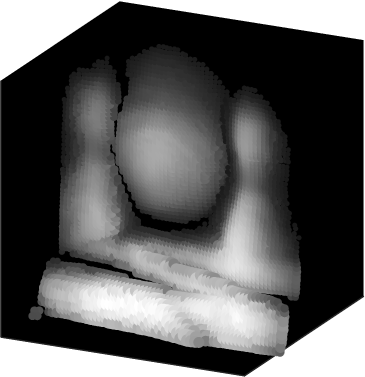}
\end{minipage}
\begin{minipage}[c]{0.2\linewidth}
\centering
\includegraphics[scale=0.38]{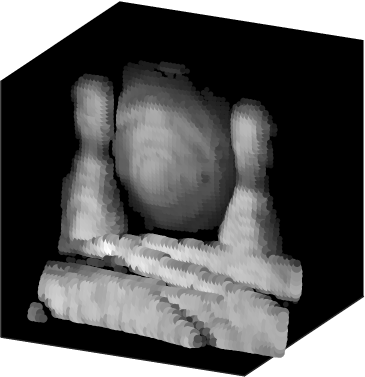}
\end{minipage}
\begin{minipage}[c]{0.2\linewidth}
\centering
\includegraphics[scale=0.38]{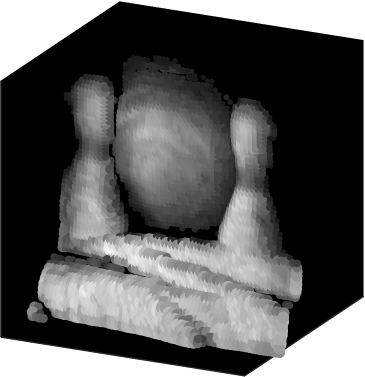}
\end{minipage}
\begin{minipage}[c]{0.2\linewidth}
\centering
\includegraphics[scale=0.38]{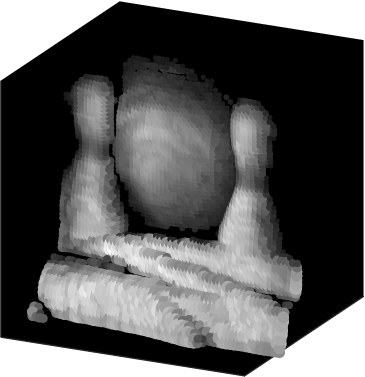}
\end{minipage}
\vspace{0.2cm}


\begin{minipage}[c]{0.02\linewidth}
\centering \rotatebox{90}{$32\times32$}
\end{minipage}
\begin{minipage}[c]{0.2\linewidth}
\centering
\includegraphics[scale=0.38]{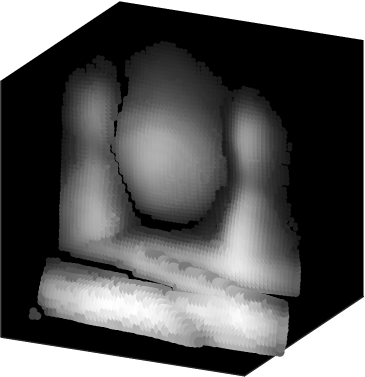}
\end{minipage}
\begin{minipage}[c]{0.2\linewidth}
\centering
\includegraphics[scale=0.38]{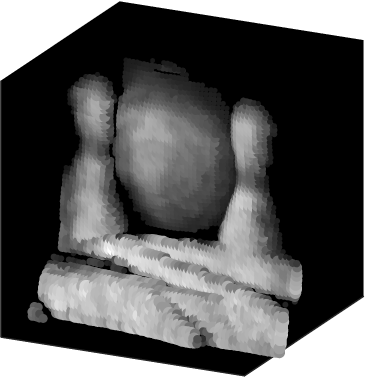}
\end{minipage}
\begin{minipage}[c]{0.2\linewidth}
\centering
\includegraphics[scale=0.38]{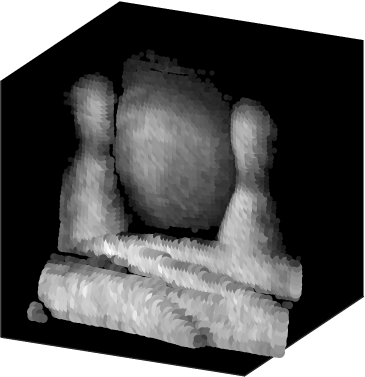}
\end{minipage}
\begin{minipage}[c]{0.2\linewidth}
\centering
\includegraphics[scale=0.38]{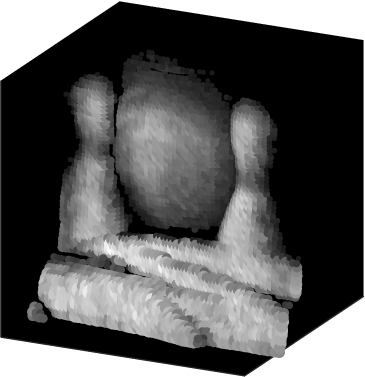}
\end{minipage}
\vspace{0.15cm}


\begin{minipage}[c]{0.02\linewidth}
\centering \rotatebox{90}{$16\times16$}
\end{minipage}
\begin{minipage}[c]{0.2\linewidth}
\centering
\includegraphics[scale=0.38]{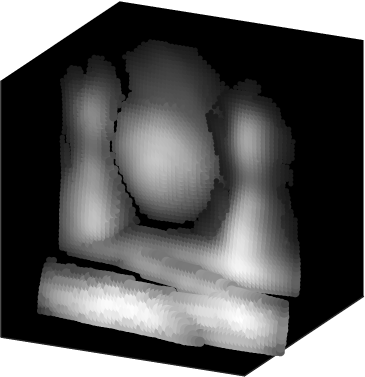}
\end{minipage}
\begin{minipage}[c]{0.2\linewidth}
\centering
\includegraphics[scale=0.38]{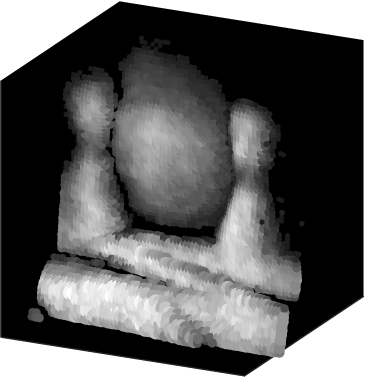}
\end{minipage}
\begin{minipage}[c]{0.2\linewidth}
\centering
\includegraphics[scale=0.38]{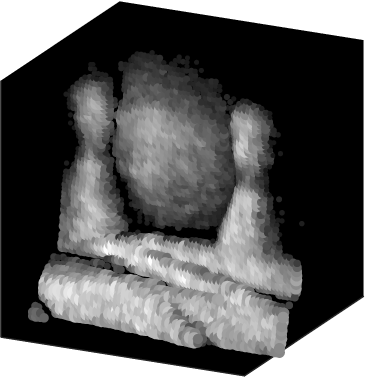}
\end{minipage}
\begin{minipage}[c]{0.2\linewidth}
\centering
\includegraphics[scale=0.38]{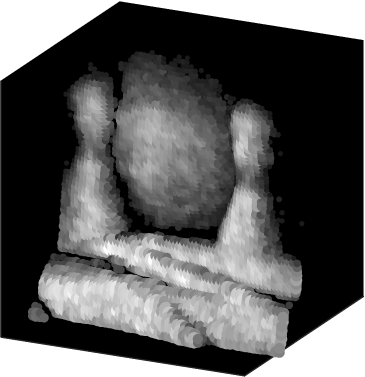}
\end{minipage}
\vspace{0.15cm}


\begin{minipage}[c]{0.02\linewidth}
\centering \rotatebox{90}{$8\times8$}
\end{minipage}
\begin{minipage}[c]{0.2\linewidth}
\centering
\includegraphics[scale=0.38]{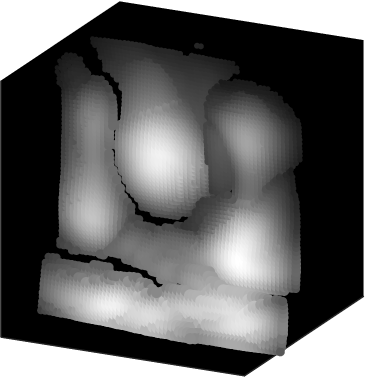}
\end{minipage}
\begin{minipage}[c]{0.2\linewidth}
\centering
\includegraphics[scale=0.38]{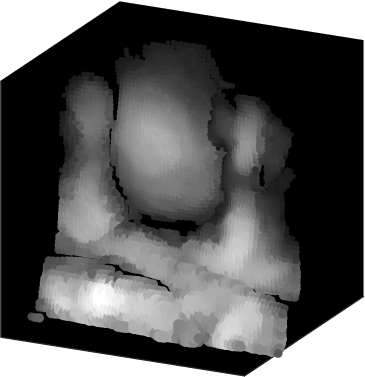}
\end{minipage}
\begin{minipage}[c]{0.2\linewidth}
\centering
\includegraphics[scale=0.38]{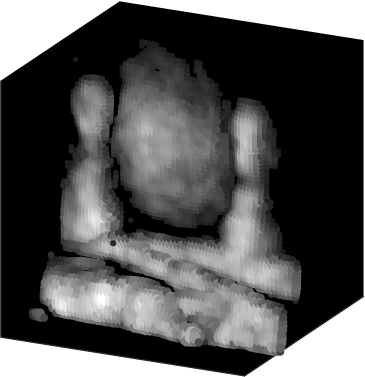}
\end{minipage}
\begin{minipage}[c]{0.2\linewidth}
\centering
\includegraphics[scale=0.38]{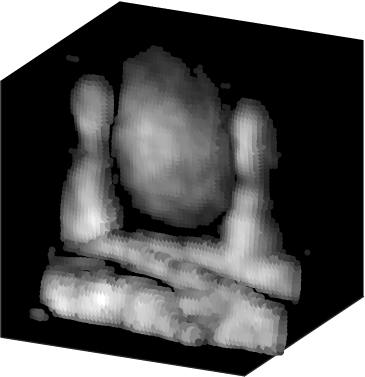}
\end{minipage}
\vspace{0.15cm}


\begin{minipage}[c]{0.02\linewidth}
\end{minipage}
\begin{minipage}[c]{0.22\linewidth}
\centering LCT
\end{minipage}
\begin{minipage}[c]{0.22\linewidth}
\centering CNLOS
\end{minipage}
\begin{minipage}[c]{0.22\linewidth}
\centering MMAMM--$\psi_1$
\end{minipage}
\begin{minipage}[c]{0.22\linewidth}
\centering MMAMM--$\psi_2$
\end{minipage}


\caption{Restored images for NLOS imaging using different reconstruction methods. Rows correspond to different numbers of scanning points ($64\times64$, $32\times32$, $16\times16$ and $8\times8$),
while columns correspond to different reconstruction methods.}
\label{figure_nlos_bowling}
\end{figure}

We then evaluate the performance of the proposed methods on NLOS imaging data by comparing them with several state-of-the-art reconstruction approaches, including the light-cone transform (LCT) \cite{o2018confocal} and the curvature regularization method for NLOS imaging (CNLOS) \cite{ding2024curvature}, with the maximum number of iterations set to 300. We first conduct experiments on the bowling scene \cite{ye2021compressed}, where $64\times64$ scanning points are used to cover a $1\times1\mathrm{m}^2$ square region on the visible wall with a temporal resolution of $256$ bins and a bin width of $32$ ps.
To evaluate the robustness of the proposed methods under different sampling conditions, we compare the reconstruction results using fully sampled measurements with $64\times64$ scanning points and under-sampled measurements with $32\times32$, $16\times16$, and $8\times8$ scanning points.

Table \ref{table_nlos_comparison} presents the quantitative comparisons among LCT, CNLOS, and the proposed MMAMM in terms of PSNR and SSIM. The proposed MMAMM generally achieves superior reconstruction performance compared with LCT and CNLOS. In particular, for the cases with $64\times64$, $32\times32$, and $16\times16$ scanning points, MMAMM--$\psi_1$ attains the highest PSNR values, while MMAMM--$\psi_2$ achieves the best SSIM values. In the extremely sparse sampling case with only $8\times8$ scanning points, CNLOS achieves the highest PSNR, whereas the proposed MMAMM yields significantly higher SSIM. Overall, the quantitative results demonstrate the robustness and effectiveness of the proposed MMAMM for NLOS imaging under different scanning points. Visual comparisons are provided in Figure \ref{figure_nlos_bowling}.

Next, we evaluate the Stanford Bunny data from the Zaragoza NLOS synthetic dataset\footnote{\url{https://graphics.unizar.es/nlos_dataset.html}}. The full set of $64\times64$ scanning points covers a $0.6\times0.6 \mathrm{m}^2$ square region on the visible wall. The captured measurements contain 512 temporal bins, where the photon propagation distance corresponding to each bin is $0.0025 \mathrm{m}$. We compare the reconstruction results using the fully sampled measurements with $64\times64$ scanning points and the under-sampled measurements with $32\times32$ scanning points. In addition, the maximum number of iterations of our methods is set to $100$.

The comparison results are displayed in Figure \ref{figure_nlos_bunny}. It can be observed that all methods recover the main structure of the Bunny under both $64\times64$ and $32\times32$ scanning points. As the number of scanning points decreases, the reconstruction quality of all methods deteriorates to some extent, resulting in blurred boundaries and a loss of fine details.


\begin{figure}[tbp]
\centering


\begin{minipage}[c]{0.02\linewidth}
\centering \rotatebox{90}{$64\times64$}
\end{minipage}
\begin{minipage}[c]{0.2\linewidth}
\centering
\includegraphics[scale=0.3]{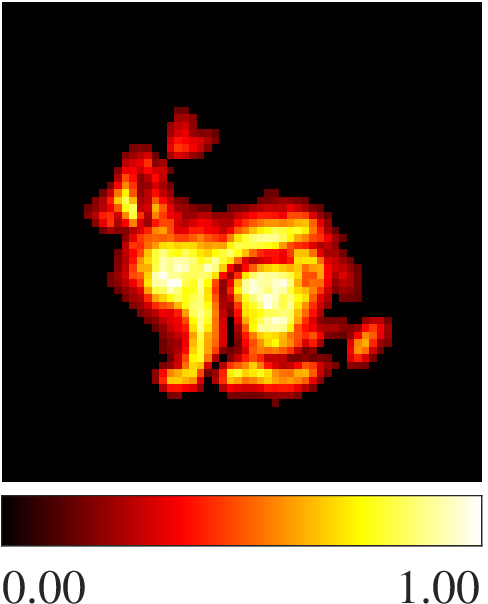}
\end{minipage}
\begin{minipage}[c]{0.2\linewidth}
\centering
\includegraphics[scale=0.3]{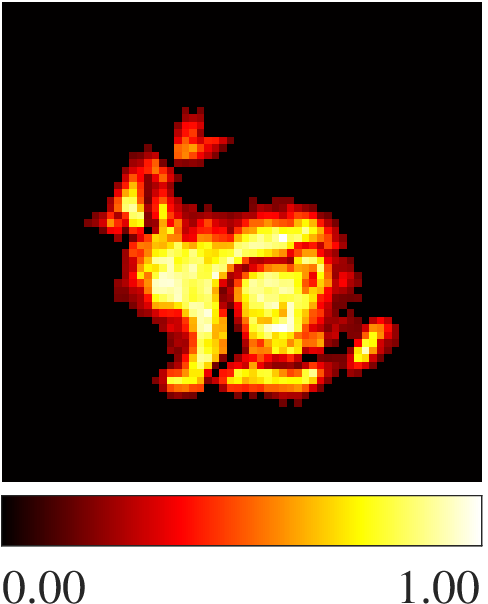}
\end{minipage}
\begin{minipage}[c]{0.2\linewidth}
\centering
\includegraphics[scale=0.3]{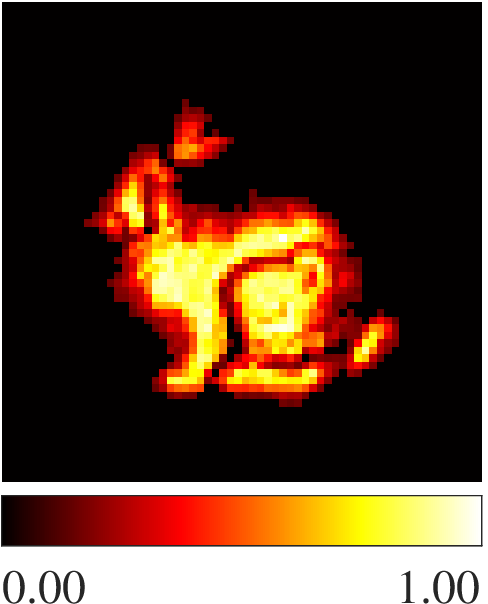}
\end{minipage}
\begin{minipage}[c]{0.2\linewidth}
\centering
\includegraphics[scale=0.3]{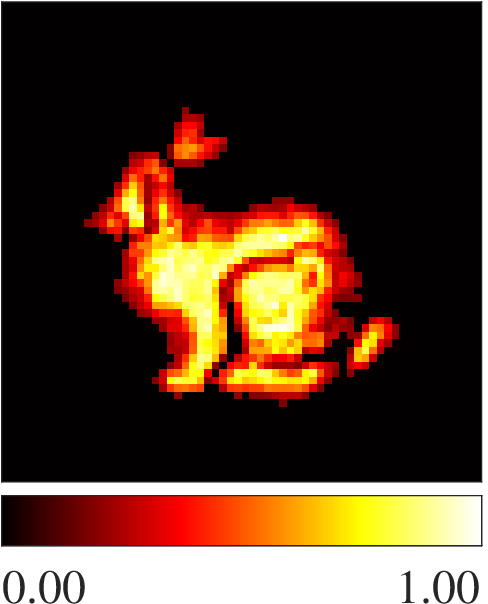}
\end{minipage}
\vspace{0.15cm}


\begin{minipage}[c]{0.02\linewidth}
\centering \rotatebox{90}{$32\times32$}
\end{minipage}
\begin{minipage}[c]{0.2\linewidth}
\centering
\includegraphics[scale=0.3]{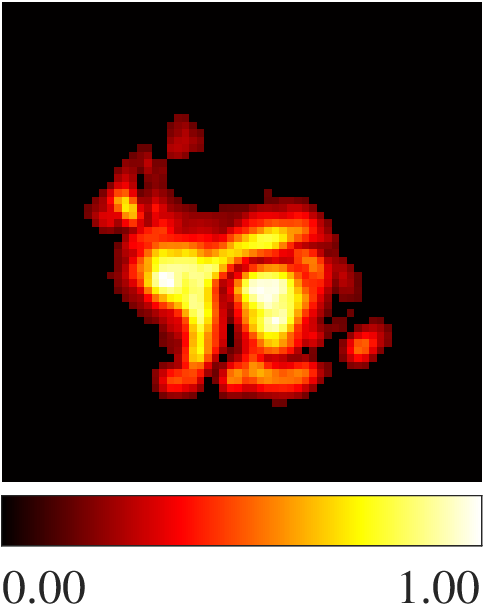}
\end{minipage}
\begin{minipage}[c]{0.2\linewidth}
\centering
\includegraphics[scale=0.3]{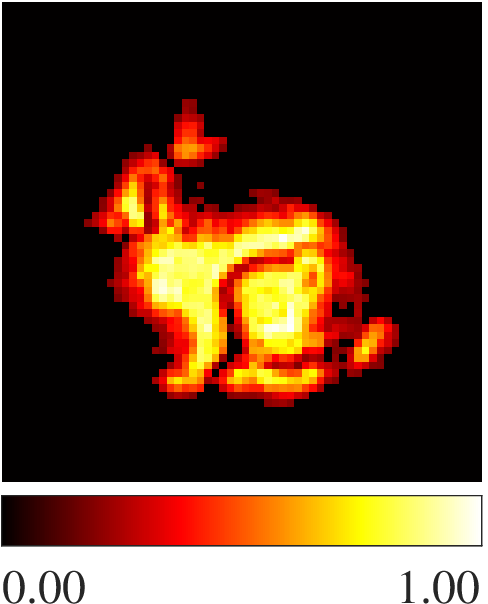}
\end{minipage}
\begin{minipage}[c]{0.2\linewidth}
\centering
\includegraphics[scale=0.3]{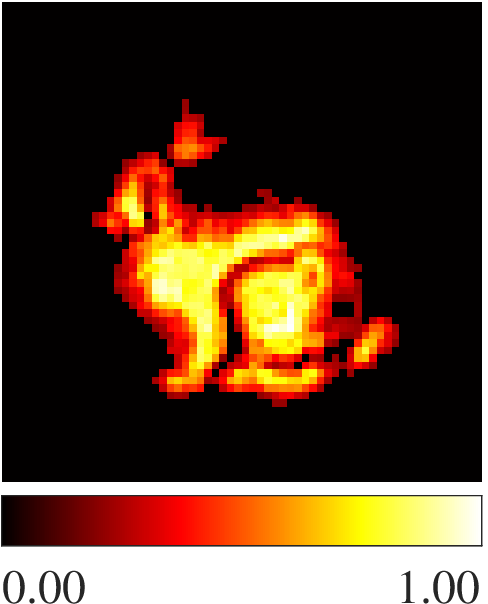}
\end{minipage}
\begin{minipage}[c]{0.2\linewidth}
\centering
\includegraphics[scale=0.3]{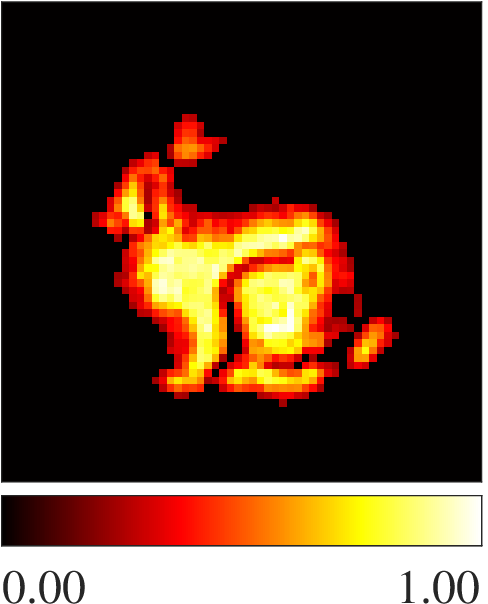}
\end{minipage}
\vspace{0.15cm}

\begin{minipage}[c]{0.02\linewidth}
\end{minipage}
\begin{minipage}[c]{0.22\linewidth}
\centering LCT
\end{minipage}
\begin{minipage}[c]{0.22\linewidth}
\centering CNLOS
\end{minipage}
\begin{minipage}[c]{0.22\linewidth}
\centering MMAMM--$\psi_1$
\end{minipage}
\begin{minipage}[c]{0.22\linewidth}
\centering MMAMM--$\psi_2$
\end{minipage}


\caption{Restored images for NLOS imaging using different reconstruction methods. Rows correspond to different numbers of scanning points ($64\times64$ and $32\times32$), while columns correspond to different reconstruction methods.}
\label{figure_nlos_bunny}
\end{figure}

%% file: sections/sec-conclusion.tex
\section{Conclusions} \label{sec:conclusion}

In this paper, we developed a novel bilinear decomposition method for a class of nonconvex and nonlinear TSGV variational models. The reformulation preserves the geometric structure of the original regularizer while making the numerical solution more tractable. We also provided a geometric interpretation of the constraints induced by different scaling functions.  The asymptotic analysis near large-gradient regions indicates that the cone constraint is better adapted to edges and corners. This theoretical observation is consistent with the numerical results.

For the resulting nonconvex constrained optimization problem, we proposed an algorithm based on alternating minimization and an MM strategy. The MM construction yields surrogate subproblems with improved tractability and ensures a monotone decrease of the energy. A dedicated numerical scheme was developed for the cone-constrained subproblems. The convergence analysis provides theoretical justification for the proposed scheme. Numerical experiments on image denoising and NLOS imaging demonstrate the robustness and flexibility of the method. Future work includes extending the proposed framework to other nonlinear geometric regularizers, developing more efficient solvers for three-dimensional imaging, and investigating adaptive choices of scaling functions for different image structures and inverse problems.